\newcommand{\algmargin}{\the\ALG@thistlm}
\newlength{\whilewidth}
\algnewcommand{\parState}[1]{\State%
  \parbox[t]{\dimexpr\linewidth-\algmargin}{\strut #1\strut}}
\def\thm@space@setup{\thm@preskip=2pt
        \thm@postskip=2pt \itshape}
\newtheoremstyle{newstyle}
{} 
{} 
{\mdseries} 
{} 
{\bfseries} 
{.} 
{ } 
{} 
\theoremstyle{definition}
\newtheorem{assumption}{Assumption}
\newtheorem{theorem}{Theorem}
\newtheorem{remark}{Remark}
\newtheorem{lemma}{Lemma}
\newcommand{\gr}{\nabla} 
\newcommand{\al}{\alpha} 
\newcommand{\parf}{\partial f} 
\newcommand{\tparf}{\widetilde{\partial} f}
\newcommand{\bx}{\mathbf{x}}
\newcommand{\bW}{\mathbf{W}}
\newcommand{\bI}{\mathbf{I}}
\newcommand{\bz}{\mathbf{z}}
\newcommand{\bu}{\mathbf{u}}
\newcommand{\bE}{\mathbb{E}}
\newcommand{\cF}{\mathcal{F}}
\newcommand{\cO}{\mathcal{O}}
\newcommand{\cD}{\mathcal{D}}
\newcommand{\bone}{\mathbf{1}}
\newcommand{\tildbx}{\widetilde{\mathbf{x}}}
\newcommand{\tNab}{\widetilde{\nabla}}
\definecolor{darkred}{RGB}{150,0,0}
\definecolor{darkgreen}{RGB}{0,150,0}
\definecolor{darkblue}{RGB}{0,0,150}
\title{Robust and Communication-Efficient \\ Collaborative Learning}
\author{Amirhossein Reisizadeh \\
  ECE Department \\
  University of California, Santa Barbara\\
  \texttt{reisizadeh@ucsb.edu}
  \And
  Hossein Taheri \\
  ECE Department\\
  University of California, Santa Barbara\\
  \texttt{hossein@ucsb.edu}\\
  \AND
  Aryan Mokhtari \\
  ECE Department\\
  The University of Texas at Austin \\
  \texttt{mokhtari@austin.utexas.edu}
     \And
  Hamed Hassani \\
  ESE Department \\
  University of Pennsylvania\\
  \texttt{hassani@seas.upenn.edu}\\
     \And
  Ramtin Pedarsani \\
  ECE Department\\
  University of California, Santa Barbara\\
  \texttt{ramtin@ece.ucsb.edu}\\
}
\begin{document}

\maketitle
\begin{abstract}

We consider a decentralized learning problem, where a set of computing nodes aim at solving a non-convex optimization problem collaboratively. It is well-known that decentralized optimization schemes face two major system bottlenecks: stragglers' delay and communication overhead. In this paper, we tackle these bottlenecks by proposing  a novel decentralized and gradient-based optimization algorithm named as \texttt{QuanTimed-DSGD}. Our algorithm stands on two main ideas: (i) we impose a \emph{deadline} on the local gradient computations of each node at each iteration of the algorithm, and (ii) the nodes exchange \emph{quantized} versions of their local models. The first idea robustifies to straggling nodes and the second alleviates communication efficiency. The key technical contribution of our work is to prove that with non-vanishing noises for quantization and stochastic gradients, the proposed method \emph{exactly} converges to the global optimal for convex loss functions, and finds a first-order stationary point in non-convex scenarios. Our numerical evaluations of the \texttt{QuanTimed-DSGD} on training benchmark datasets, MNIST and CIFAR-10, demonstrate speedups of up to $3 \times$ in run-time, compared  to state-of-the-art decentralized optimization methods.

\end{abstract}

\section{Introduction}\label{sec:intro}


Collaborative learning refers to the task of learning a common objective among multiple computing agents without any central node and by using on-device computation and local communication among neighboring agents. Such tasks have recently gained considerable attention in the context of machine learning and optimization as they are foundational to several computing paradigms such as scalability to larger datasets and systems, data locality, ownership and privacy. As such, collaborative learning naturally arises in various applications such as distributed deep learning \citep{lecun2015deep,dean2012large}, multi-agent robotics and path planning \citep{choi2010continuous,jha2016path}, distributed resource allocation in wireless networks \citep{ribeiro2010ergodic},
 to name a few. 


While collaborative learning has recently drawn significant attention due its decentralized implementation, it faces major challenges at the system level as well as algorithm design. The decentralized implementation of collaborative learning faces two major systems challenges: (i) significant slow-down due to straggling nodes, where a subset of nodes can be largely delayed in their local computation which slows down the wall-clock time convergence of the decentralized algorithm; (ii) large communication overhead due to the message passing algorithm as the dimension of the parameter vector increases, which can further slow down the algorithm's convergence time. Moreover, in the presence of these system bottlenecks, the efficacy of classical consensus optimization methods is not clear and needs to be revisited. 

In this work we consider the general data-parallel setting where the data is distributed across
different computing nodes, and develop decentralized optimization methods that do not rely on a central
coordinator but instead only require local computation and communication among neighboring nodes. 
As the main contribution of this paper, we propose a \textit{straggler-robust} and \textit{communication-efficient} algorithm for collaborative learning called 
\texttt{QuanTimed-DSGD}, which is a quantized and deadline-based decentralized stochastic gradient descent method.  We show that the proposed scheme provably improves upon on the convergence time of vanilla synchronous decentralized optimization methods. The key theoretical contribution of the paper is to develop the \emph{first} quantized decentralized non-convex optimization algorithm with provable and \emph{exact} convergence to a first-order optimal solution. 

There are two key ideas in our proposed algorithm. To provide robustness against stragglers, we impose a \emph{deadline} time $T_d$ for the computation of each node. In a synchronous implementation of the proposed algorithm, at every iteration all the nodes simultaneously start computing stochastic gradients by randomly picking data points from their local batches and evaluating the gradient function on the picked data point. By $T_d$, each node has computed a random number of stochastic gradients from which it aggregates and generates a stochastic gradient for its local objective. By doing so, each iteration takes a constant computation time as opposed to deadline-free methods in which each node has to wait for all their neighbours to complete their gradient computation tasks. 
To tackle the communication bottleneck in collaborative learning, we only allow the decentralized nodes to share with neighbours a \emph{quantized} version of their local models. Quantizing the exchanged models reduces the communication load which is critical for large and dense networks.

We analyze the convergence of the proposed \texttt{QuanTimed-DSGD} for strongly convex and non-convex loss functions and under standard assumptions for the network, quantizer and stochastic gradients. In the strongly convex case, we show that \texttt{QuanTimed-DSGD} \emph{exactly} finds the global optimal for \emph{every} node with a rate arbitrarily close to $\ccalO(1/\sqrt{T})$. 
In the non-convex setting,  \texttt{QuanTimed-DSGD} provably finds first-order optimal solutions as fast as $\ccalO(T^{-1/3})$. Moreover, the consensus error decays with the same rate which guarantees an exact convergence by choosing large enough $T$. Furthermore, we numerically evaluate \texttt{QuanTimed-DSGD} on benchmark datasets CIFAR-10 and MNIST, where it demonstrates speedups of up to $3 \times$ in the run-time compared to state-of-the-art baselines.

\noindent \textbf{Related Work.} 
Decentralized consensus optimization has been studied extensively. The most popular first-order choices for the convex setting are distributed gradient descent-type methods \citep{Nedic2009,Jakovetic2014-1,yuan2016convergence,qu2017accelerated}, augmented Lagrangian algorithms \citep{shi2015extra,shi2015proximal,mokhtari2016dsa}, distributed variants of the alternating direction method of multipliers (ADMM) \citep{Schizas2008-1,BoydEtalADMM11,Shi2014-ADMM,chang2015multi,mokhtari2016dqm}, dual averaging \citep{Duchi2012,cTsianosEtal12}, and several dual based strategies \citep{seaman2017optimal,scaman2018optimal,uribe2018dual}. Recently, there have been some works which study non-convex decentralized consensus optimization and establish convergence to a stationary point \citep{zeng2018nonconvex,Hong_Prox-PDA,hong2018gradient,sun2018distributed,scutari2017parallel,scutari2018distributed,jiang2017collaborative,lian2017can}.

The idea of improving communication-efficiency of distributed optimization procedures via message-compression schemes goes a few decades back \citep{tsitsiklis1987communication}, however, it has recently gained considerable attention due to the growing importance of distributed applications. In particular, efficient gradient-compression methods are provided in \citep{alistarh2017qsgd,seide20141,bernstein2018signsgd} and deployed in the distributed master-worker setting. In the decentralized setting, quantization methods were proposed in different convex optimization contexts with \emph{non-vanishing} errors \citep{yuksel2003quantization, rabbat2005quantized, Kashyap2006QuantizedC,el2016design,aysal2007distributed, nedic2008distributed}. The first \emph{exact} decentralized optimization method with quantized messages was given in \citep{reisizadeh2018quantized, zhang2018compressed}, and more recently, new techniques have been developed in this context for convex problems \citep{doan2018accelerating,koloskova2019decentralized, berahas2019nested,lee2018distributed,lee2018finite}.

The straggler problem has been widely observed in distributed computing clusters \citep{dean2013tail,ananthanarayanan2010reining}. A common approach to mitigate stragglers is to replicate the computing task of the slow nodes to other computing nodes \citep{ananthanarayanan2013effective,wang2014efficient}, but this is clearly not feasible in collaborative learning. Another line of work proposed using coding theoretic ideas for speeding up distributed machine learning \citep{lee2018speeding,tandon2016gradient,yu2017polynomial,reisizadeh2019coded,reisizadeh2019codedreduce}, but they work mostly for master-worker setup and particular computation types such as linear computations or full gradient aggregation. The closest work to ours is \citep{ferdinand2018anytime} that considers decentralized optimization for convex functions with deadline for local computations without considering communication bottlenecks and quantization as well as non-convex functions. Another line of work proposes asynchronous decentralized SGD, where the workers update their models based on the last iterates received by their neighbors \citep{HogWild!, lian2017asynchronous,lan2018asynchronous, peng2016convergence, wu2017decentralized,dutta2018slow}. While asynchronous methods are inherently robust to stragglers, they can suffer from slow convergence due to using stale models.

\section{Problem Setup}\label{sec:setup}

In this paper, we focus on a stochastic learning model in which we aim to solve the problem
\begin{equation}\label{population_risk}
  \min_{\bbx}  L(\bx) := \min_{\bbx} \mathbb{E}_{\theta \sim \ccalP} [\ell(\bx,\theta)],
\end{equation}
where $\ell:\mathbb{R}^p\times\mathbb{R}^q  \to\mathbb{R}$ is a stochastic loss function, $\bbx\in\mathbb{R}^p$ is our optimization variable, and  $\theta\in \mathbb{R}^q$ is a random variable with probability distribution $\ccalP$ and $L:\mathbb{R}^p \to\mathbb{R}$ is the expected loss function also called population risk. We assume that the underlying distribution $\ccalP$ of the random variable $\theta$ is unknown and we have access only to $N=mn$ realizations of it. 
Our goal is to solve the loss associated with  $N=mn$ realizations of the random variable $\theta$, which is also known as empirical risk minimization. To be more precise, we aim to solve the empirical risk minimization (ERM) problem
\begin{equation}\label{eq:ERM}
  \min_{\bbx}  L_N(\bbx) := \min_{\bbx} \frac{1}{N}\sum_{k=1}^N \ell(\bx,\theta_k),
\end{equation}
where $L_N$ is the empirical loss associated with the sample of random variables $\ccalD=\{\theta_1,\dots,\theta_N\}$. 

\textbf{Collaborative Learning Perspective}. Our goal is to solve the ERM problem in \eqref{eq:ERM} in a decentralized manner over $n$ nodes. This setting arises in a plethora of applications where either the total number of samples $N$ is massive and data cannot be stored or processed over a single node or the samples  are available in parts at different nodes and, due to privacy or communication constraints, exchanging raw data points is not possible among the nodes. Hence, we assume that each node $i$ has access to $m$ samples and its local objective is 
\begin{align}\label{local_risk}
 f_i(\bx)= \frac{1}{m} \sum_{j=1}^m \ell (\bx,\theta_i^j)   ,
\end{align}
where $\cD_{i} = \{\theta_i^1,\cdots,\theta_i^m\}$ is the set of samples available at node $i$. Nodes aim to collaboratively minimize the average of all local objective functions, denoted by $f$, which is given by 
\begin{align}\label{global_cost}
   \min_\bbx f(\bx)= \min_\bbx\frac{1}{n}\sum_{i=1}^n f_i(\bx)
    =\min_\bbx\frac{1}{mn}\sum_{i=1}^n  \sum_{j=1}^m \ell (\bx,\theta_i^j).
\end{align}
Indeed, the objective functions $f$ and $L_N$ are equivalent if $\ccalD \coloneqq \ccalD_1 \cup \cdots \cup \ccalD_n$. Therefore, by minimizing the global objective function $f$ we also obtain the solution of the ERM problem in \eqref{eq:ERM}. 

We can rewrite the optimization problem in \eqref{global_cost} as a classical decentralized optimization problem as follows. Let $\bbx_i$ be the decision variable of node $i$. Then, \eqref{global_cost} is equivalent to 
\begin{equation}\label{cons_prob}
    \min_{\bbx_1,\dots,\bbx_n}  \frac{1}{n}\sum_{i=1}^n f_i(\bx_i), \qquad \text{subject to}\quad  \bbx_1=\dots=\bbx_n,
\end{equation}
as the objective function value of \eqref{global_cost} and \eqref{cons_prob} are the same when the iterates of all nodes are the same and we have \textit{consensus}. The challenge in distributed learning is to solve the global loss only by exchanging information with neighboring nodes and ensuring that nodes' variables stay close to each other.
We consider a network of computing nodes characterized by an undirected connected graph $\mathcal{G}=(\mathcal{V},\mathcal{E})$ with nodes $\mathcal{V}=[n]=\{1,\cdots,n\}$ and edges $\mathcal{E} \subseteq \mathcal{V} \times \mathcal{V}$, and each node $i$ is allowed to exchange information only with its neighboring nodes in the graph $\mathcal{G}$, which we denote by $\ccalN_i$.

In a stochastic optimization setting, where the true objective is defined as an expectation, there is a limit to the accuracy with which we can minimize $L(\bx)$ given only $N=nm$ samples, even if we have access to the optimal solution of the empirical risk $L_N$. In particular, it has been shown that when the loss function $\ell$ is convex,  the difference between the population risk $L$ and the empirical risk $L_N$ corresponding to $N=mn$ samples with high probability is uniformly bounded by $\sup_\bbx |L(\bbx)-L_N(\bbx)| \leq \mathcal{O}(1/\sqrt{N})=\mathcal{O}(1/\sqrt{nm})$; see \citep{bottou2008tradeoffs}.
Thus, without collaboration, each node can minimize its local cost $f_{i}$ to reach an estimate for the optimal solution with an error of $\mathcal{O}(1/\sqrt{m})$. By minimizing the aggregate loss collaboratively, nodes reach an approximate solution of the expected risk problem with a smaller error of  $\mathcal{O}(1/\sqrt{nm})$. 
Based on this formulation, our goal in the convex setting is to find a point $\bbx_i$ for each node $i$ that attains the statistical accuracy, i.e., $\E{L_N(\bbx_i)-L_N(\hbx^*)}\leq \mathcal{O}(1/\sqrt{mn})$, which further  implies $\E{L(\bbx_i)-L(\bbx^*)}\leq \mathcal{O}(1/\sqrt{mn})$. 



For a non-convex loss function $\ell$, however, $L_N$ is also non-convex and solving the problem in \eqref{global_cost} is hard, in general. Therefore, we only focus on finding a point that satisfies the first-order optimality condition for \eqref{global_cost} up to some accuracy  $\rho$, i.e., finding a point $\tbx$ such that $\|\nabla L_N(\tbx)\|=\|\nabla f(\tbx)\|\leq \rho$.  Under the assumption that the gradient of loss is sub-Gaussian,
it has been shown that with high probability the gap between the gradients of expected risk and empirical risk is bounded by $\sup_{\bbx} \|\nabla L(\bbx)-\nabla L_N(\bbx)\|_2 \leq \mathcal{O}( {1}/{\sqrt{nm}})$; see \citep{mei2018landscape}.
As in the convex setting, by solving the aggregate loss instead of local loss, each node finds a better approximate for a first-order stationary point of the expected risk $L$. 
Therefore, our goal in the non-convex setting is to find a point that satisfies $\|\nabla L_N(\bbx)\|\leq \mathcal{O}(1/\sqrt{mn})$ which also implies $\|\nabla L(\bbx)\|\leq \mathcal{O}(1/\sqrt{mn})$.


\section{Proposed \texttt{QuanTimed-DSGD} Method}\label{sec:QT-DSGD}

In this section, we present our proposed \texttt{QuanTimed-DSGD} algorithm that takes into account robustness to stragglers and communication efficiency in decentralized optimization. To ensure robustness to stragglers' delay, we introduce a \textit{deadline-based} protocol for updating the iterates in which nodes compute their local gradients estimation only for a specific amount time and then use their gradient estimates to update their iterates. This is in contrast to the mini-batch setting, in which nodes have to wait for the slowest machine to finish its local gradient computation. 
To reduce the communication load, we assume that nodes only exchange a quantized version of their local iterates. However, using quantized messages induces extra noise in the decision making process which makes the analysis of our algorithm more challenging. A detailed description of the proposed algorithm is as follows.

\noindent\textbf{Deadline-Based Gradient Computation.} Consider the current model $\bx_{i,t}$ available at node $i$ at iteration $t$. Recall the definition of the local objective function $f_i$ at node $i$ defined in \eqref{local_risk}. The cost of computing the local gradient $\nabla f_i$ scales linearly by the number of samples $m$ assigned to the $i$-th node. A common solution to reduce the computation cost at each node for the case that $m$ is large is using a mini-batch approximate of the gradient, i.e., each node $i$ picks a subset of its local samples $\ccalB_{i,t}\subseteq \ccalD_i$ to compute the stochastic gradient $\frac{1}{|\ccalB_{i,t}|}\sum_{\theta\in\ccalB_{i,t}} \! \nabla \ell(\bbx_{i,t},\theta)$. A major challenge for this procedure is the presence of stragglers in the network: 
given mini-batch size $b$, \emph{all} nodes have to compute the average of exactly $b$ stochastic gradients. Thus, all the nodes have to wait for the \emph{slowest} machine to finish its computation and exchange its new model with the neighbors.

To resolve this issue, we propose a deadline-based approach in which we set a fixed deadline $T_d$ for the time that each node can spend computing its local stochastic gradient estimate. Once the deadline is reached, nodes find their gradient estimate using whatever computation (mini-batch size) they could perform. 
Thus, with this deadline-based procedure, nodes do not need to wait for the slowest machine to update their iterates. However, their mini-batch size and consequently the noise of their gradient approximation will be different.  To be more specific, let $\ccalS_{i,t}\subseteq \ccalD_i$ denote the set of random samples chosen at time $t$ by node $i$. Define $\widetilde{\gr} f_i(\bx_{i,t})$ as the stochastic gradient of node $i$ at time $t$ as
\begin{equation}\label{eq:stoch-gr}
    \widetilde{\gr} f_i(\bx_{i,t}) 
    =
    \frac{1}{|\ccalS_{i,t}|} \sum_{\theta\in\ccalS_{i,t}} \gr \ell ( \bx_{i,t}; \theta),
\end{equation}
for $1 \!\leq\! |\ccalS_{i,t}| \!\leq\! m$. If there are not any gradients computed by $T_d$, i.e., $|\ccalS_{i,t}| = 0$, we set $\widetilde{\gr} f_i(\bx_{i,t})\!=\!0$.

\noindent\textbf{Computation Model.} To illustrate the advantage of our deadline-based scheme over the fixed mini-batch scheme, we formally state the model that we use for the processing time of nodes in the network. We remark that our algorithms are oblivious to the choice of the computation model which is merely used for analysis. We define the processing speed of each machine  as the number of stochastic gradients  $\nabla \ell(\bbx,\theta)$ that it computes per second. We assume that the processing speed of each machine $i$ and iteration $t$ is a random variable $V_{i,t}$, and $V_{i,t}$'s are i.i.d. with probability distribution $F_V(v)$. We further assume that the domain of the random variable $V$ is bounded and its realizations are in $[\underbar{$v$},\bar{v}]$. 
If $V_{i,t}$ is the number of stochastic gradient which can be computed per second, the size of mini-batch $\ccalS_{i,t}$ is a random variable given by $|\ccalS_{i,t}|=V_{i,t}T_d$.

In the fixed mini-batch scheme and for any iteration $t$, all the nodes have to wait for the machine with the slowest processing time before updating their iterates, and thus the overall computation time will be $b/V_{\min}$ where $V_{\min}$ is defined as $V_{\min}=\min \{V_{1,t},\dots,V_{n,t}\}$. In our deadline-based scheme there is a fixed deadline $T_d$ which limits the computation time of the nodes, and is chosen 
such that $T_d=\E{b/V}=b\E{1/V}$, while the mini-batch scheme requires an expected time of $\E{b/V_{\min}}=b\E{1/V_{\min}}$. The gap between $\E{1/V}$ and $\E{1/V_{\min}}$ depends on the distribution of $V$, and can be unbounded in general growing with $n$. 




\noindent\textbf{Quantized Message-Passing.} 
To reduce the communication overhead of exchanging variables between nodes, we use quantization schemes that significantly reduces the required number of bits.
More precisely, instead of sending $\bx_{i,t}$, the $i$-th node sends $\bz_{i,t}=Q(\bx_{i,t})$ which is a quantized version of its local variable $\bx_{i,t}$ to its neighbors $j\in \ccalN_i$. As an example, consider the low precision quantizer specified by scale factor $\eta$ and $s$ bits with the representable range~$\{- \eta \cdot 2^{s-1}, \cdots, -\eta,0,\eta, \cdots ,\eta \cdot (2^s-1)\}$. For any $k\eta\le x <(k+1)\eta$ , the quantizer outputs
\begin{equation}\label{eq:quantizerexample}
    Q_{(\eta,b)}(x) 
    =
    \left\{
	\begin{array}{ll}
		k\eta  & \mbox{w.p.  } 1-(x-k\eta)/\eta, \\
	    (k+1)\eta & \mbox{w.p.  }  (x-k\eta)/\eta.
	\end{array}
    \right.
\end{equation}
\begin{algorithm}[t!]
\caption{\texttt{QuanTimed-DSGD} at node $i$}\label{alg:QDC}
\begin{algorithmic}[1]
\Require Weights $\{w_{ij}\}_{j=1}^n$, total iterations $T$, deadline $T_d$
\State Set $\bx_{i,0}=0$ and compute $\bz_{i,0}=Q(\bx_{i,0})$ 
\For{$t=0,\cdots,T-1$}
	\State Send $\bz_{i,t}=Q(\bx_{i,t})$ to $j\in\mathcal{N}_i$ and receive $\bz_{j,t}$
    \parState{Pick and evaluate stochastic gradients $\{ \gr \ell( \bx_{i,t}; \theta) : \theta\in \ccalS_{i,t} \}$ till reaching the deadline $T_d$ and generate $\widetilde{\gr} f_i(\bx_{i,t})$ according to \eqref{eq:stoch-gr}}
    \State Update  $\bx_{i,t+1}$ as follows:
       $ \bx_{i,t+1} =  (1 - \eps + \eps w_{ii}) \bx_{i,t} + \eps \sum_{ j \in  \mathcal{N}_i } w_{ij} \bz_{j,t} - \al \eps \widetilde{\gr} f_i(\bx_{i,t})$
\EndFor
\end{algorithmic}
\end{algorithm}
\textbf{Algorithm Update.} Once the local variables are exchanged between neighboring nodes, each node~$i$ uses its local stochastic gradient $\widetilde{\gr} f_i(\bx_{i,t})$, its local decision variable $\bx_{i,t}$, and the information received from its neighbors  $\{ \bz_{j,t}=Q(\bx_{j,t}); j \in \mathcal{N}_i \}$ to update its local decision variable. Before formally stating the update of \texttt{QuanTimed-DSGD}, let us define $w_{ij}$ as the weight that node $i$ assigns to the information that it receive from node $j$. If $i$ and $j$ are not neighbors $w_{ij}=0$. These weights are considered for averaging over the local decision variable $x_{i,t}$ and the quantized variables $ \bz_{j,t}$ received from neighbors to enforce consensus among neighboring nodes. Specifically, at time $t$, node~$i$ updates its decision variable according to the update
\begin{align}\label{update_of_the_alg}
    \bx_{i,t+1} =  (1 - \eps + \eps w_{ii}) \bx_{i,t} + \eps  \sum_{ j \in  \mathcal{N}_i } w_{ij} \bz_{j,t} - \al \eps \widetilde{\gr} f_i(\bx_{i,t}),
\end{align}
where $\al$ and $\eps$ are positive scalars that behave as stepsize. Note that the update in \eqref{update_of_the_alg} shows that the updated iterate is a linear combination of the weighted average of node $i$'s  neighbors' decision variable, i.e.,  $\eps\sum_{ j \in  \mathcal{N}_i } w_{ij} \bz_{j,t}$, and its local variable $\bbx_{i,t}$ and stochastic gradient $\widetilde{\gr} f_i(\bx_{i,t})$. The parameter $\alpha$ behaves as the stepsize of the gradient descent step with respect to local objective function and the parameter $\eps$ behaves as an averaging parameter between performing the distributed gradient update  $\eps( w_{ii}\bx_{i,t} +  \sum_{ j \in  \mathcal{N}_i } w_{ij} \bz_{j,t} - \al  \widetilde{\gr} f_i(\bx_{i,t}))$ and using the previous decision variable $(1-\eps)\bx_{i,t}$. By choosing a diminishing stepsize $\alpha$ we control the noise of stochastic gradient evaluation, and by averaging using the parameter $\eps$ we control randomness induced by exchanging quantized variables.
The description of \texttt{QuanTimed-DSGD} is summarized in Algorithm~\ref{alg:QDC}.


\section{Convergence Analysis}\label{sec:convergence}

In this section, we provide the main theoretical results for the proposed  \texttt{QuanTimed-DSGD} algorithm. We first consider strongly convex loss functions and characterize the convergence rate of \texttt{QuanTimed-DSGD} for achieving the global optimal solution to the problem \eqref{global_cost}. Then, we focus on the non-convex setting and show that the iterates generated by \texttt{QuanTimed-DSGD} find a stationary point of the cost in \eqref{global_cost} while the local models are close to each other and the consensus constraint is asymptotically satisfied. All the proofs are provided in the supplementary material (Section 6).  
We make the following assumptions on the weight matrix, the quantizer, and local objective functions. 

\begin{assumption}\label{assump-W}
The weight matrix $W \in \reals^{n\times n}$ with entries $w_{ij} \geq 0$ satisfies the following conditions: $W = W^\top$, $W \mathbf{1}=\mathbf{1}$ and $\text{null}(I-W)= \text{span}(\mathbf{1})$.
\end{assumption}

\begin{assumption}\label{assump-Q}
The random quantizer $Q(\cdot)$ is unbiased and variance-bounded, i.e.,
$ \bE [Q(\bx)| \bx ]=\bx$ and $ \bE [ \|Q(\bx)-\bx\|^2 | \bx ]\leq \sigma^2,$
for any $\bx \in \reals^{p}$; and quantizations are carried out independently.
\end{assumption}

Assumption \ref{assump-W} implies that $W$ is symmetric and doubly stochastic. Moreover, all the eigenvalues of $W$ are in $(-1,1]$, i.e., $1 = \lambda_1(W) \geq \lambda_2(W) \geq \cdots \geq \lambda_n(W) > -1$ (e.g. \citep{yuan2016convergence}). We also denote by $1-\beta$ the spectral gap associated with the stochastic matrix $W$, where $\beta=\max \left\{|\lambda_2(W)|,|\lambda_n(W)| \right\}$. 

\begin{assumption}\label{assump-smooth}
The function $\ell$ is $K$-smooth with respect to $\bx$, i.e., for any $\bx, \hbx \in \reals^{p}$ and any $\theta\in \ccalD$,
$ \norm{\nabla  \ell(\bx,\theta) - \gr \ell(\hbx,\theta)} \leq K \norm{\bx - \hbx}$.
\end{assumption}

\begin{assumption}\label{assump-gr}
Stochastic gradients $\gr \ell(\bx, \theta)$ are unbiased and variance bounded, i.e., 
$\mathbb{E}_{\theta} \left[ \gr \ell(\bx, \theta) \right] =  \gr L(\bx)$ and $ \mathbb{E}_{\theta} \left[\ \norm{ \gr \ell(\bx, \theta) - \gr L(\bx)}^2 \right] \leq \gamma^2.$
\end{assumption}
Note the condition in Assumption \ref{assump-gr} implies that the local gradients of each node $\nabla f_i(\bbx)$ are also unbiased estimators of the expected risk gradient $\nabla L (\bbx)$ and their variance is bounded above by $\gamma^2/m$ as it is defined as an average over $m$ realizations.

\subsection{Strongly Convex Setting}

This section presents the convergence guarantees of the proposed \texttt{QuanTimed-DSGD} method for smooth and strongly convex functions. The following assumption formally defines strong convexity.

\begin{assumption}\label{assump-convex}
The function $\ell$ is $\mu$-strongly convex, i.e., for any $\bx, \hbx \in \reals^{p}$ and $\theta\in \ccalD$ we have that  
$   \langle \gr \ell (\bx, \theta) - \gr \ell (\hbx, \theta), \bx - \hbx  \rangle \geq \mu \norm{\bx - \hbx}^2.
$
\end{assumption}

Next, we characterize the convergence rate of \texttt{QuanTimed-DSGD}  for strongly convex objectives.

\vspace{2mm}
\begin{theorem}[Strongly Convex Losses]\label{thm1}
If the conditions in Assumptions \ref{assump-W}--\ref{assump-convex} are satisfied and step-sizes are picked as $\al = {T^{-\delta/2}}$ and $\eps = {T^{-3\delta/2}}$ for arbitrary $\delta \in (0,1/2)$, then for large enough number of iterations $T \geq T^{\mathsf{c}}_{\mathsf{min}}$ the iterates generated by the \texttt{QuanTimed-DSGD} algorithm satisfy
\begin{equation}\label{cvx_main_result}
    {{\frac{1}{n}}} \sum_{i=1}^{n} \mbE \left[ \norm{\bx_{i,T} \!-\! {\bx}^*}^2 \right] 
     \!\leq\!
    \ccalO\! \left( \frac{D^2 (K/\mu)^2 }{(1-\beta)^2} + \frac{\sigma^2}{\mu} \right) \!\frac{1}{T^{\delta}}  +
    \ccalO \!\left( \frac{\gamma^2}{\mu}  \max{\left\{\frac{\mbE[1/V]}{T_d},  \frac{1}{m}\right\}} \right)\! \frac{1}{T^{2\delta}},
\end{equation}
where $D^2 =2 K  \sum_{i=1}^{n}( f_i(0) - f^*_i )$, and $ f^*_i =  \min_{\bx \in \mathbb{R}^p} f_i(\bx)$. 
\end{theorem}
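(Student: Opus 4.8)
The plan is to pass to a matrix form of the recursion, separate the dynamics of the network average from the consensus (disagreement) component, and then close a coupled system of two scalar recursions that I unroll over the $T$ iterations. Stacking the local models into $\mathbf{X}_t \in \reals^{n\times p}$ (row $i$ equal to $\bx_{i,t}^\top$) and writing $\mathbf{Q}_t = Q(\mathbf{X}_t) - \mathbf{X}_t$ for the quantization error, the update \eqref{update_of_the_alg} becomes $\mathbf{X}_{t+1} = \widetilde{W}\mathbf{X}_t + \eps(W - D_W)\mathbf{Q}_t - \al\eps\,\widetilde{\gr}F(\mathbf{X}_t)$, where $\widetilde{W} = (1-\eps)\bI + \eps W$, $D_W$ is the diagonal of $W$, and $\widetilde{\gr}F$ collects the local stochastic gradients. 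By Assumption~\ref{assump-W}, $\widetilde{W}$ is symmetric doubly stochastic with top eigenvector $\bone$ and second-largest eigenvalue magnitude $1-\eps(1-\beta)$, so it contracts on the subspace orthogonal to $\bone$ at rate $\rho := 1-\eps(1-\beta)$. I will use Assumption~\ref{assump-Q} to treat $\mathbf{Q}_t$ as a conditionally zero-mean noise with per-row second moment at most $\sigma^2$, and Assumption~\ref{assump-gr} together with the computation model to show that $\widetilde{\gr}f_i(\bx_{i,t})$ is unbiased for $\gr f_i(\bx_{i,t})$ with conditional variance at most $\gamma^2\max\{\bE[1/V]/T_d,\,1/m\}$, which accounts for the random batch sizes $|\ccalS_{i,t}| = V_{i,t}T_d$ capped at $m$.

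Next I would project onto the two orthogonal components. Writing $\bar{\bx}_t = \frac1n\sum_i \bx_{i,t}$ and $\mathbf{X}_t^\perp = (\bI - \frac1n\bone\bone^\top)\mathbf{X}_t$, the identity $\frac1n\sum_i \bE\|\bx_{i,t} - \bx^*\|^2 = \bE\|\bar{\bx}_t - \bx^*\|^2 + \frac1n\bE\|\mathbf{X}_t^\perp\|_F^2$ reduces the theorem to bounding these two quantities. For the consensus term I would use the contraction of $\widetilde{W}$ on the orthogonal complement together with a Young-inequality split to obtain a recursion of the form $\bE\|\mathbf{X}_{t+1}^\perp\|_F^2 \le (1-\eps(1-\beta))\bE\|\mathbf{X}_t^\perp\|_F^2 + \tfrac{\al^2\eps}{1-\beta}\bE\|\widetilde{\gr}F(\mathbf{X}_t)\|_F^2 + \eps^2\sigma^2 n$, where the zero-mean quantization noise contributes additively. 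A separate boundedness lemma, proved from $K$-smoothness via $\|\gr f_i(\bx)\|^2 \le 2K(f_i(\bx)-f_i^*)$ and the fact that small step sizes keep $f_i(\bx_{i,t})-f_i^*$ controlled by $f_i(0)-f_i^*$, would bound $\bE\|\widetilde{\gr}F(\mathbf{X}_t)\|_F^2$ by a constant multiple of $D^2 = 2K\sum_i(f_i(0)-f_i^*)$ plus the variance, yielding a steady-state consensus bound of order $\al^2 D^2/(1-\beta)^2 + \eps\sigma^2 n/(1-\beta)$.

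For the average, I would multiply the update by $\frac1n\bone^\top$, use $\bone^\top\widetilde{W}=\bone^\top$, and observe that the quantization term is zero-mean while the gradient term equals $\frac1n\sum_i \gr f_i(\bx_{i,t})$ in expectation. Strong convexity (Assumption~\ref{assump-convex}) and $K$-smoothness then give a one-step contraction $\bE\|\bar{\bx}_{t+1}-\bx^*\|^2 \le (1-c\mu\al\eps)\bE\|\bar{\bx}_t-\bx^*\|^2 + \frac{\al\eps K^2}{\mu}\cdot\frac1n\bE\|\mathbf{X}_t^\perp\|_F^2 + (\al\eps)^2\big(\tfrac{\gamma^2}{n}\max\{\tfrac{\bE[1/V]}{T_d},\tfrac1m\} + \tfrac{\sigma^2}{n}\big)$, where the consensus-induced bias $\frac1n\sum_i(\gr f_i(\bx_{i,t})-\gr f_i(\bar{\bx}_t))$ is controlled by $\frac{K^2}{n}\|\mathbf{X}_t^\perp\|_F^2$ through smoothness. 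Substituting the consensus steady state into this recursion converts the spectral-gap factor and the conditioning into the advertised $(K/\mu)^2/(1-\beta)^2$ amplification of the $D^2$ term.

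Finally I would solve the coupled pair, either by summing a Lyapunov combination $\bE\|\bar{\bx}_t-\bx^*\|^2 + c'\frac1n\bE\|\mathbf{X}_t^\perp\|_F^2$ or by back-substitution, and then unroll over $t=0,\dots,T$. With $\al = T^{-\delta/2}$ and $\eps = T^{-3\delta/2}$ the effective gradient step is $\al\eps = T^{-2\delta}$, so the transient $(1-c\mu\al\eps)^T$ vanishes because $\mu\al\eps T = \Theta(T^{1-2\delta})\to\infty$ for $\delta\in(0,1/2)$; the quantization floor scales as $(\eps/\al)\sigma^2/\mu = \sigma^2 T^{-\delta}/\mu$, the consensus floor feeds in as $(K/\mu)^2 D^2 T^{-\delta}/(1-\beta)^2$, and the gradient-variance floor as $\al\eps\gamma^2/\mu = \gamma^2 T^{-2\delta}/\mu$ times the batch factor, matching \eqref{cvx_main_result}. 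The requirement $T \ge T^{\mathsf{c}}_{\mathsf{min}}$ is exactly what makes $\eps(1-\beta)<1$ and $\al$ small enough relative to $\mu$ and $K$ for both the strong-convexity descent step and the boundedness lemma to hold. I expect the main obstacle to be closing the coupled recursion with the correct exponents: because $\eps \ll \al$, the consensus error must be controlled tightly enough that its $(K/\mu)^2/(1-\beta)^2$ amplification does not degrade the $T^{-\delta}$ rate, and the boundedness lemma that keeps $\|\widetilde{\gr}F(\mathbf{X}_t)\|_F^2 = \cO(D^2)$ along the whole trajectory, without any bounded-gradient assumption, is the delicate technical step underpinning the entire argument.
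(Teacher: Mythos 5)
Your decomposition into average-plus-consensus with two coupled recursions is a genuinely different route from the paper's. The paper never tracks $\bar{\bx}_t$ and the disagreement separately: it observes that the update \eqref{update_of_the_alg} is \emph{exactly} SGD with step size $\varepsilon$ on the penalty function $h_{\alpha}(\bx)=\frac{1}{2}\bx^\top(\bI-\bW)\bx+\alpha n F(\bx)$, runs a standard strongly convex SGD analysis toward the penalty minimizer $\bx^*_{\alpha}$ (Lemma \ref{lemma1}), separately bounds the deterministic bias $\|\bx^*_{\alpha}-\tildbx^*\|$ by invoking Corollary 9 of \citet{yuan2016convergence} (Lemma \ref{lemma:convex2}), and combines the two by the triangle inequality. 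The payoff of that reformulation is precisely that no gradient bound along the stochastic trajectory is ever needed: in Lemma \ref{lemma1} the gradient of $h_{\alpha}$ is controlled by smoothness relative to its own minimizer, $\|\nabla h_{\alpha}(\bx_t)\|\leq K_{\alpha}\|\bx_t-\bx^*_{\alpha}\|$, and the constant $D^2$ enters only through the \emph{deterministic} DGD analysis outsourced to Yuan et al. Your route, by contrast, is the one standard in the decentralized SGD literature, and your final rate accounting (the $(\varepsilon/\alpha)\sigma^2/\mu$ quantization floor, the $(K/\mu)^2 D^2/(1-\beta)^2$ amplification of the consensus floor, and the $\alpha\varepsilon\gamma^2/\mu$ variance floor) does match \eqref{cvx_main_result}.

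The genuine gap is the step you yourself flag as delicate: the "boundedness lemma" $\mathbb{E}\|\widetilde{\nabla}F(\mathbf{X}_t)\|_F^2=\mathcal{O}(D^2)$ along the whole trajectory. The justification you sketch --- that small step sizes keep $f_i(\bx_{i,t})-f_i^*$ controlled by $f_i(0)-f_i^*$ --- is false as stated: node $i$'s iterate does not descend $f_i$; it is dragged toward the global optimum $\bx^*$, and $f_i(\bx^*)$ can exceed $f_i(0)$, so no step-size condition keeps $f_i(\bx_{i,t})-f_i^*$ below $f_i(0)-f_i^*$. Moreover, the iterates are perturbed at every step by non-vanishing quantization noise, so even boundedness in expectation must itself be proved, and, as you note, this is circular with the consensus bound you want it for. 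The gap is fixable inside your framework, but by a different mechanism: use $\|\nabla f_i(\bx_{i,t})\|^2\leq 3\|\nabla f_i(\bx^*)\|^2+3K^2\|\bx_{i,t}-\bar{\bx}_t\|^2+3K^2\|\bar{\bx}_t-\bx^*\|^2$, observe that $\sum_{i}\|\nabla f_i(\bx^*)\|^2\leq 2K\sum_i\left(f_i(\bx^*)-f_i^*\right)\leq D^2$ because $f(\bx^*)\leq f(0)$, and then close the resulting three-way coupled recursion, whose feedback coefficients are $\mathcal{O}(\alpha^2)$ and hence harmless; this is real additional work that your sketch does not contain, and it is exactly the work the paper's penalty-function trick avoids. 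One smaller inconsistency to repair: in your average recursion you write the quantization noise as $(\alpha\varepsilon)^2\sigma^2/n$, but it enters with coefficient $\varepsilon$, not $\alpha\varepsilon$; your concluding paragraph uses the correct $\varepsilon^2$ scaling (giving the floor $\sigma^2 T^{-\delta}/\mu$), whereas the recursion as written would have produced $\sigma^2 T^{-2\delta}$, which contradicts the theorem.
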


Theorem \ref{thm1} guarantees the \emph{exact} convergence of \emph{each} local model to the global optimal even though the noises induced by random quantizations and stochastic gradients are non-vanishing with iterations. Moreover, such convergence rate is as close as desired to $\mathcal{O}( {1}/{\sqrt{T}})$ by picking the tuning parameter $\delta$ arbitrarily close to $1/2$. We would like to highlight that by choosing a parameter $\delta$ closer to $1/2$, the lower bound on the number of required iterations $T^{\mathsf{c}}_{\mathsf{min}}$ becomes larger. More details are available in the proof of Theorem~\ref{thm1} provided in the supplementary material.

Note that the coefficient of $1/T^{\delta}$ in \eqref{cvx_main_result}  characterizes the dependency of our upper bound on the objective function condition number $K/\mu$, graph connectivity parameter $1/(1-\beta)$, and variance $\sigma^2$ of error induced by quantizing our signals. Moreover, the coefficient of $1/T^{2\delta}$ shows the effect of stochastic gradients variance $\gamma^2$ as well as our deadline-based scheme parameters $T_d/(\mbE[1/V])$. 

\vspace{1mm}
\begin{remark}
The expression $1/b_{\text{eff}}=\max\{\mbE[1/V]/T_d,1/m\}$ represents the inverse of the effective batch size $b_{\text{eff}}$ used in our \texttt{QuanTimed-DSGD} method. To be more specific, If the deadline $T_d$ is large enough that in expectation all local gradients are computed before the deadline, i.e., $T_d/\mbE[1/V] >m$, then our effective batch size is $b_{\text{eff}}=m$ and the term $1/m$ is the dominant term in the maximization. Conversely, if $T_d$ is small and the number of computed gradients $T_d/\mbE[1/V]$ is smaller than the total number of local samples $m$, the effective batch size is $b_{\text{eff}}=T_d/\mbE[1/V]$. In this case, $\mbE[1/V]/T_d$ is dominant term in the maximization. This observation shows that  $\gamma^2\max\{\mbE[1/V]/T_d,1/m\}=\gamma^2/b_{\text{eff}}$ in \eqref{cvx_main_result} is the variance of mini-batch gradient in \texttt{QuanTimed-DSGD}.
\end{remark}

\vspace{1mm}
\begin{remark}
Using strong convexity of the objective function, one can easily verify that the last iterates $\bbx_{i,T}$ of  \texttt{QuanTimed-DSGD} satisfy the sub-optimality 
$f(\bbx_{i,T})-f(\hbx^*)= L_N(\bbx_{i,T})-L_N(\hbx^*) \leq \mathcal{O}(1/\sqrt{T})$ with respect to the empirical risk, where $\hbx^*$ is the minimizer of the empirical risk $L_N$. As the gap between the expected risk $L$ and the empirical risk $L_N$ is of $\mathcal{O}(1/\sqrt{mn})$, the overall error of  \texttt{QuanTimed-DSGD} with respect to the expected risk $L$ is $\mathcal{O}(1/\sqrt{T}+{1}/{\sqrt{mn}})$. 
\end{remark}

\subsection{Non-convex Setting}

In this section, we characterize the convergence rate of \texttt{QuanTimed-DSGD} for non-convex and smooth objectives. As discussed in Section \ref{sec:setup}, we are interested in finding a set of local models which satisfy first-order optimality condition approximately, while the models are close to each other and satisfy the consensus condition up to a small error. To be more precise, we are interested in finding a set of local models $\{\bx_1^*,\dots,\bx_n^*\}$ where their average  $\overline{\bbx}^* \coloneqq \frac{1}{n}\sum_{i=1}^n \bbx_{i}^*$ (approximately) satisfy first-order optimality condition, i.e., $\bE \norm{ \gr f \left( \overline{\bbx}^*\right)}^2\leq \nu$, while the iterates are close to their average, i.e., $\bE \|\overline{\bbx}^*-\bx_{i}^*\|^2 \leq \rho$.
If a set of local iterates satisfies these conditions we call them $(\nu,\rho)$-approximate solutions. 
Next theorem characterizes both first-order optimality and consensus convergence rates and the overall complexity for achieving an $(\nu,\rho)$-approximate solutions.

\vspace{2mm}
\begin{theorem}[Non-convex Losses]\label{thm2}
Under Assumptions \ref{assump-W}--\ref{assump-gr}, and for step-sizes $\al = T^{-1/6}$ and $\eps = T^{-1/2}$, \texttt{QuanTimed-DSGD} guarantees the following convergence and consensus rates:
\begin{equation}\label{eq:thm2-convergence}
    \frac{1}{T} \sum_{t=0}^{T-1} \bE \norm{ \gr f (\overline{\bbx}_t )}^2 
    \leq
    \ccalO \left( \frac{ K^2 }{(1 - \beta)^2} \frac{\gamma^2}{m} + \frac{K \sigma^2}{n} \right) \frac{1}{T^{1/3}}
   +
    \ccalO \left(  \frac{K\gamma^2}{n} \max{\left\{\frac{\mbE[1/V]}{T_d},  \frac{1}{m}\right\}} \right) \frac{1}{T^{2/3}},
\end{equation}
and
\begin{equation}\label{eq:thm2-consensus}
    \frac{1}{T} \sum_{t=0}^{T-1} \frac{1}{n} \sum_{i=1}^{n} \bE \norm{ \overline{\bbx}_t  - \bx_{i,t} }^2
    \leq
    \ccalO \left( \frac{ \gamma^2}{m (1 - \beta)^2} \right) \frac{1}{T^{1/3}},
\end{equation}
for large enough number of iterations $T \geq T^{\mathsf{nc}}_{\mathsf{min}}$. Here $\overline{\bbx}_t = \frac{1}{n} \sum_{i=1}^{n} \bbx_{i,t}$ denotes the average models at iteration $t$.
\end{theorem}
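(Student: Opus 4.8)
The plan is to track two coupled quantities: the expected suboptimality of the averaged iterate $\overline{\bbx}_t = \frac{1}{n}\sum_{i=1}^n \bx_{i,t}$, which delivers the stationarity bound \eqref{eq:thm2-convergence}, and the expected consensus error $\frac{1}{n}\sum_i \bE\|\overline{\bbx}_t - \bx_{i,t}\|^2$, which delivers \eqref{eq:thm2-consensus}. First I would write the update in matrix form. Stacking the local models as the rows of $X_t$ and letting $Q_t = Z_t - X_t$ be the zero-mean quantization noise (Assumption \ref{assump-Q}), the update becomes $X_{t+1} = \big[(1-\eps)I + \eps W\big]X_t + \eps (W - W_D)Q_t - \al\eps\, \widetilde{\gr} F(X_t)$, where $W_D$ is the diagonal of $W$ and $\widetilde{\gr} F(X_t)$ stacks the local stochastic gradients. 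Averaging over nodes and using $\mathbf{1}^\top W = \mathbf{1}^\top$ (Assumption \ref{assump-W}) collapses the mixing term and yields $\overline{\bbx}_{t+1} = \overline{\bbx}_t - \frac{\al\eps}{n}\sum_i \widetilde{\gr} f_i(\bx_{i,t}) + \frac{\eps}{n}\sum_i (1-w_{ii})(\bz_{i,t}-\bx_{i,t})$, i.e.\ a perturbed gradient step with effective stepsize $\al\eps$ and a mean-zero quantization perturbation.

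Next I would apply the descent lemma for the $K$-smooth $f$ (Assumption \ref{assump-smooth}) to $\bE[f(\overline{\bbx}_{t+1})\mid\cF_t]$, where $\cF_t$ is the natural filtration. Using unbiasedness $\bE[\widetilde{\gr} f_i(\bx_{i,t})\mid\cF_t]=\gr f_i(\bx_{i,t})$—which holds because $\underline{v}\,T_d\geq 1$ forces a nonempty batch $\mathcal{S}_{i,t}$—the first-order term is $-\al\eps\langle \gr f(\overline{\bbx}_t),\frac{1}{n}\sum_i \gr f_i(\bx_{i,t})\rangle$. I would split this with the identity $-\langle a,b\rangle = \tfrac12(\|a-b\|^2 - \|a\|^2 - \|b\|^2)$ and bound the mismatch via $\big\|\gr f(\overline{\bbx}_t)-\frac1n\sum_i \gr f_i(\bx_{i,t})\big\|^2 \leq \frac{K^2}{n}\sum_i\|\overline{\bbx}_t-\bx_{i,t}\|^2$, which couples the descent to the consensus error. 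The quadratic remainder $\frac{K}{2}\bE\|\overline{\bbx}_{t+1}-\overline{\bbx}_t\|^2$ I would control using the effective gradient variance $\gamma^2/b_{\text{eff}}$ with $b_{\text{eff}}^{-1}=\max\{\bE[1/V]/T_d,1/m\}$—proved in a separate variance lemma by conditioning on the random batch size $|\mathcal{S}_{i,t}|=V_{i,t}T_d$—together with the per-node quantization variance of order $\sigma^2/n$. Telescoping over $t$ and dividing by $\al\eps T/2$ then produces the bound \eqref{eq:thm2-convergence}, modulo a bound on the time-averaged consensus error; inserting $\al=T^{-1/6}$, $\eps=T^{-1/2}$ makes the optimization-gap term $\ccalO(T^{-1/3})$, the quantization remainder $\frac{K}{\al\eps}\eps^2\frac{\sigma^2}{n}=\ccalO(\frac{K\sigma^2}{n}T^{-1/3})$, and the stochastic-gradient remainder $\ccalO(\frac{K\gamma^2}{n\,b_{\text{eff}}}T^{-2/3})$.

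The crux is bounding the consensus error. Projecting the matrix update with $\Pi = I - \frac1n\mathbf{1}\mathbf{1}^\top$ and using that $(1-\eps)I+\eps W$ contracts by the factor $1-\eps(1-\beta)$ on $\mathrm{range}(\Pi)$ (here the spectral gap $1-\beta$ of Assumption \ref{assump-W} enters), a Young-inequality split with parameter of order $\eps(1-\beta)$ yields a recursion $\bE\|\Pi X_{t+1}\|_F^2 \leq \big(1-\eps(1-\beta)\big)\bE\|\Pi X_t\|_F^2 + (\text{noise injection})$. The injection collects $\al^2\eps^2\, n\gamma^2/b_{\text{eff}}$, $\eps^2\|W-W_D\|^2 n\sigma^2$, and an $\al^2\eps^2\|\Pi\, \gr F(X_t)\|_F^2$ gradient term. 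Solving this contracting recursion (geometric summation dividing by $\eps(1-\beta)$, then normalizing by $n$) leaves a time-averaged consensus error dominated by $\frac{\al^2\gamma^2}{m(1-\beta)^2}$, which is exactly the $\ccalO(T^{-1/3})$ statement \eqref{eq:thm2-consensus} under the chosen stepsizes; the persistent quantization contribution is only $\ccalO(T^{-1/2})$ and is absorbed.

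The main obstacle is the absence of any bounded-gradient assumption, so the term $\|\Pi\,\gr F(X_t)\|_F^2$ in the consensus recursion and the $\|\frac1n\sum_i \gr f_i(\bx_{i,t})\|^2$ term in the descent remainder are not a priori bounded and the two recursions are genuinely coupled. I would resolve this by bounding $\|\gr f_i(\bx_{i,t})\|^2$ in terms of $\|\gr f(\overline{\bbx}_t)\|^2$, the consensus error, and a gradient-dissimilarity term controlled through $\gamma^2/m$ (Assumption \ref{assump-gr}), so that these terms carry coefficients of order $\al\eps = T^{-2/3}$ after normalization and can be absorbed back into the left-hand sides once $T\geq T^{\mathsf{nc}}_{\mathsf{min}}$. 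Checking that this absorption closes the coupled system with the stated exponents, and that the threshold $T^{\mathsf{nc}}_{\mathsf{min}}$ is finite, is the delicate bookkeeping that pins down the final rates.
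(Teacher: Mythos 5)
Your proposal is correct, and its first half---the descent-lemma analysis of the averaged iterate---is essentially the paper's own argument: the same matrix form of the update, the same observation that doubly-stochastic mixing preserves the average so that $\overline{\bbx}_t$ performs a perturbed SGD step with effective stepsize $\alpha\epsilon$, the same polarization identity exposing the mismatch term bounded by $\frac{K^2}{n}\sum_i\|\overline{\bbx}_t-\bbx_{i,t}\|^2$, and the same variance bookkeeping producing the $\mathcal{O}(\frac{K\sigma^2}{n})T^{-1/3}$ and $\mathcal{O}(\frac{K\gamma^2}{n}\max\{\mathbb{E}[1/V]/T_d,1/m\})T^{-2/3}$ remainders. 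Where you genuinely depart from the paper is the consensus bound. The paper unrolls the iteration all the way back to $X_0=0$, writes $X_t$ as a sum of propagated quantization-noise and gradient terms, and bounds the resulting double sums term by term (its $T_2,\dots,T_{10}$) using $\|\frac{\bone\bone^\top}{n}-W_{\epsilon}^{k}\|\le\beta_{\epsilon}^{k}$ and geometric series, including explicit cross-term sums over pairs $s\neq s'$. You instead project with $\Pi=I-\frac{1}{n}\bone\bone^\top$, use the one-step contraction of $(1-\epsilon)I+\epsilon W$ on $\mathrm{range}(\Pi)$ together with a Young split with parameter of order $\epsilon(1-\beta)$, and solve the resulting linear recursion. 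The two routes are equivalent in outcome---both amount to dividing the per-step noise injection by the effective gap $\epsilon(1-\beta)$, both use the same decomposition of $\|\nabla f_i(\bbx_{i,t})\|^2$ into stationarity, consensus, and dissimilarity pieces, and both must close the coupled system by absorbing a term of order $\frac{\alpha^2K^2}{(1-\beta)^2}\cdot\frac{1}{T}\sum_t M_t$ into the left-hand side for $T\ge T^{\mathsf{nc}}_{\mathsf{min}}$---but your contraction argument avoids the paper's heaviest bookkeeping (the cross-term bounds $T_7$, $T_9$, $T_{10}$) at the price of slightly looser constants from Young's inequality, while the paper's unrolling keeps every constant explicit. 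One detail you handle more carefully than the paper: you note that unbiasedness of $\widetilde{\nabla}f_i$ requires the batch $\mathcal{S}_{i,t}$ to be nonempty, guaranteed by $\underline{v}\,T_d\ge 1$, whereas the paper's Lemma on stochastic gradient noise asserts unbiasedness without flagging this condition even though its own gradient definition sets $\widetilde{\nabla}f_i=0$ on the event of an empty batch.
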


The convergence rate in \eqref{eq:thm2-convergence} indicates the proposed \texttt{QuanTimed-DSGD} method finds first-order stationary points with vanishing approximation error, even though the quantization and stochastic gradient noises are non-vanishing. Also, the approximation error decays as fast as $\ccalO ( T^{-1/3} )$ with iterations. Theorem \ref{thm2} also implies from \eqref{eq:thm2-consensus} that the local models reach consensus with a rate of  $\ccalO ( T^{-1/3} )$. Moreover, it shows that to find an  $(\nu,\rho)$-approximate solution \texttt{QuanTimed-DSGD} requires at most  $\mathcal{O}(\max\{\nu^{-3},\rho^{-3}\})$ iterations.

\section{Experimental Results}\label{sec:numerical}
\vspace{0mm}
In this section, we numerically evaluate the performance of the proposed \texttt{QuanTimed-DSGD} method described in Algorithm \ref{alg:QDC} for solving a class of non-convex decentralized optimization problems. In particular, we compare the total run-time of \texttt{QuanTimed-DSGD} scheme with the ones for three benchmarks which are briefly described below. 
\begin{itemize}[leftmargin=*]
\item \textbf {Decentralized SGD (DSGD)} \citep{yuan2016convergence}:  Each worker updates its decision variable as
$\bx_{i,t+1} = \sum_{ j \in  \mathcal{N}_i } w_{ij} \bx_{j,t} - \alpha \widetilde{\gr} f_i(\bx_{i,t}) $. 
We note that the exchanged messages are not quantized and the local gradients are computed for a fixed batch size. 
\item \textbf{Quantized Decentralized SGD (Q-DSGD)} \citep{reisizadeh2019exact}: Iterates are updated according to \eqref{update_of_the_alg}. Similar to \texttt{QuanTimed-DSGD} scheme, Q-DSGD employs quantized message-passing, however the gradients are computed for a fixed batch size in each iteration.
\item \textbf{Asynchronous DSGD}: Each worker updates its model without waiting to receive the updates of 
its neighbors, i.e. $\bx_{i,t+1} = \sum_{ j \in  \mathcal{N}_i } w_{ij} \bx_{j,\tau_j} - \alpha \widetilde{\gr} f_i(\bx_{i,t}) $ where $\bx_{j,\tau_j}$ denotes the most recent model for node $j$. In our implementation of this scheme, models are exchanged without quantization.
\end{itemize}

Note that the first two methods mentioned above, i.e., DSGD and Q-DSGD, operate synchronously across the workers, as is our proposed \texttt{QuanTimed-DSGD} method. To be more specific, worker nodes wait to receive the decision variables from all of the neighbor nodes and then synchronously update according to an update rule. In \texttt{QuanTimed-DSGD} (Figure \ref{fig:asynch}, right), this waiting time consists of a fixed gradient computation time denoted by the deadline $T_d$ and  communication time of the message exchanges. Due to the random computation times, different workers end up computing gradients of different and random batch-sizes $B_{i,t}$ across workers $i$ and iterations $t$. In DSGD (and Q-DSGD) however (Figure \ref{fig:asynch}, Left), the gradient computation time varies across the workers since computing a fixed-batch gradient of size $B$ takes a random time whose expected value is proportional to the batch-size $B$ and hence the slowest nodes (stragglers) determine the overall synchronization time $T_{\text{max}}$. Asynchronous-DSGD  mitigates stragglers since each worker iteratively computes a gradient of batch-size $B$ and updates the local model using the most recent models of its neighboring nodes available in its memory (Figure \ref{fig:asynch}, middle).

\begin{figure}[h!]
    \centering
    \includegraphics[width=0.7\linewidth]{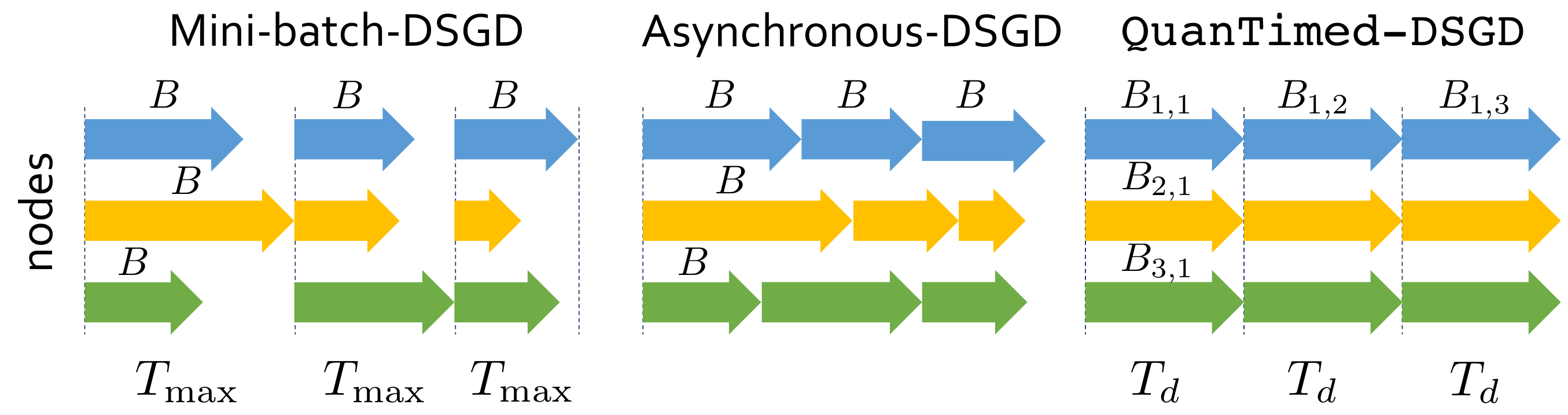}
    \vspace{0mm}
    \caption{Gradient computation timeline for three methods: DSGD, Asynchronous-DSGD, \texttt{QuanTimed-DSGD}.}
    \label{fig:asynch}
    \vspace{-1mm}
\end{figure}
\textbf{Data and Experimental Setup.} We carry out two sets of experiments over CIFAR-10 and MNIST datasets, where each worker is assigned with a sample set of size $m = 200$ for both datasets.  For CIFAR-10, we implement a binary classification using a fully connected neural network with one hidden layer with $30$ neurons.  Each image is converted to a vector of length $1024$.
For MNIST, we use a fully connected neural network with one hidden layer of size $50$ to classify the input image into $10$ classes. 
In experiments over CIFAR-10, step-sizes are fine-tuned as follows: $(\alpha, \eps) = ({0.08}/{T^{1/6}},{14}/{T^{1/2}}) $ for \texttt{QuanTimed-DSGD} and Q-DSGD, and $\alpha = 0.015$ for DSGD and Asynchronous DSGD. In MNIST experiments, step-sizes are fine-tuned to $(\alpha, \eps) = ({0.3}/{T^{1/6}},{15}/{T^{1/2}})$ for \texttt{QuanTimed-DSGD} and Q-DSGD, and $\alpha = 0.2$ for DSGD.

We implement the unbiased low precision quantizer in \eqref{eq:quantizerexample} with various quantization levels $s$, and we let $T_c$ denote the communication time of a $p$-vector without quantization (16-bit precision). The communication time for a quantized vector is then proportioned according the quantization level. 
In order to ensure that the expected batch size used in each node is a target positive number $b$, we choose the deadline $T_d = b /\mathbb{E}[V]$, where $V \sim \text{Uniform}(10,90)$ is the random computation speed. 
The communication graph is a random Erdös-Rènyi graph with edge connectivity $p_c = 0.4$ and $n=50$ nodes. The weight matrix is designed as $\bbW = \bbI -  {\bbL}/\kappa$ where $\bbL$ is the Laplacian matrix of the graph and $ \kappa > \lambda_{\text{max}}(\bbL)/2$.


\begin{figure}[t!]
    \centering
    \hspace{0mm}\includegraphics[width=0.48\linewidth]{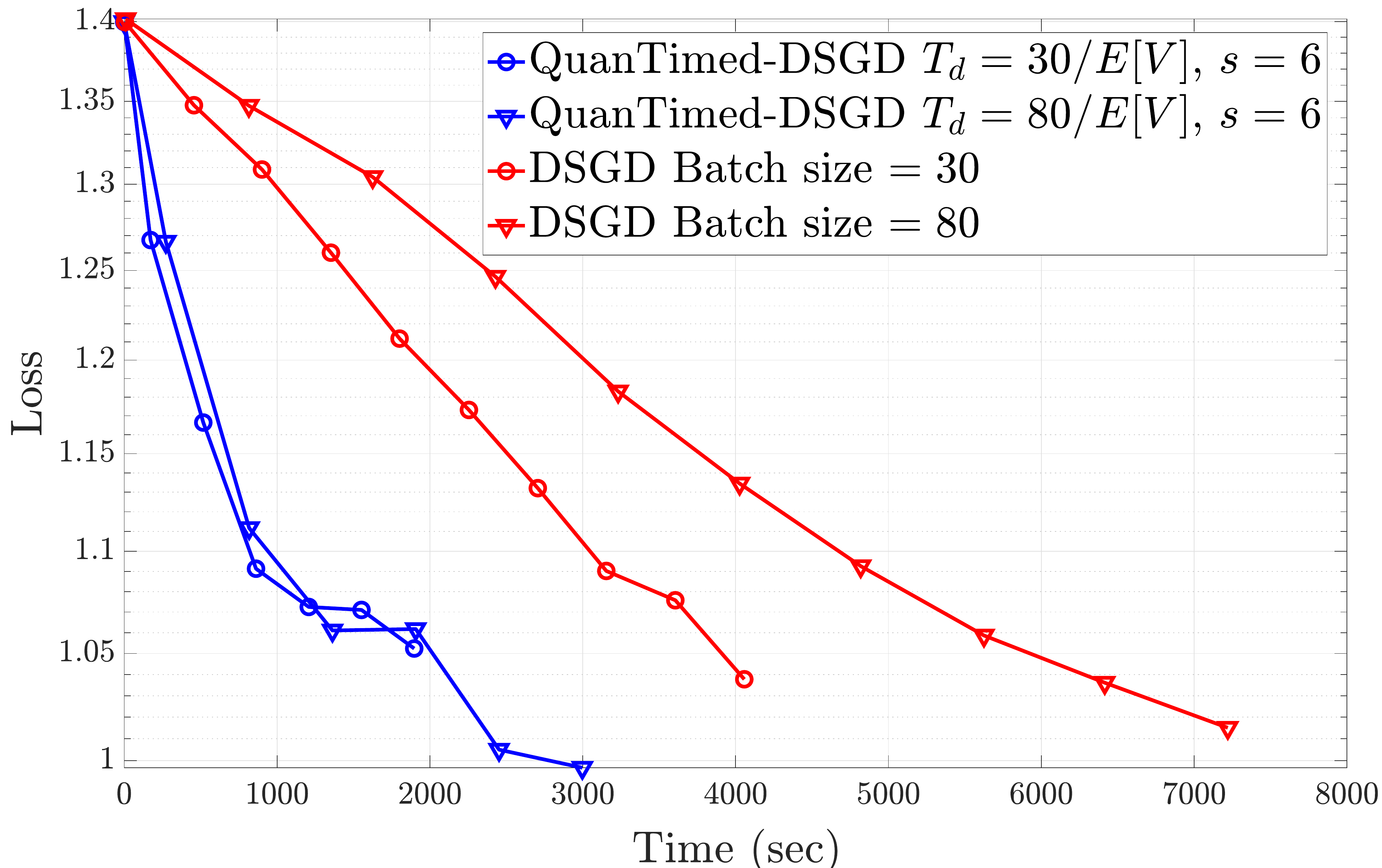}
    \includegraphics[width=0.48\linewidth]{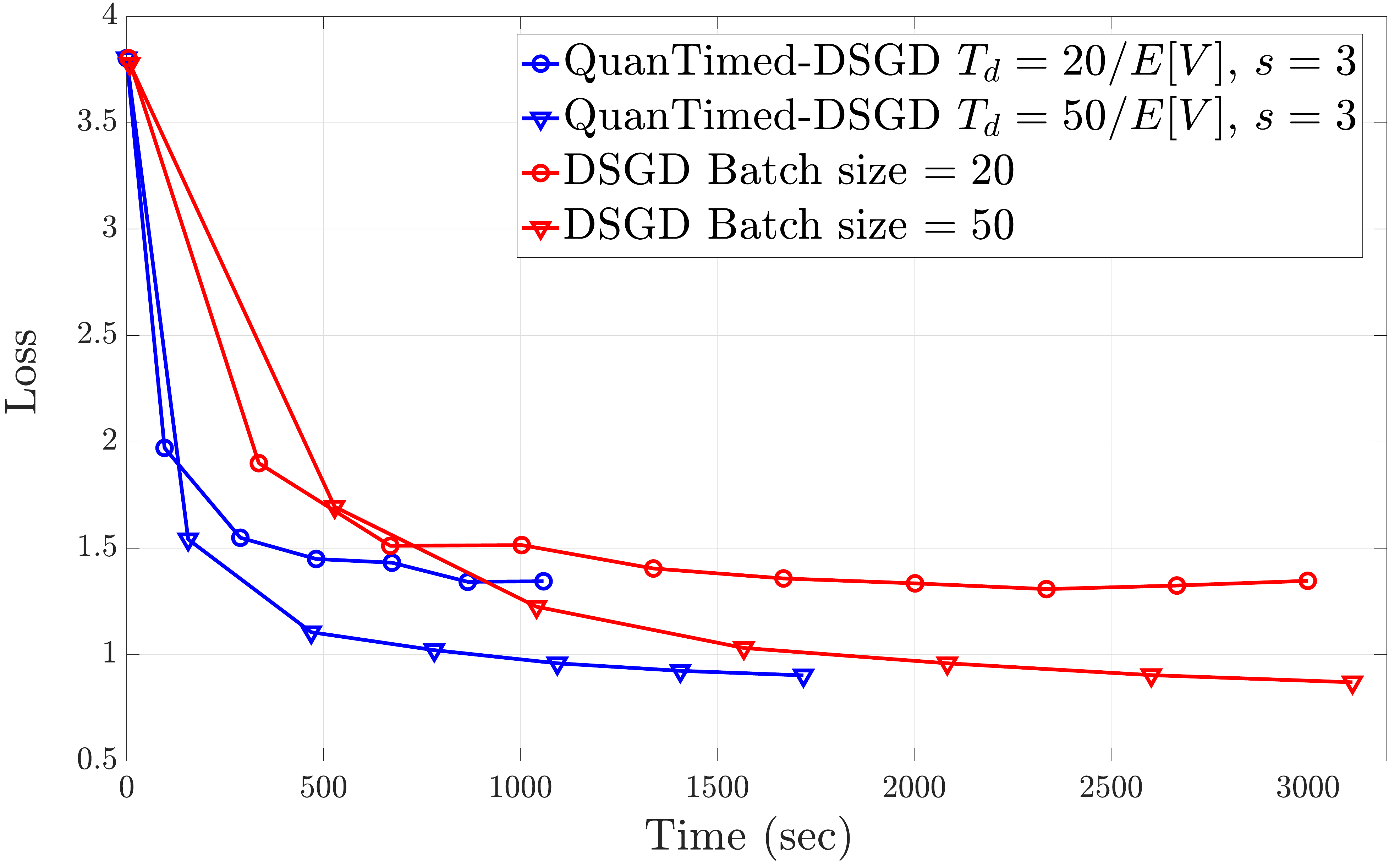}
    \vspace{0mm}
    \caption{Comparison of \texttt{QuanTimed-DSGD} and vanilla DSGD methods for training a neural network on CIFAR-10 (left) and MNIST (right) datasets ($T_c=3$).}
    \label{fig1}
    \vspace{-3mm}
\end{figure}

\begin{figure}[t!]
    \centering
    \hspace{0mm}\includegraphics[width=0.48\linewidth]{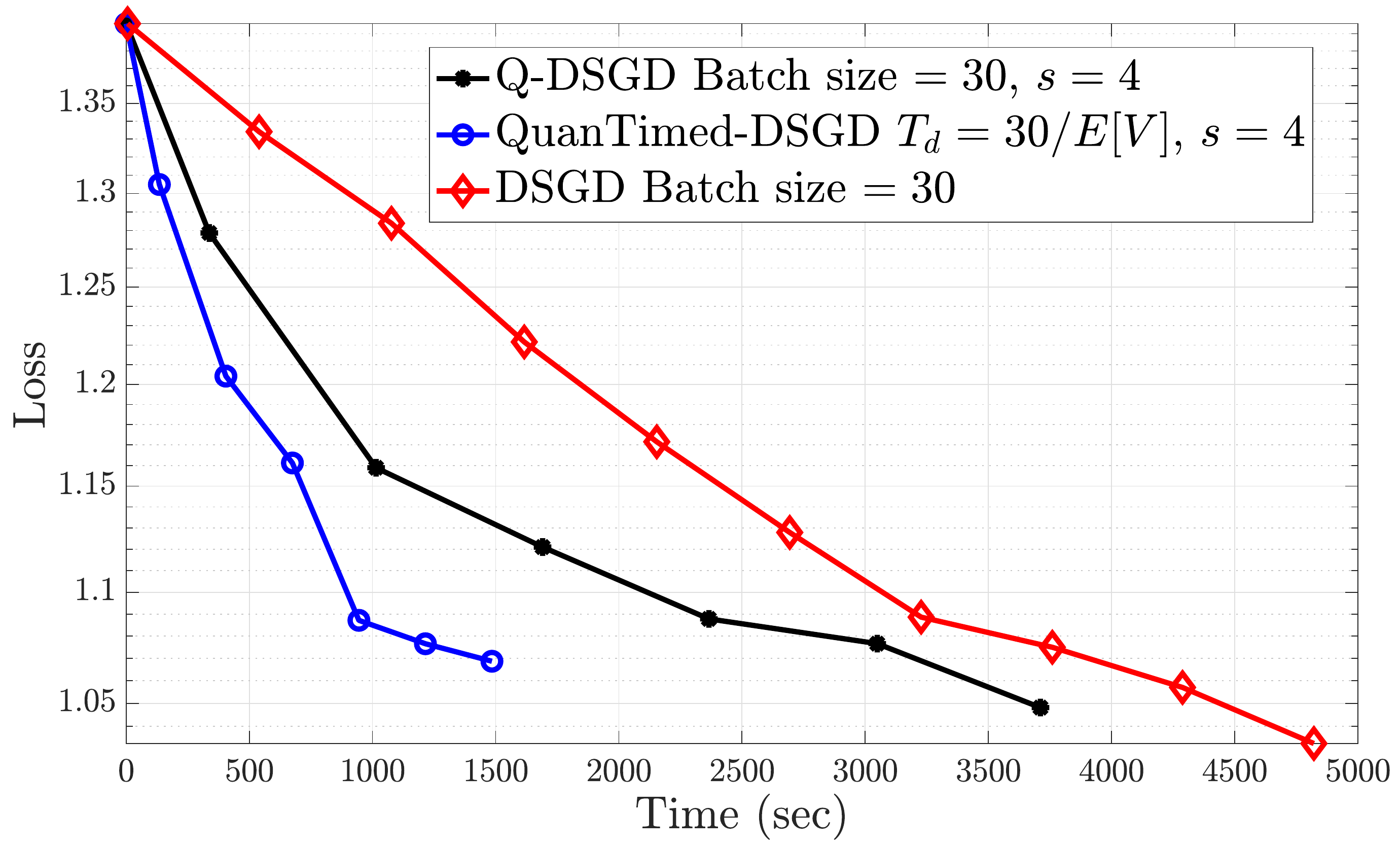}
    \includegraphics[width=0.485\linewidth]{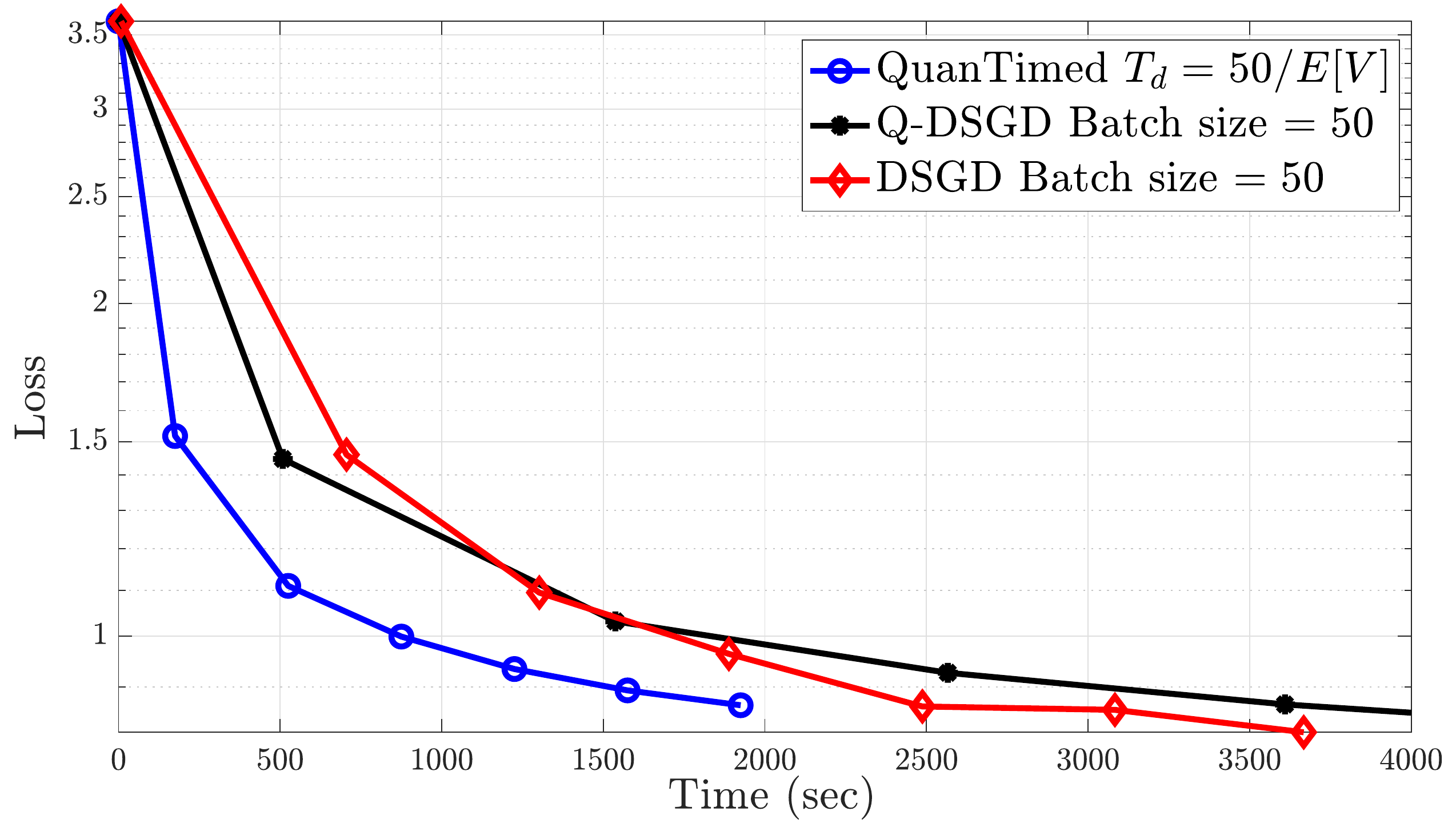}\vspace{0mm}
    \caption{Comparison of \texttt{QuanTimed-DSGD}, QDSGD, and vanilla DSGD methods for training a neural network on CIFAR-10 (left) and MNIST (right) datasets ($T_c=3$).}
    \label{fig2}
    \vspace{-4mm}
\end{figure}

\begin{figure}[t!]
    \centering
    \hspace{0mm}\includegraphics[width=0.47\linewidth]{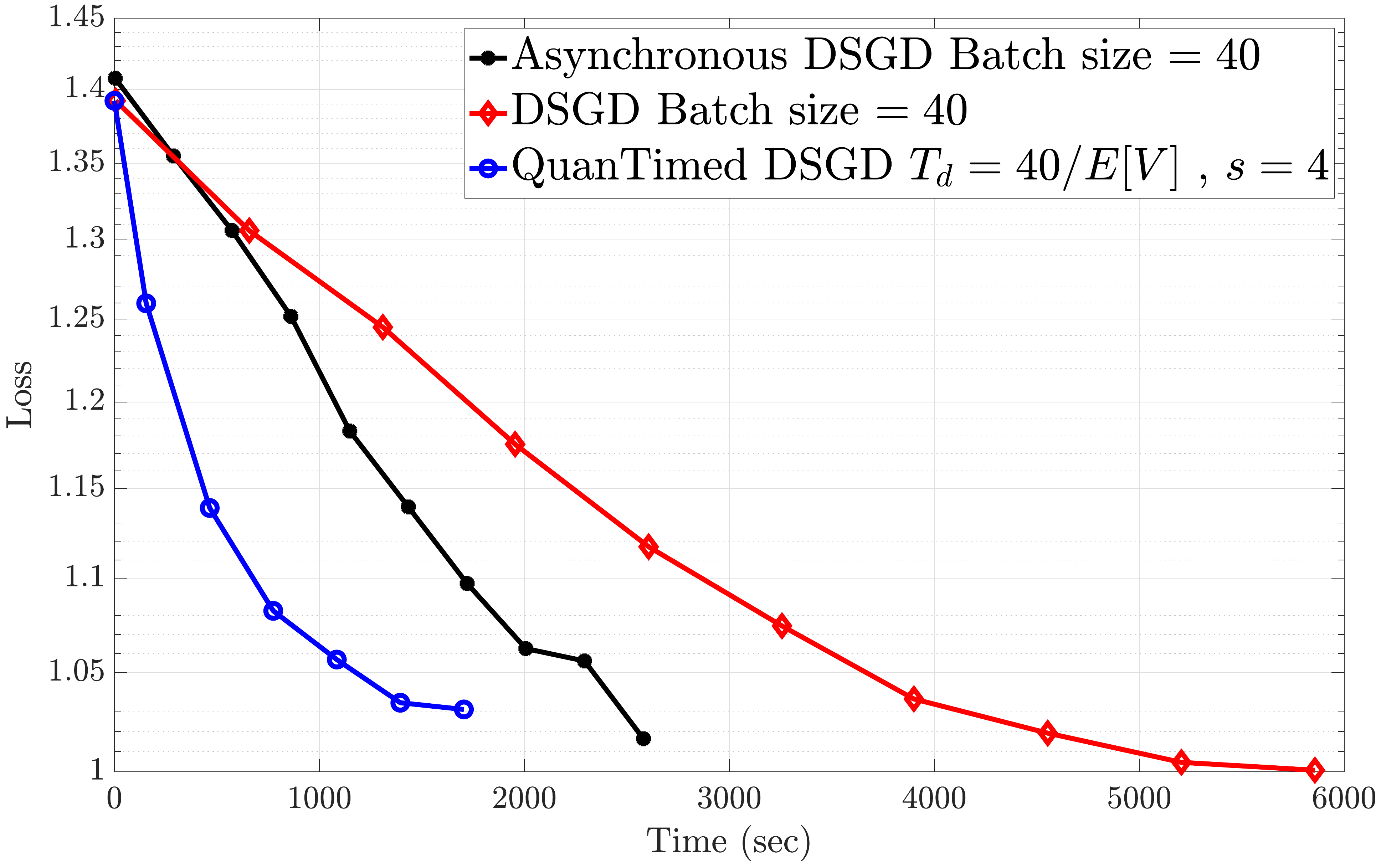}
    \hspace{0mm}\includegraphics[width=0.475\linewidth]{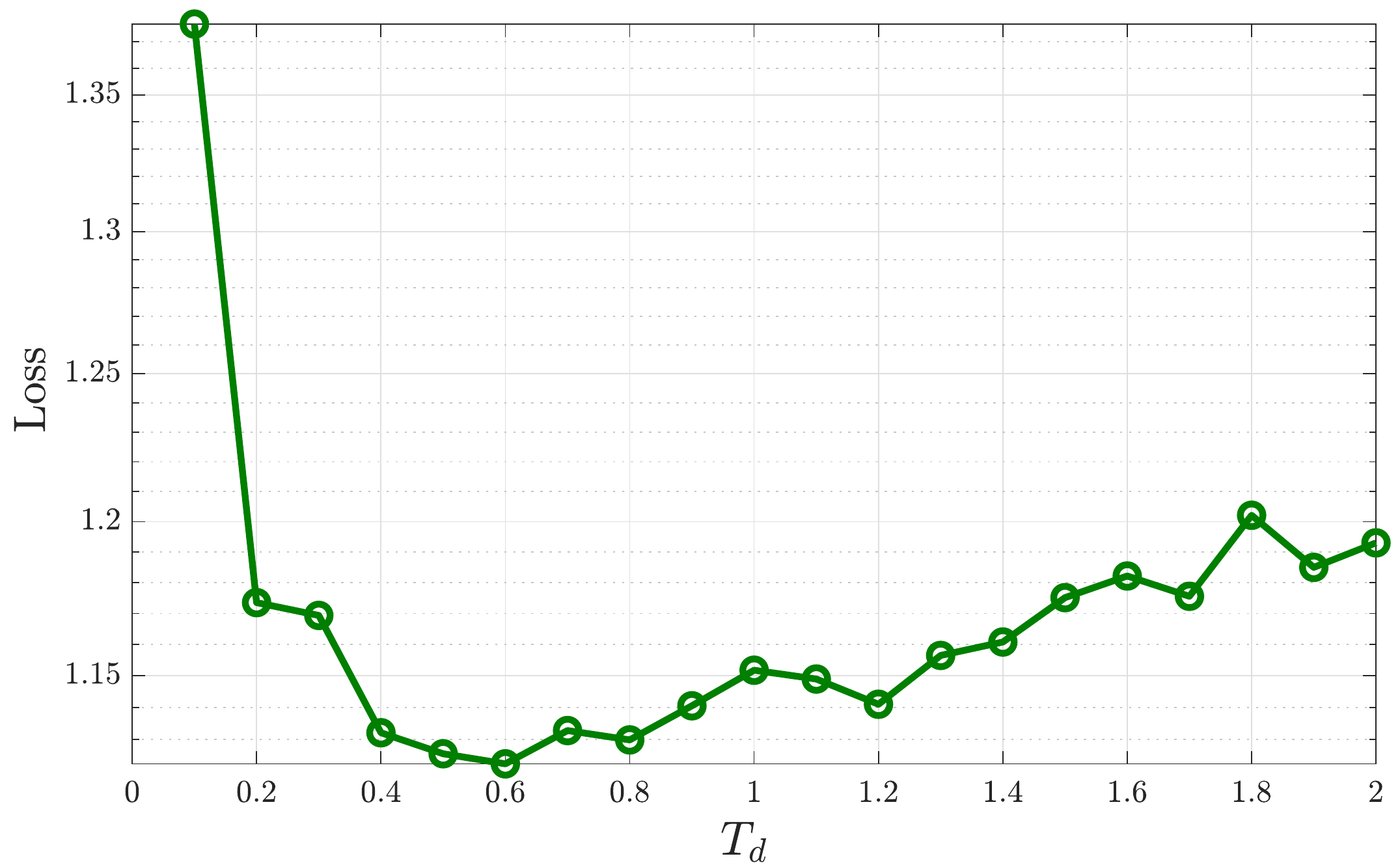}
    \caption{Left: Comparison of \texttt{QuanTimed-DSGD} with Asynchronous DSGD and DSGD for training a neural network on CIFAR-10 ($T_c=3$).  Right: Effect of $T_d$ on the loss for CIFAR-10 ($T_c=1$). }
    \label{fig3}
    \vspace{-6mm}
\end{figure}


\noindent \textbf{Results.} 
Figure \ref{fig1} compares the total training run-time for the \texttt{QuanTimed-DSGD} and DSGD schemes. On CIFAR-10 for instance (left), the same (effective) batch-sizes, the proposed \texttt{QuanTimed-DSGD} achieves speedups of up to $3 \times$ compared to DSGD. 

In Figure \ref{fig2}, we further compare these two schemes to Q-DSGD benchmark. Although Q-SGD improves upon the vanilla DSGD by employing quantization, however, the proposed \texttt{QuanTimed-DSGD} illustrates $ 2 \times$ speedup in training time over Q-DSGD (left). 

To evaluate the straggler mitigation in the \texttt{QuanTimed-DSGD}, we compare its run-time with Asynchronous DSGD benchmark in Figure \ref{fig3} (left). While Asynchronous DSGD outperforms DSGD in training run-time by avoiding slow nodes, the proposed \texttt{QuanTimed-DSGD} scheme improves upon Asynchronous DSGD by up to $30 \%$. These plots further illustrate that \texttt{QuanTimed-DSGD} significantly reduces the training time by simultaneously handling the communication load by quantization and mitigating stragglers through a deadline-based computation. The deadline time $T_d$ indeed can be optimized for the minimum training run-time, as illustrated in Figure \ref{fig3} (right). Additional numerical results on neural networks with four hidden layers and ImageNet dataset are provided in the supplementary materials.

\section{Acknowledgments}\label{sec:acl}
The authors acknowledge supports from National Science Foundation (NSF) under grant CCF-1909320 and UC Office of President under Grant LFR-18-548175. The research of H. Hassani is supported by NSF grants 1755707 and 1837253.

{\small{
\newpage
\bibliography{biblio,bmc_article,bmc_article2}
\bibliographystyle{neurips_2019}
}}

\newpage
\section{Supplementary Material}\label{sec:supp}

\subsection{Additional Numerical Results}\label{sec:extra-numerical}
In this section, we provide additional numerical results comparing the performance of the proposed \texttt{QuanTimed-DSGD} method with other benchmarks, DSGD and Q-DSGD. In particular, we train a fully connected neural network with four hidden layers consisting of $(30, 20, 20, 25)$ neurons on two classes of CIFAR-10 dataset. For \texttt{QuanTimed-DSGD} and Q-DSGD, step sizes are fine-tuned to $(\alpha,\eps) = (0.8/T^{1/6},9/T^{1/2})$ and $(\alpha,\eps) = (0.8/T^{1/6},10/T^{1/2})$, respectively and for DSGD algorithm $\alpha = 0.08$. Figure \ref{fig:extra-sim} (left) demonstrates the training time for the three methods where the proposed \texttt{QuanTimed-DSGD} method enjoys a $2.31 \times$ speedup over the best of the two benchmarks DSGD and Q-DSGD.

Moreover, we use a one hidden layer neural network with $90$ neurons for binary classification on ImageNet dataset and demonstrate the speedup of our proposed method over other benchmarks. For \texttt{QuanTimed-DSGD} and Q-DSGD, step sizes are fine-tuned to $(\alpha,\eps) = (0.1/T^{1/6},14/T^{1/2})$ and $(\alpha,\eps) = (0.1/T^{1/6},15/T^{1/2})$, respectively and for DSGD algorithm $\alpha = 0.02$. Figure \ref{fig:extra-sim} (right) shows the training time of the three methods where \texttt{QuanTimed-DSGD} demonstrates $1.7 \times$ speedup compared to the best of DSGD and Q-DSGD.




\begin{figure}[h]
    \centering
    \includegraphics[width=0.495\textwidth]{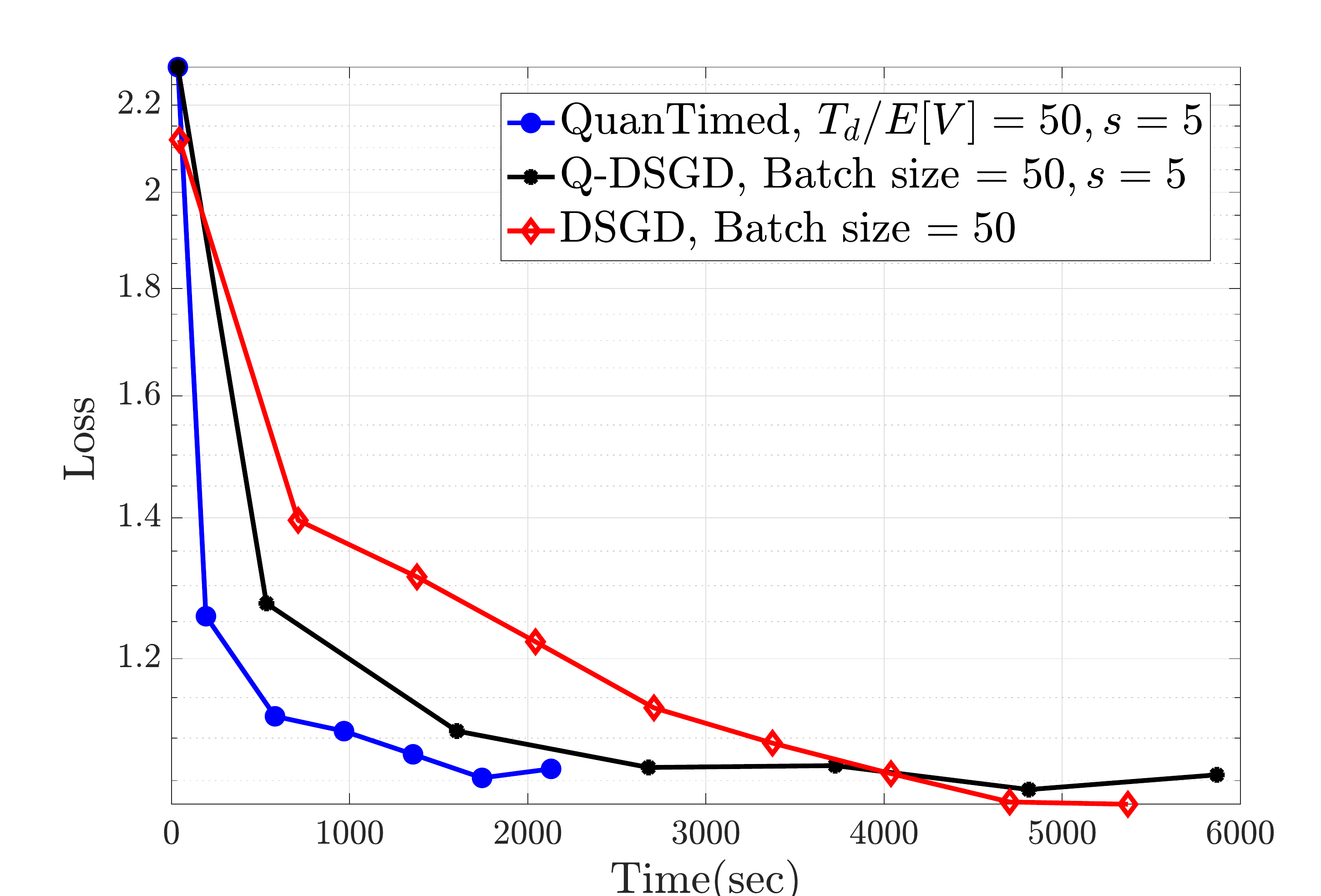}
    \includegraphics[width=0.485\textwidth]{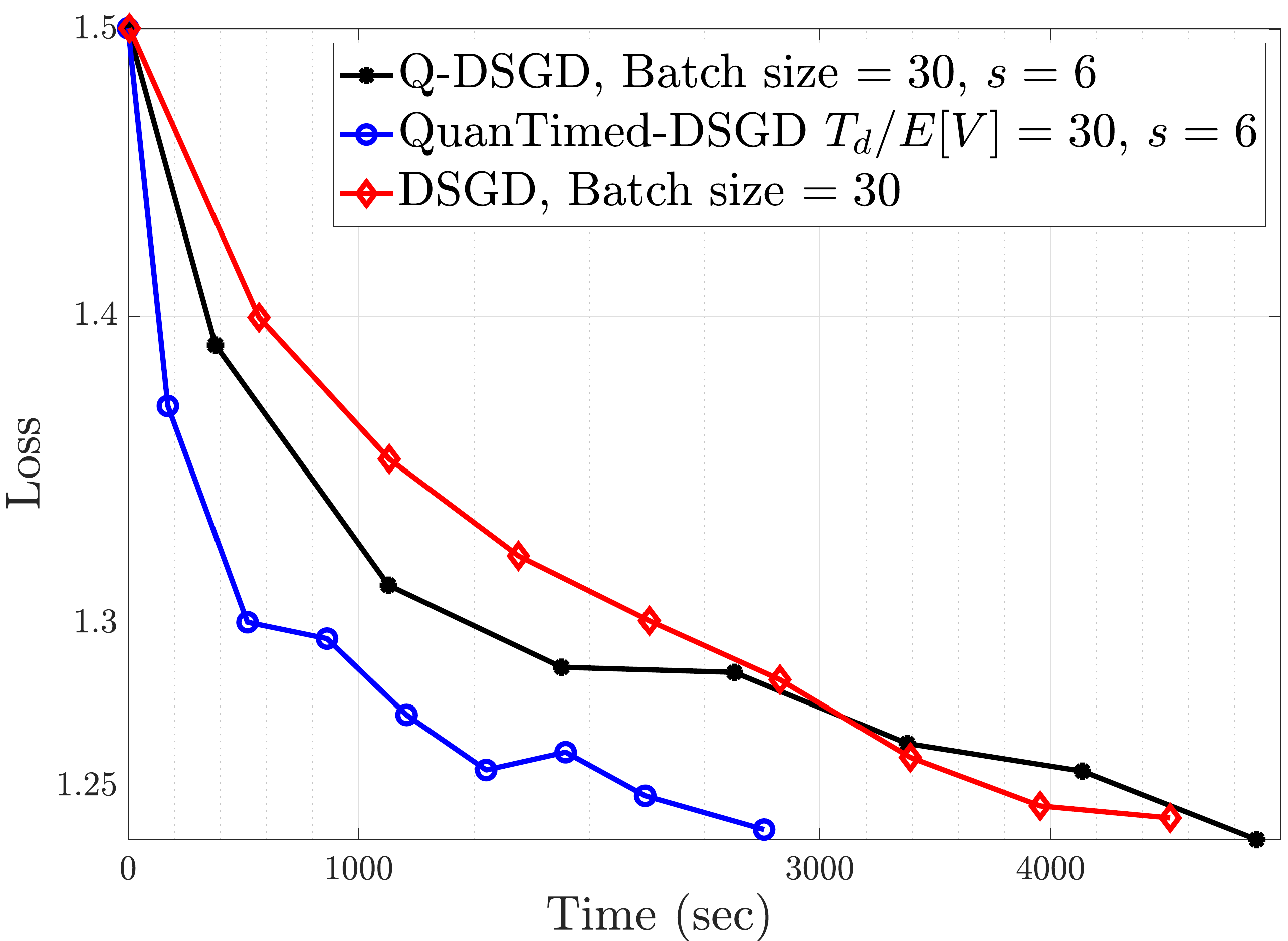}
    \caption{Comparison of \texttt{QuanTimed-DSGD} with Q-DSGD and DSGD for training a four layer neural network on CIFAR-10 dataset (left) and for training a one layer neural network on ImageNet dataset (right). }
    \label{fig:extra-sim}
\end{figure}

\subsection{Bounding the Stochastic Gradient Noises}\label{sec:stoch-g-noise}

In our analysis for both convex and non-convex scenarios, we need to have the noise of various stochastic gradient functions evaluated. Hence, let us start this section by the following lemma which bounds the variance of stochastic gradient functions under our customary Assumption \ref{assump-gr}.

\begin{lemma} \label{lemma:stoch-bound}
Assumption \ref{assump-gr} results in the followings for any $\bx \in \reals^p$ and $i \in [n]$:
\begin{enumerate}[label=(\roman*)]
    \item $\mbE [\gr f_i(\bx)] = \mbE [\gr f(\bx)] = \gr L(\bx)$
    \item $\mbE \left[ \norm{ \gr f_i(\bx) - \gr L(\bx)}^2 \right] \leq \frac{\gamma^2}{m}$
    \item $\mbE \left[ \norm{ \gr f(\bx) - \gr L(\bx)}^2 \right] \leq \frac{\gamma^2}{nm}$
    \item $\mbE \left[ \norm{ \gr f_i(\bx) - \gr f(\bx)}^2 \right] \leq \gamma_1^2 \coloneqq \gamma^2 \left( \frac{1}{m} + \frac{1}{nm} \right)$
    \item $\mbE \left[ \tNab f_i(\bx) \right] =  \gr L(\bx)$
    \item $\mbE \left[ \norm{ \tNab f_i(\bx) - \gr f_i(\bx)}^2 \right] \leq \gamma_2^2 \coloneqq 2 \gamma^2 \max{\left\{\frac{\mbE[1/V]}{T_d},  \frac{1}{m}\right\}}$
\end{enumerate}
\end{lemma}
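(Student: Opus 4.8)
The plan is to derive all six claims from Assumption~\ref{assump-gr} together with the fact that the $nm$ samples $\{\theta_i^j\}$ are i.i.d.\ draws from $\ccalP$, so that $\gr f_i(\bx) = \frac{1}{m}\sum_{j=1}^m \gr\ell(\bx,\theta_i^j)$ and $\gr f(\bx) = \frac{1}{nm}\sum_{i,j}\gr\ell(\bx,\theta_i^j)$ are empirical averages of independent unbiased estimates of $\gr L(\bx)$. Claims (i)--(iii) then follow directly: linearity of expectation gives $\mbE[\gr f_i(\bx)] = \mbE[\gr f(\bx)] = \gr L(\bx)$, and for the two variance bounds I would write each difference as a normalized sum of the $m$ (resp.\ $nm$) zero-mean independent terms $\gr\ell(\bx,\theta_i^j) - \gr L(\bx)$, each with squared-norm expectation at most $\gamma^2$ by Assumption~\ref{assump-gr}. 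Independence annihilates all cross terms, so $\mbE\norm{\gr f_i(\bx) - \gr L(\bx)}^2 = \frac{1}{m^2}\sum_{j}\mbE\norm{\gr\ell(\bx,\theta_i^j)-\gr L(\bx)}^2 \le \gamma^2/m$, and analogously $\gamma^2/(nm)$ for $\gr f - \gr L$ over the full $nm$ samples.

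For claim (iv) I would avoid the lossy inequality $\norm{a-b}^2 \le 2\norm a^2 + 2\norm b^2$, which would cost a spurious factor of two, and instead expand $\gr f_i - \gr f = (\gr f_i - \gr L) - (\gr f - \gr L)$ exactly. The one point to check is that the cross term $\mbE\langle \gr f_i - \gr L,\, \gr f - \gr L\rangle$ is nonnegative: writing $\gr f - \gr L = \frac1n\sum_k(\gr f_k - \gr L)$ and using that $\gr f_i$ and $\gr f_k$ are independent for $k\neq i$ (disjoint sample sets) with common mean $\gr L$, every $k\neq i$ term vanishes and the surviving $k=i$ term equals $\mbE\norm{\gr f_i - \gr L}^2 \ge 0$. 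Since this cross term enters with a minus sign, dropping it yields the upper bound $\mbE\norm{\gr f_i - \gr f}^2 \le \mbE\norm{\gr f_i - \gr L}^2 + \mbE\norm{\gr f - \gr L}^2 \le \gamma^2/m + \gamma^2/(nm) = \gamma_1^2$, exactly as claimed.

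Claims (v) and (vi) concern the stochastic gradient $\tNab f_i(\bx) = \frac{1}{|\ccalS_{i,t}|}\sum_{\theta\in\ccalS_{i,t}}\gr\ell(\bx,\theta)$, and the right device is to condition on the (data-independent) random batch size $b = |\ccalS_{i,t}|$, which is governed only by the processing speed $V_{i,t}$. For any fixed $b\ge1$ the sampled gradient is an unbiased average (each drawn $\theta$ is uniform over $\ccalD_i$, whose points are i.i.d.\ from $\ccalP$), so $\mbE[\tNab f_i \mid b] = \gr f_i(\bx)$ conditionally on the data, whose unconditional mean is $\gr L(\bx)$ by (i); the tower rule then gives (v), where I would note that the deadline regime $\underline{v}\,T_d\ge 1$ rules out the degenerate $b=0$ case. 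For (vi) I would invoke the standard mini-batch variance bound conditioned on $b$, namely $\mbE[\norm{\tNab f_i - \gr f_i}^2 \mid b] \le \sigma_g^2/b \le \gamma^2/b$, where $\sigma_g^2 = \mbE\norm{\gr\ell(\bx,\theta)-\gr L(\bx)}^2\le\gamma^2$ is the per-sample variance (this holds for sampling with or without replacement, the latter only being smaller).

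The main obstacle, and the step demanding care, is converting $\mbE[1/b]$ into the stated $\max$ expression. Because the algorithm caps the effective batch at the full local set, $b = \min(V_{i,t}T_d,\, m)$, so $1/b = \max\{1/(V_{i,t}T_d),\,1/m\}$; bounding the maximum of two nonnegative quantities by their sum gives $\mbE[1/b] \le \mbE[1/V]/T_d + 1/m \le 2\max\{\mbE[1/V]/T_d,\,1/m\}$, and multiplying by $\gamma^2$ produces exactly $\gamma_2^2$. This is precisely where both the factor $2$ and the maximum in the definition of $\gamma_2^2$ originate, and it is the one place where the interplay between the random deadline-based batch size $V_{i,t}T_d$ and the finite local sample budget $m$ must be handled carefully.
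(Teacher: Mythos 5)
Your proposal is correct on all six items, but for the key item (vi) it takes a genuinely different route from the paper. The paper pivots around the population gradient: it first bounds $\mbE \left[ \norm{\tNab f_i(\bx) - \gr L(\bx)}^2 \right] \leq \gamma^2\, \mbE[1/|\ccalS_i|] = \gamma^2\, \mbE[1/V]/T_d$ by conditioning on the realized batch size, and then invokes the decomposition $\mbE \left[ \norm{\tNab f_i - \gr f_i}^2 \right] = \mbE \left[ \norm{\tNab f_i - \gr L}^2 \right] + \mbE \left[ \norm{\gr f_i - \gr L}^2 \right] \leq \gamma^2 \left( \mbE[1/V]/T_d + 1/m \right) \leq 2\gamma^2 \max\left\{ \mbE[1/V]/T_d, 1/m \right\}$; there the $1/m$ term is statistical (deviation of the full local gradient from $\gr L$) and the max enters only by bounding a sum by twice its maximum. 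You instead bound the conditional variance around $\gr f_i$ directly, $\mbE \left[ \norm{\tNab f_i - \gr f_i}^2 \,\middle|\, b \right] \leq \gamma^2/b$, and extract the max from the algorithmic cap $b = \min(V T_d,\, m)$, i.e. $1/b = \max\{1/(V T_d),\, 1/m\}$, so in your argument the $1/m$ has an algorithmic rather than statistical origin. Both derivations land on the same constant $\gamma_2^2$, but yours buys two things: it handles the cap at $m$ explicitly, whereas the paper's intermediate step $\mbE[1/|\ccalS_i|] = \mbE[1/V]/T_d$ silently ignores that $|\ccalS_i|$ cannot exceed $m$; and it avoids the paper's Pythagorean step, which as written is not actually an equality---conditioning on the data shows the cross term equals $\mbE \left[ \norm{\gr f_i - \gr L}^2 \right]$ and enters with a minus sign, so the paper's identity overstates the left side and survives only because the error is in the conservative direction. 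Two further points in your favor: for (iv) you compute the cross term $\tfrac{1}{n}\mbE \left[ \norm{\gr f_i - \gr L}^2 \right] \geq 0$ exactly instead of paying a factor of two, which supplies the argument the paper compresses into ``immediate''; and your remark that the regime $\underline{v}\, T_d \geq 1$ excludes the degenerate empty-batch case $\tNab f_i = 0$ addresses a corner the paper does not mention.
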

\begin{proof}
The first five expressions (i)-(v) in the lemma are immediate results of Assumption \ref{assump-gr} together with the fact that the noise of the stochastic gradient scales down with the sample size. To prove (vi), let $\ccalS_{i}$ denote the sample set for which node $i$ has computed the gradients. We have
\begin{align}
    \mbE \left[ \norm{ \tNab f_i(\bx) - \gr L(\bx)}^2 \right]
    & = 
    \sum_{b} \Pr{|\ccalS_{i}| = b} \mbE \norm{ \frac{1}{b} \sum_{\theta\in\ccalS_{i}} \gr \ell ( \bx; \theta) - \gr L(\bx)}^2 \nonumber \\
    & \leq
    \gamma^2 \sum_{b} \Pr{|\ccalS_{i}|=b} \frac{1}{b} \nonumber \\
    & =
    \gamma^2 \mbE[1/|\ccalS_{i}|]  \nonumber \\
    & =
    \gamma^2 \frac{\mbE[1/V]}{T_d},  \nonumber
\end{align}
and therefore
\begin{align}
    \mbE \left[ \norm{ \tNab f_i(\bx) - \gr f_i(\bx)}^2 \right]
    & = 
    \mbE \left[ \norm{ \tNab f_i(\bx) - \gr L(\bx)}^2 \right] 
    + \mbE \left[ \norm{ \gr f_i(\bx) - \gr L(\bx)}^2 \right]  \nonumber \\
    & \leq 
    \gamma^2 \left( \frac{\mbE[1/V]}{T_d} + \frac{1}{m} \right)  \nonumber \\
    & \leq
    2 \gamma^2 \max{\left\{\frac{\mbE[1/V]}{T_d},  \frac{1}{m}\right\}}. \nonumber 
\end{align}
\end{proof}

\subsection{Proof of Theorem \ref{thm1}}\label{sec:cnvx-proof}

To prove Theorem \ref{thm1}, we first establish two Lemmas \ref{lemma1} and \ref{lemma:convex2} and then easily conclude the theorem from the two results.

The main problem is to minimize the global objective defined in (\ref{global_cost}). We introduce the following optimization problem which is equivalent the main problem:
\begin{equation}\label{eq:main_2}
    \begin{aligned}
        \min_{\bx_1,\dots,\bx_n \in \mathbb{R}^p} \quad & F(\bx) \coloneqq \frac{1}{n} \sum_{i=1}^{n} f_i(\bx_i) \\
        \textrm{s.t.} \quad & \bx_i=\bx_j, \qquad  \text{for all}\ i, \ j\in \mathcal{N}_i,
    \end{aligned}
\end{equation}
where the vecor $\bx = [\bx_1;\cdots;\bx_n] \in \reals^{np}$ denotes the concatenation of all the local models. Clearly, $\tildbx^* \coloneqq [\bx^*; \cdots; \bx^*]$ is the solution to \eqref{eq:main_2}.  Using Assumption \ref{assump-W}, the constraint in the alternative problem (\ref{eq:main_2}) can be stated as $(\bI - \bW)^{1/2} \bx = 0$. Inspired by this fact, we define the following penalty function for every $\al$:
\begin{equation} \label{eq:h}
    h_{\al}(\bx) = \frac{1}{2} \bx^\top (\bI - \bW) \bx + \al  n F(\bx),
\end{equation}
and denote by $\bx^*_{\al}$ the (unique) minimizer of $h_{\al}(\bx)$. That is,
\begin{equation} \label{eq:hmin}
    \bx^*_{\al} = \operatorname*{\text{arg}\,\text{min}}_{\bx \in \mathbb{R}^{np}} h_{\al}(\bx) = \operatorname*{\text{arg}\,\text{min}}_{\bx \in \mathbb{R}^{np}} \frac{1}{2} \bx^\top \big(\bI - \bW \big) \bx + \al  n F(\bx).
\end{equation}

Next lemma characterizes the deviation of the models generated by the \texttt{QuanTimed-DSGD} method at iteration $T$, that is $\bx_T=[\bx_{1,T}; \cdots; \bx_{n,T}]$ from the optimizer of the penalty function, i.e. $\bx^*_{\al}$.

\begin{lemma} \label{lemma1}
Suppose Assumptions \ref{assump-W}--\ref{assump-convex} hold. Then, the expected deviation of the output of \texttt{QuanTimed-DSGD} from the solution to Problem \eqref{eq:h} is upper bounded by
\begin{equation}\label{eq:lemma1}
    \mbE \left[\norm{\bx_T - \bx^*_{\al}}^2 \right] 
    \leq
    \ccalO \left( \frac{n\sigma^2}{\mu}  \norm{W - W_D}^2 \right) \frac{1}{T^{\delta}} 
    +
    \ccalO \left( \frac{n\gamma^2}{\mu} \left( \frac{\mbE[1/V]}{T_d} + \frac{1}{m} \right) \right) \frac{1}{T^{2\delta}} ,
\end{equation}
for $\eps = T^{-3\delta/2}$, $\al = T^{-\delta/2}$, any $\delta \in (0,1/2)$ and $T \geq T^{\mathsf{c}}_{\mathsf{min1}}$, where 
\begin{equation}\label{eq:T-min-c}
    T^{\mathsf{c}}_{\mathsf{min1}} \coloneqq \text{max} \left\{ \left\lceil \left( \frac{(2 + K)^2}{\mu} \right)^{\frac{1}{\delta}} \right\rceil, \left\lceil e^{e^{\frac{1}{1-2\delta}}} \right\rceil,  \left\lceil  \mu^{\frac{1}{2\delta}} \right\rceil  \right\}.
\end{equation}
 \end{lemma}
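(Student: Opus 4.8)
The plan is to recognize that, once the \texttt{QuanTimed-DSGD} update \eqref{update_of_the_alg} is stacked into $\reals^{np}$, it is exactly a perturbed (stochastic) gradient step on the penalty function $h_\al$ of \eqref{eq:h}, and then to run the textbook strongly-convex contraction argument for stochastic gradient descent around the minimizer $\bx^*_\al$.

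First I would stack the per-node recursion into $\bx_t = [\bx_{1,t};\cdots;\bx_{n,t}]$. Writing $\bW_D$ for the diagonal of $\bW$ and $\mathbf{e}_t = Q(\bx_t) - \bx_t$ for the quantization error, and using that each node keeps its \emph{own} model un-quantized so that only the off-diagonal block $\bW - \bW_D$ multiplies $\mathbf{e}_t$, the update collapses to
\[
\bx_{t+1} = \bx_t - \eps\big[(\bI - \bW)\bx_t + \al\,\widetilde{\gr}F(\bx_t)\big] + \eps(\bW - \bW_D)\mathbf{e}_t .
\]
Since $\gr h_\al(\bx) = (\bI - \bW)\bx + \al[\gr f_1(\bx_1);\cdots;\gr f_n(\bx_n)]$, this is precisely $\bx_{t+1} = \bx_t - \eps\,\gr h_\al(\bx_t) + \eps\,\boldsymbol{\xi}_t$ with perturbation $\boldsymbol{\xi}_t = (\bW - \bW_D)\mathbf{e}_t - \al(\widetilde{\gr}F(\bx_t) - \gr F_n(\bx_t))$, where $\gr F_n$ stacks the exact local gradients $\gr f_i$. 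Conditioning on $\bx_t$ and using that the quantization noise is mean-zero (Assumption~\ref{assump-Q}), the stochastic-gradient noise is an unbiased estimate of $\gr f_i$ with variance bounded by $\gamma_2^2$ per node (Lemma~\ref{lemma:stoch-bound}), and the two sources are independent, I get $\mbE[\boldsymbol{\xi}_t\mid\bx_t]=0$ and the variance bound $\mbE[\norm{\boldsymbol{\xi}_t}^2\mid\bx_t] \le \norm{\bW-\bW_D}^2 n\sigma^2 + \al^2 n\gamma_2^2$.

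Next I would record the geometry of $h_\al$: the Hessian of the consensus term is the PSD matrix $\bI - \bW$ with $\norm{\bI - \bW}\le 2$ (Assumption~\ref{assump-W}), while $\al n F$ is $\al\mu$-strongly convex and $\al K$-smooth by Assumptions~\ref{assump-smooth} and~\ref{assump-convex}. Hence $h_\al$ is $\al\mu$-strongly convex and $L$-smooth with $L \le 2 + \al K \le 2 + K$. Expanding $\norm{\bx_{t+1}-\bx^*_\al}^2$, taking the conditional expectation, and applying $\langle\gr h_\al(\bx_t),\bx_t-\bx^*_\al\rangle \ge \al\mu\norm{\bx_t-\bx^*_\al}^2$ together with $\norm{\gr h_\al(\bx_t)} \le L\norm{\bx_t-\bx^*_\al}$ yields
\[
\mbE\norm{\bx_{t+1}-\bx^*_\al}^2 \le (1 - 2\eps\al\mu + \eps^2 L^2)\,\mbE\norm{\bx_t-\bx^*_\al}^2 + \eps^2\big(\norm{\bW-\bW_D}^2 n\sigma^2 + \al^2 n\gamma_2^2\big).
\]
Imposing $\eps L^2 \le \al\mu$ collapses the contraction factor to $(1-\eps\al\mu)$; with $\eps = T^{-3\delta/2}$ and $\al = T^{-\delta/2}$ this reads $(2+K)^2 \le \mu T^{\delta}$, i.e.\ $T \ge ((2+K)^2/\mu)^{1/\delta}$, which is the first term of $T^{\mathsf{c}}_{\mathsf{min1}}$, while requiring the factor to stay in $(0,1)$ gives $\eps\al\mu\le 1$, i.e.\ $T \ge \mu^{1/(2\delta)}$, the third term.

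Finally I would unroll the geometric recursion to obtain
\[
\mbE\norm{\bx_T-\bx^*_\al}^2 \le (1-\eps\al\mu)^T\norm{\bx_0-\bx^*_\al}^2 + \frac{\eps}{\al\mu}\big(\norm{\bW-\bW_D}^2 n\sigma^2 + \al^2 n\gamma_2^2\big),
\]
and substitute the step-sizes: the two steady-state terms become $\ccalO(n\sigma^2\norm{\bW-\bW_D}^2/\mu)\,T^{-\delta}$ and $\ccalO((n\gamma^2/\mu)(\mbE[1/V]/T_d + 1/m))\,T^{-2\delta}$, matching \eqref{eq:lemma1}. The main obstacle I anticipate is the bookkeeping behind $T^{\mathsf{c}}_{\mathsf{min1}}$: beyond the two step-size conditions above, one must bound the initial gap via strong convexity as $\tfrac{\al\mu}{2}\norm{\bx_0-\bx^*_\al}^2 \le h_\al(0)-h_\al(\bx^*_\al) \le \al\sum_i(f_i(0)-f_i^*)$, so that $\norm{\bx_0-\bx^*_\al}^2 \le D^2/(\mu K)$ is a fixed constant, and then show that for $T$ past the double-exponential threshold $\lceil e^{e^{1/(1-2\delta)}}\rceil$ the transient $(1-\eps\al\mu)^T \le \exp(-\mu T^{1-2\delta})$ (super-polynomially small since $\delta<1/2$) is dominated by the $T^{-\delta}$ steady-state term. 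This last domination is exactly where letting $\delta\to 1/2$ forces $T^{\mathsf{c}}_{\mathsf{min1}}$ to blow up.
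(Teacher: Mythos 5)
Your proposal is correct and follows essentially the same route as the paper: both recast the stacked \texttt{QuanTimed-DSGD} iteration as an unbiased stochastic gradient step on the penalty function $h_{\al}$, establish the contraction $(1-2\eps\al\mu+\eps^2 L^2)$ via strong convexity and smoothness of $h_{\al}$ with the same noise bound $n\sigma^2\norm{W-W_D}^2+\al^2 n\gamma_2^2$, impose the identical step-size conditions that produce the three terms of $T^{\mathsf{c}}_{\mathsf{min1}}$, and unroll the geometric recursion (the paper delegates this last step to its auxiliary Lemma \ref{lemma:e_t-itr}, while you do it inline and additionally make explicit the bound $\norm{\bx_0-\bx^*_{\al}}^2\leq D^2/(\mu K)$ on the transient, which the paper leaves implicit). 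No gaps.
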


\begin{proof}[Proof of Lemma \ref{lemma1}]
First note that the gradient of the penalty function $h_{\al}$ defined in \eqref{eq:h} is as follows:
\begin{equation}\label{eq:h-gr}
    {\gr}h_{\al}(\bx_t) = \left(\bI - \bW \right) \bx_t + \al n {\gr} F(\bx_{t}),
\end{equation}
where $\bx_t = [\bx_{1,t}; \cdots; \bx_{n,t}]$ denotes the concatenation of models at iteration $t$. Now consider the following stochastic gradient function for $h_{\al}$:
\begin{equation}\label{eq:htilde}
    \widetilde{\gr}h_{\al}(\bx_t) 
    =
    \left(\bW_{D} - \bW\right) \bz_{t} + \left(\bI - \bW_D \right) \bx_t + \al n \widetilde{\gr} F(\bx_{t}), 
\end{equation}
where 
\begin{align}
    \tNab F(\bx_t) &= \left[ \frac{1}{n} \tNab f_1(\bx_{1,t}); \cdots; \frac{1}{n} \tNab f_n(\bx_{n,t}) \right].  \nonumber 
\end{align}
We let $\cF^t$ denote a sigma algebra that measures the history of the system up until time $t$. According to Assumptions \ref{assump-Q} and \ref{assump-gr}, the stochastic gradient defined above is unbiased, that is,
\begin{align}
    \mbE \left[ \widetilde{\gr}h_{\al}(\bx_t) \vert \cF^t \right] 
    & =
    \left(\bW_{D} - \bW\right) \mbE \left[ \bz_{t} \vert \cF^t \right]
    +
    \left(\bI - \bW_D \right) \bx_t
    +
    \al n \mbE \left[ \widetilde{\gr} F(\bx_{t}) \vert \cF^t \right]  \nonumber \\
    & =
    \left(\bI - \bW \right) \bx_t + \al n {\gr} F(\bx_{t})  \nonumber \\
    & =
    \gr h_{\al}(\bx_t). \nonumber 
\end{align}
We can also write the update rule of \texttt{QuanTimed-DSGD} method as follows:
\begin{align} \label{eq:rule-matrix}
    \bx_{t+1} 
    & =
    \bx_{t} - \eps \left( \left(\bW_{D} - \bW \right) \bz_{t} + \left(\bI - \bW_D \right) \bx_t + \al n \widetilde{\gr} F(\bx_{t}) \right)  \nonumber \\
    & =
    \bx_{t} - \eps \widetilde{\gr} h_{\al}(\bx_t),
\end{align}
which also represents an iteration of the Stochastic Gradient Descent (SGD) algorithm with step-size $\eps$ in order to minimize the penalty function $h_{\al}(\bx)$ over $\bx \in \reals^{np}$. We can bound the deviation of the iteration generated by \texttt{QuanTimed-DSGD} from the optimizer $\bx^*_{\al}$ as follows:
\begin{align} 
    \mbE \left[\norm{\bx_{t+1} - \bx^*_{\al}}^2 | \cF^t \right] &= \mbE \left[\norm{\bx_{t} - \eps \widetilde{\gr}h_{\al}(\bx_t) - \bx^*_{\al}}^2 | \cF^t \right]  \nonumber \\
    & =
    \norm{\bx_{t} - \bx^*_{\al}}^2 
    - 2 \eps \left \langle \bx_{t} - \bx^*_{\al}, \mbE \left[ \widetilde{\gr}h_{\al}(\bx_t)| \cF^t \right] \right\rangle  \nonumber \\
    & \quad +
    \eps^2 \mbE \left[\norm{\widetilde{\gr}h_{\al}(\bx_t)}^2 | \cF^t \right]  \nonumber  \\
    & =
    \norm{\bx_{t} - \bx^*_{\al}}^2 
    - 2 \eps \left \langle \bx_{t} - \bx^*_{\al},  {\gr}h_{\al}(\bx_t) \right\rangle  \nonumber \\
    & \quad +
    \eps^2 \mbE \left[\norm{\widetilde{\gr}h_{\al}(\bx_t)}^2 | \cF^t \right]  \nonumber  \\
    & \leq \left(1-2 \mu_{\al} \eps \right)\norm{\bx_{t} - \bx^*_{\al}}^2 
    + 
    \eps^2 \mbE \left[\norm{\widetilde{\gr}h_{\al}(\bx_t)}^2 | \cF^t \right], \label{eq:bound-h-sgd}
\end{align}
where we used the fact that the penalty function $h_{\al}$ is strongly convex with parameter $\mu_{\al} \coloneqq \al \mu$. Moreover, we can bound the second term in RHS of \eqref{eq:bound-h-sgd} as follows:
\begin{align} 
    & \quad 
    \mbE  \left[ \norm{\widetilde{\gr}h_{\al}(\bx_t)}^2 \vert \cF^t \right]  \nonumber \\
    & = 
    \mbE \left[ \norm{ \left(\bW_{D} - \bW \right) \bz_{t} + \left(\bI - \bW_D \right) \bx_t + \al n \widetilde{\gr} F(\bx_{t})}^2 \vert \cF^t \right]  \nonumber \\
    & = 
    \mbE \left[ \norm{ \left( \bI - \bW \right) \bx_{t} + \al n \gr F(\bx_{t}) + \left(\bW_D - \bW \right) ( \bz_{t} -\bx_t) + \al n \widetilde{\gr} F(\bx_{t}) -  \al n \gr F(\bx_{t}) }^2 \vert \cF^t \right]  \nonumber \\
    & = \norm{{\gr}h_{\al}(\bx_t)}^2 + \mbE \left[ \norm{ \left(\bW_D - \bW \right) ( \bz_{t} -\bx_t)}^2  \vert \cF^t \right] + \al^2 n^2 \mbE \left[ \norm{ \widetilde{\gr} F(\bx_t) - \gr F(\bx_t)}^2 \vert \cF^t \right]   \nonumber \\
    &\leq K^2_{\al} \norm{\bx_{t} - \bx^*_{\al}}^2 + n \sigma^2 \norm{W-W_D}^2 + \al^2 n \gamma_2^2. \label{eq:bound-h-sg}
\end{align}
To derive \eqref{eq:bound-h-sg}, we used the facts that $h_{\al}$ is smooth with parameter $K_{\al} \coloneqq 1 - \lambda_n(W) + \al K$; the quantizer is unbiased with variance $\leq \sigma^2$ (Assumption \ref{assump-Q}); stochastic gradients of the loss function are unbiased and variance-bounded (Assumption \ref{assump-gr} and Lemma \ref{lemma:stoch-bound}). Plugging \eqref{eq:bound-h-sg} in \eqref{eq:bound-h-sgd} yields
\begin{align}
    \mbE \left[\norm{\bx_{t+1} - \bx^*_{\al}}^2 | \cF^t\right] & \leq
    \left(1-2 \mu_{\al} \eps 
    +
    \eps^2 K^2_{\al} \right)\norm{\bx_{t} - \bx^*_{\al}}^2
    +
    \eps^2 { n}\sigma^2 \norm{W-W_D}^2
    +
    \al^2 \eps^2 n \gamma_2^2. \label{eq:bound3}
\end{align}
To ease the notation, let $e_t := \mbE [ \, \norm{\bx_{t} - \bx^*_{\al}}^2 ]$ denote the expected  deviation of the models at iteration $t$ i.e. $\bx_{t}$ from the optimizer $\bx^*_{\al}$ with respect to all the randomnesses from iteration $t=0$. Therefore,
\begin{align}
    e_{t+1} 
    & \leq
    \left(1-2 \mu_{\al} \eps + \eps^2 K^2_{\al} \right)e_t 
    +
    \eps^2  n\sigma^2 \norm{W-W_D}^2
    +
    \al^2 \eps^2 \gamma_2^2  \nonumber \\
    & = 
    \left(1-\eps (2 \mu_{\al}  - \eps K^2_{\al}) \right)e_t +
    \eps^2 {n} \sigma^2 \norm{W-W_D}^2
    +
    \al^2 \eps^2 n \gamma_2^2. \label{eq:e_t+1-bound}
\end{align}
For any $T \geq T^{\mathsf{c}}_{\mathsf{min1}}$ and the proposed pick $\eps = T^{-3 \delta/2}$, we have
\begin{align}
    T^{\delta} \geq \left( T^{\mathsf{c}}_{\mathsf{min1}} \right)^{\delta} \geq  \frac{(2 + K)^2}{\mu}, \nonumber 
\end{align}
and therefore
\begin{align}
    \eps & = 
    \frac{1}{T^{3\delta/2}}\nonumber\\
    & \leq
    \frac{\mu }{(2 + K)^2} \cdot \frac{1}{T^{\delta/2}}\nonumber\\
    & \leq
    \frac{\mu_{\al}}{(2 + \al K )^2 }\nonumber\\
    & \leq
    \frac{\mu_{\al}}{K^2_{\al}}. \nonumber 
\end{align}
Hence, we can further bound \eqref{eq:e_t+1-bound} as follows:
\begin{align}
    e_{t+1} 
    & \leq
    \left( 1 - \eps \left(2 \mu_{\al}  - \eps K^2_{\al} \right) \right)e_t + \eps^2 {  n}\sigma^2 \norm{W-W_D}^2 + \al^2 \eps^2 n \gamma_2^2  \nonumber \\
    &\leq \left(1- \mu_{\al} \eps \right)e_t + \eps^2 {  n}\sigma^2\norm{W-W_D}^2 +
    \al^2 \eps^2 n \gamma_2^2 \nonumber \\
    &= \left(1 - \frac{\mu}{T^{2\delta}} \right) e_t + \frac{n \sigma^2 \norm{W-W_D}^2}{T^{3\delta}} + \frac{n \gamma_2^2}{T^{4\delta}}. \nonumber 
\end{align}
Now, we let $(a,b,c) = (\mu, n \sigma^2 \norm{W-W_D}^2, n \gamma_2^2)$ and employ Lemma \ref{lemma:e_t-itr} which yields
\begin{align}
    e_T
    & =
    \mbE \left[ \norm{\bx_T - \bx^*_{\al}}^2 \right]  \nonumber\\
    & \leq
    \ccalO \left(\frac{b/a}{T^{\delta}}\right)
    +
    \ccalO \left(\frac{c/a}{T^{2\delta}}\right)  \nonumber \\
    & =
    \ccalO \left( \frac{n\sigma^2}{\mu}  \norm{W - W_D}^2 \frac{1}{T^{\delta}} \right)
    +
    \ccalO \left( \frac{n\gamma^2}{\mu} \left( \frac{\mbE[1/V]}{T_d} + \frac{1}{m} \right) \frac{1}{T^{2\delta}} \right), \nonumber 
\end{align}
and the proof of Lemma \ref{lemma1} is concluded.
\end{proof}

Now we also bound the deviation of the optimizers of the penalty function and the main loss function, that is $\bx^*_{\al}$ and $\tildbx^*.$

\begin{lemma}\label{lemma:convex2}
Suppose Assumptions \ref{assump-W}, \ref{assump-smooth}--\ref{assump-convex} hold. Then the difference between the optimal solutions to \eqref{eq:main_2} and its penalized version \eqref{eq:hmin} is bounded above by
\begin{equation}  
\norm{\bx^*_{\al} - \tildbx^*} \leq  \cO \left( \frac{\sqrt{2n} c_2 D \left(3+ 2K/\mu\right)}{ 1-\beta}  \frac{1}{T^{\delta/2}} \right), \nonumber 
\end{equation}
for $\al = T^{-\delta/2}$, any $\delta \in (0,1/2)$ and $T \geq T^{\mathsf{c}}_{\mathsf{min2}}$ where
\begin{equation} 
    T^{\mathsf{c}}_{\mathsf{min2}}
    \coloneqq 
    \text{max} \left\{ \left\lceil \left(\frac{K}{1+\lambda_n(W)}\right)^{\frac{2}{\delta}} \right\rceil , \left\lceil (\mu + K)^{\frac{2}{\delta}} \right\rceil  \right\}. \nonumber 
\end{equation}
\end{lemma}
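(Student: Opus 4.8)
The plan is to exploit the first-order optimality condition of the penalty minimizer $\bx^*_{\al}$ and to split the target distance into a \emph{disagreement} (consensus) component and an \emph{averaging bias} component, each of which I will show is of order $\al = T^{-\delta/2}$. First I would write the stationarity condition for \eqref{eq:hmin}, namely
\[
    \gr h_{\al}(\bx^*_{\al}) = (\bI - \bW)\bx^*_{\al} + \al n\, \gr F(\bx^*_{\al}) = 0 ,
\]
and extract two consequences. Projecting onto the consensus subspace (using $\bone^\top(\bI-\bW)=0$ from Assumption \ref{assump-W}) kills the quadratic term and shows that the averaged local gradient vanishes, $\tfrac1n\sum_{i=1}^n \gr f_i(\bx^*_{\al,i}) = 0$, where $\bx^*_{\al,i}$ denotes the $i$-th block of $\bx^*_{\al}$. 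Projecting instead onto the orthogonal complement (the range of $\bI-\bW$, on which the smallest eigenvalue is $1-\lambda_2(W)\geq 1-\beta$) gives, with $\widehat\bx_{\al}$ the block-wise average of $\bx^*_{\al}$ stacked $n$ times,
\[
    \norm{\bx^*_{\al} - \widehat\bx_{\al}} \leq \frac{1}{1-\beta}\norm{(\bI-\bW)\bx^*_{\al}} = \frac{\al}{1-\beta}\Bigl(\sum_{i=1}^n \norm{\gr f_i(\bx^*_{\al,i})}^2\Bigr)^{1/2}.
\]

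Second, I would control the aggregate gradient norm by a simple comparison. Because $\bx^*_{\al}$ minimizes $h_{\al}$ and the all-zero vector (the initialization of the algorithm) is feasible, $h_{\al}(\bx^*_{\al}) \leq h_{\al}(0)$; dropping the nonnegative quadratic term in \eqref{eq:h} yields $\sum_i f_i(\bx^*_{\al,i}) \leq \sum_i f_i(0)$. Combining this with the smoothness bound $\norm{\gr f_i(\bx)}^2 \leq 2K\bigl(f_i(\bx) - f^*_i\bigr)$ (Assumption \ref{assump-smooth}) gives $\sum_i \norm{\gr f_i(\bx^*_{\al,i})}^2 \leq 2K\sum_i\bigl(f_i(0)-f^*_i\bigr) = D^2$, so the disagreement is controlled by $\norm{\bx^*_{\al}-\widehat\bx_{\al}} \leq \al D/(1-\beta)$.

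Third, I would bound the averaging bias $\norm{\widehat\bx_{\al}-\tildbx^*} = \sqrt n\,\norm{\overline\bx_{\al}-\bx^*}$, where $\overline\bx_{\al}$ is the average block. Since $\gr f(\bx^*)=0$, $\mu$-strong convexity of $f$ gives $\mu\norm{\overline\bx_{\al}-\bx^*}^2 \leq \langle \gr f(\overline\bx_{\al}),\,\overline\bx_{\al}-\bx^*\rangle$, while the first consequence lets me write $\gr f(\overline\bx_{\al}) = \tfrac1n\sum_i\bigl(\gr f_i(\overline\bx_{\al}) - \gr f_i(\bx^*_{\al,i})\bigr)$, which by $K$-smoothness and Cauchy--Schwarz is at most $(K/\sqrt n)\norm{\bx^*_{\al}-\widehat\bx_{\al}}$. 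This yields $\norm{\overline\bx_{\al}-\bx^*} \leq (K/\mu)\norm{\bx^*_{\al}-\widehat\bx_{\al}}/\sqrt n$, i.e. an averaging bias of order $(K/\mu)\norm{\bx^*_{\al}-\widehat\bx_{\al}}$. A triangle inequality $\norm{\bx^*_{\al}-\tildbx^*} \leq \norm{\bx^*_{\al}-\widehat\bx_{\al}} + \norm{\widehat\bx_{\al}-\tildbx^*}$ then collects everything into $\cO\!\bigl((1+K/\mu)\,\al D/(1-\beta)\bigr)$, which matches the claimed $\cO\!\bigl(\sqrt{n}\,c_2 D(3+2K/\mu)\,T^{-\delta/2}/(1-\beta)\bigr)$ up to the stated explicit constants; the extra $\sqrt n$ and $c_2$ factors arise from the looser intermediate estimates one uses in place of the sharp projection step above.

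The smallness requirements $T\geq T^{\mathsf c}_{\mathsf{min2}}$, equivalently $\al \leq (1+\lambda_n(W))/K$ and $\al \leq 1/(\mu+K)$, are exactly what guarantees $\al$ is small enough for the penalty Hessian $(\bI-\bW)+\al n\,\gr^2 F$ to remain well-conditioned and for the linear-in-$\al$ estimates above to be the dominant terms. The step I expect to be the main obstacle is the second one: obtaining a \emph{uniform} bound on the gradients $\gr f_i(\bx^*_{\al,i})$ at the a priori unknown penalty minimizer. The key that unlocks it is comparing $h_{\al}(\bx^*_{\al})$ to $h_{\al}(0)$ to bound the sum of local suboptimalities and only then converting to gradient norms via smoothness; a direct estimate instead runs into a circular dependence, since the gradient bound would itself be expressed through the very disagreement term $\norm{\bx^*_{\al}-\widehat\bx_{\al}}$ it is meant to control.
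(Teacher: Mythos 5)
Your proof is correct, and it takes a genuinely different route from the paper's. The paper proves this lemma indirectly: it introduces the auxiliary gradient-descent sequence $\bu_{t+1} = \bW \bu_t - \al n \gr F(\bu_t)$ (which is exactly the DGD iteration on $h_{\al}$ with unit step), uses linear convergence of GD on the strongly convex $h_{\al}$ to drive $\norm{\bu_T - \bx^*_{\al}}$ to an exponentially small quantity, invokes Corollary 9 of \citet{yuan2016convergence} to get $\norm{\bu_T - \tildbx^*} \leq \sqrt{n}\bigl(c_3^T\norm{\bx^*} + \tfrac{c_4}{\sqrt{1-c_3^2}} + \tfrac{\al D}{1-\beta}\bigr)$, and concludes by the triangle inequality through $\bu_T$. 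This is why the paper needs $T \geq T^{\mathsf{c}}_{\mathsf{min2}}$: the conditions $\al \leq (1+\lambda_n(W))/K$ and $\al \leq 1/(\mu+K)$ are exactly the step-size restrictions required for unit-step GD stability and for the cited DGD result, not for the conclusion itself. Your argument bypasses the auxiliary sequence and the external citation entirely: you read off the stationarity condition $(\bI-\bW)\bx^*_{\al} + \al n \gr F(\bx^*_{\al}) = 0$, split along the consensus subspace (the $\bone$-projection gives $\sum_i \gr f_i(\bx^*_{\al,i})=0$; the orthogonal part gives the disagreement bound $\norm{\bx^*_{\al}-\widehat\bx_{\al}} \leq \tfrac{\al}{1-\beta}\bigl(\sum_i\norm{\gr f_i(\bx^*_{\al,i})}^2\bigr)^{1/2}$ since $I-W$ has smallest eigenvalue $1-\lambda_2(W)\geq 1-\beta$ there), control the gradients by the comparison $h_{\al}(\bx^*_{\al}) \leq h_{\al}(0)$ plus the smoothness inequality $\norm{\gr f_i}^2 \leq 2K(f_i - f_i^*)$ (which yields precisely the paper's constant $D$), and finish with strong convexity of $f$ to bound the bias of the block average. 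What your approach buys: it is self-contained, needs no lower bound on $T$ at all (the estimate holds for every $\al>0$), and yields the tighter constant $(1+K/\mu)\al D/(1-\beta)$ versus the paper's $\sqrt{2n}\,D(3+2K/\mu)\al/(1-\beta)$; note $D$ already carries the $\sqrt{n}$ scaling internally, so your bound strictly improves on the stated one. What the paper's route buys is brevity given the citation, and a structural parallel between the penalty minimizer and the DGD fixed-point literature. The only caveat in your write-up is cosmetic: your closing speculation that the $T^{\mathsf{c}}_{\mathsf{min2}}$ condition keeps the "penalty Hessian well-conditioned" misattributes its role — in the paper it is purely an artifact of the auxiliary-sequence argument, and in your proof it can be dropped.
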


\begin{proof}[Proof of Lemma \ref{lemma:convex2}]
First, recall the penalty function minimization in (\ref{eq:hmin}). Following sequence is the update rule associated with this problem when the gradient descent method is applied to the objective function $h_{\al}$ with the unit step-size $\eta = 1$,
\begin{equation}\label{eq:gd}
\bu_{t+1} = \bu_{t} - \eta \gr h_{\al} (\bu_t)= \bW \bu_t - \al n \gr F(\bu_t).
\end{equation}
From analysis of GD for strongly convex objectives, the sequence $\{ \bu_{t} : t=0,1,\cdots\}$ defined above exponentially converges to the minimizer of $h_{\al}$, $\bx^*_{\al}$, provided that $1 = \eta \leq {2}/{K_{\al}}$. The latter condition is satisfied if we make $\al \leq (1 + \lambda_n(W))/{K}$. Therefore, 
\begin{align}
    \norm{\bu_{t} - \bx^*_{\al}}^2 &\leq (1- \mu_{\al})^{t}\norm{\bu_{0} - \bx^*_{\al}}^2  \nonumber \\
    & =
    (1- \al \mu)^{t}\norm{\bu_{0} - \bx^*_{\al}}^2. \nonumber 
\end{align}
If we take $\bu_{0}=0$, then (\ref{eq:gd}) implies 
\begin{align}
    \norm{\bu_{T} - \bx^*_{\al}}^2 &\leq   (1- \al \mu)^{T}  \norm{\bx^*_{\al}}^2\nonumber \\
    &\leq 2  (1- \al \mu)^{T}  \left( \norm{\tildbx^* - \bx^*_{\al}}^2 + \norm{\tildbx^*}^2 \right) \nonumber\\
    &= 2  (1- \al \mu)^{T}  \left( \norm{\tildbx^* - \bx^*_{\al}}^2 + n \norm{{\bx}^*}^2 \right) \label{eq:bounduT},
\end{align}
On the other hand, it can be shown (\citet{yuan2016convergence}) that if $\al \leq \text{min} \{ (1+\lambda_n(W))/K , 1/(\mu  + K) \}$, then the sequence $\{ \bu_{t} : t=0,1,\cdots\}$ defined in (\ref{eq:gd}) converges to the $\cO (\frac{\al}{1-\beta})$-neighborhood of the optima $\tildbx^*$, i.e.,
\begin{equation}\label{eq:gd3}
    \norm{\bu_{t} - \tildbx^*} \leq \cO\left(\frac{\al}{1-\beta} \right).
\end{equation}
If we take $\al=T^{-\delta/2}$, the condition $T \geq T_{\textsf{min-c2}}$ implies that $\al \leq \text{min} \{ (1+\lambda_n(W))/K , 1/(\mu  + K) \}$. Therefore, (\ref{eq:gd3}) yields
\begin{equation}\label{eq:gd4}
    \norm{\bu_{T} - \tildbx^*} \leq \cO \left(\frac{\al}{1-\beta} \right).
\end{equation}
More precisely, we have the following (See Corollary 9 in \citet{yuan2016convergence}):
\begin{equation} \label{eq:exactb}
    \norm{\bu_{T} - \tildbx^*} \leq \sqrt{n} \left(  c^T_3 \norm{{\bx}^*} + \frac{c_4}{\sqrt{1-c^2_3}} + \frac{\al D}{1-\beta} \right),
\end{equation}
where
\begin{equation}
    c^2_3 = 1 - \frac{1}{2} \cdot \frac{\mu K}{\mu + K}  \al , \nonumber 
\end{equation}
\begin{align}
	\frac{c_4}{\sqrt{1-c^2_3}} &= \frac{\al K D}{1-\beta} \sqrt{ 4 \left( \frac{\mu+K}{\mu K} \right)^2 - 2 \cdot \frac{\mu+K}{\mu K} \al } \nonumber \\
	& \leq \frac{2 \al  D }{  (1-\beta)} \left(1+K / \mu \right). \nonumber 
\end{align}
\begin{align}
	D^2 = 2 K  \sum_{i=1}^{n} \left( f_i(0) - f^*_i \right), \quad f^*_i =  \min_{\bx \in \mathbb{R}^p} f_i(\bx). \nonumber 
\end{align}
From (\ref{eq:exactb}) and (\ref{eq:gd4}), we have for $T \geq T_2$
\begin{align} \label{eq:boundT2}
   \norm{\bx^*_{\al} - \tildbx^*}^2 &= \norm{\bx^*_{\al} -\bu_{T}+\bu_{T}- \tildbx^*}^2 \nonumber\\
   &\leq 2\norm{\bx^*_{\al} -\bu_{T}}^2 + 2\norm{\bu_{T}- \tildbx^*}^2 \nonumber\\
   &\leq 4  (1- \al \mu)^{T}  \left( \norm{\tildbx^* - \bx^*_{\al}}^2 + n \norm{{\bx}^*}^2 \right) \nonumber\\
   & \quad + 2n \left(  \left( 1 - \frac{1}{2} \cdot \frac{\mu K}{\mu + K}  \al \right)^{T/2} \norm{{\bx}^*}  +  \frac{\al D}{1-\beta} \left(3+ 2K / \mu \right) \right)^2.
\end{align}
Note that for our pick $\al = T^{-\delta/2}$, we can write
\begin{align} 
	(1- \al \mu)^{T} \leq \exp\left( - T^{1-\delta/2} \right) & \eqqcolon e_1(T),  \nonumber\\
	\left( 1 - \frac{1}{2} \cdot \frac{\mu K}{\mu + K}  \al \right)^{T/2} \leq \exp \left( -\frac{1}{2} \cdot \frac{\mu K}{\mu + K} T^{1-\delta/2} \right) & \eqqcolon e_2(T). \nonumber 
\end{align}
Therefore, from (\ref{eq:boundT2}) we have
\begin{align} 
    & \quad \norm{\bx^*_{\al} - \tildbx^*}^2  \nonumber \\ 
    & \leq
    \frac{1}{\left( 1 - 4e_1(T)  \right)} 
    \Bigg\{ 4e_1(T) n \norm{{\bx}^*}^2  
    +
    2n e^2_2(T) \norm{{\bx}^*}^2
    +
    4n e_2(T) \norm{{\bx}^*} \frac{\al D}{1-\beta} \left(3+ 2K / \mu \right) \nonumber  \\
    & \quad +
    2n D^2 \left(3+ 2K / \mu \right)^2 \left( \frac{\al}{1-\beta}  \right)^2 \Bigg\}  \nonumber  \\
    &\leq  \frac{4n \left( 2 e_1(T) + e^2_2(T) \right)}{\left( 1 - 4e_1(T)  \right)} \frac{f_0 - f^*}{\mu} + \frac{4 \sqrt{2} n e_2(T)}{\left( 1 - 4e_1(T)  \right)} \sqrt{ \frac{f_0 - f^*}{\mu} } \frac{\al D}{1-\beta} \left(3+ 2K / \mu \right)  \nonumber \\
    & \quad +
    \frac{2n D^2 \left(3+ 2K / \mu \right)^2}{\left( 1 - 4e_1(T)  \right)} \left( \frac{\al}{1-\beta}  \right)^2,
\end{align}
where we used the fact that $\norm{{\bx}^*}^2 \leq 2(f_0 - f^*)/ \mu$ for $f_0 = f(0)$ and $f^* = \text{min}_{\bx \in \mathbb{R}^p} f(\bx) = f({\bx}^*)$. 
Given the fact that the terms $e_1(T)$ and $e_2(T)$ decay exponentially, i.e. $e_1(T)=o\left( \al^2 \right)$ and $e_2(T)=o\left( \al^2 \right)$,  we have 
\begin{align}
    \norm{\bx^*_{\al} - \tildbx^*} 
    &\leq 
    \cO \left( \sqrt{2n} D \left(3+ 2K/\mu\right)  \frac{\al}{1-\beta} \right) \nonumber \\
    &= 
    \cO \left( \frac{\sqrt{2n} D \left(3+ 2K/\mu\right)}{ 1-\beta}  \frac{1}{T^{\delta/2}} \right), \nonumber 
\end{align}
which concludes the claim in Lemma \ref{lemma:convex2}.
\end{proof}

Having proved Lemmas \ref{lemma1} and \ref{lemma:convex2}, we can now plug them in Theorem \ref{thm1} and write for $T \geq T^{\mathsf{c}}_{\mathsf{min}} \coloneqq \max\{ T^{\mathsf{c}}_{\mathsf{min1}}, T^{\mathsf{c}}_{\mathsf{min2}} \}$
\begin{align}
    \frac{1}{n} \sum_{i=1}^{n} \mbE \left[ \norm{\bx_{i,T} \!-\! {\bx}^*}^2 \right]
    & =
    \frac{1}{n} \mbE \left[\norm{\bx_{T} - \tildbx^*}^2\right]  \nonumber \\
    &= 
     \frac{1}{n}\mbE \left[\norm{\bx_T - \bx^*_{\al} + \bx^*_{\al} - \tildbx^*}^2\right] \nonumber  \nonumber \\
    &\leq
    \frac{2}{n} \mbE \left[\norm{\bx_T - \bx^*_{\al}}^2 \right] 
    +
    \frac{2}{n} \norm{\bx^*_{\al} - \tildbx^*}^2 \nonumber\\
    &\leq
    \ccalO\! \left( \frac{D^2 (K/\mu)^2 }{(1-\beta)^2} + \frac{\sigma^2}{\mu} \right) \!\frac{1}{T^{\delta}}  
    +
    \ccalO \!\left( \frac{\gamma^2}{\mu}  \max{\left\{\frac{\mbE[1/V]}{T_d},  \frac{1}{m}\right\}} \right)\! \frac{1}{T^{2\delta}}. \nonumber 
\end{align}

In the end, we state and proof Lemma \ref{lemma:e_t-itr} which we used its result earlier in the proof of Lemma \ref{lemma1}.

\begin{lemma} \label{lemma:e_t-itr}
Let the non-negative sequence $e_t$ satisfy the inequality
\begin{equation}\label{eq:recursive_expression}
    e_{t+1} 
    \leq
    \left(1-\frac{a}{T^{2\delta}}\right) e_t 
    +
    \frac{b}{T^{3\delta}}
    +
    \frac{c}{T^{3\delta}},
\end{equation}
for $t=0,1,2,\cdots$, positive constants $a,b,c$ and $\delta \in (0,1/2)$. Then, after 
\begin{equation}
    T \geq \max \left\{  \left\lceil e^{e^{\frac{1}{1-2\delta}}} \right\rceil,  \left\lceil  a^{\frac{1}{2\delta}} \right\rceil \right\} \nonumber 
\end{equation}
iterations, the iterate $e_T$ satisfies 
\begin{equation}\label{eq:e_T-convergence}
    e_T 
    \leq  
    \ccalO \left(\frac{b/a}{T^{\delta}}\right)
    +
    \ccalO \left(\frac{c/a}{T^{2\delta}}\right).
\end{equation}
\end{lemma}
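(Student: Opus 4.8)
The plan is to recognize \eqref{eq:recursive_expression} as a scalar linear recursion of the form $e_{t+1} \leq \rho\, e_t + C$, in which the contraction factor $\rho \coloneqq 1 - a/T^{2\delta}$ and the forcing term $C$ are both \emph{constant} in $t$ (they depend on $T$ but not on the running index). Here I read the forcing term as $C = b/T^{3\delta} + c/T^{4\delta}$; this is the form that both produces the two \emph{distinct} rates claimed in \eqref{eq:e_T-convergence} and coincides with the instance $(a,b,c)=(\mu,\, n\sigma^2\norm{W-W_D}^2,\, n\gamma_2^2)$ invoked in the proof of Lemma \ref{lemma1} (the literal $c/T^{3\delta}$ in the displayed hypothesis appears to be a typo). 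The first step merely fixes the regime of $T$: the requirement $T \geq \lceil a^{1/(2\delta)}\rceil$ guarantees $a/T^{2\delta}\leq 1$, hence $\rho \in [0,1)$, so the recursion is a genuine contraction and the geometric sum below converges.

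First I would unroll the recursion. Since $\rho$ and $C$ do not depend on the iteration index, iterating \eqref{eq:recursive_expression} from $t=0$ to $t=T-1$ gives
\begin{equation}
    e_T \leq \rho^{T} e_0 + C \sum_{k=0}^{T-1}\rho^{k} = \rho^{T} e_0 + C\,\frac{1-\rho^{T}}{1-\rho} \leq \rho^{T} e_0 + \frac{C}{1-\rho}. \nonumber
\end{equation}
The steady-state term is immediate: because $1-\rho = a/T^{2\delta}$,
\begin{equation}
    \frac{C}{1-\rho} = \frac{b/T^{3\delta} + c/T^{4\delta}}{a/T^{2\delta}} = \frac{b/a}{T^{\delta}} + \frac{c/a}{T^{2\delta}}, \nonumber
\end{equation}
which already yields the two polynomial rates $\ccalO\!\left(\tfrac{b/a}{T^{\delta}}\right)$ and $\ccalO\!\left(\tfrac{c/a}{T^{2\delta}}\right)$ of \eqref{eq:e_T-convergence}.

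It remains to show that the transient term $\rho^{T} e_0$ is negligible against these polynomial rates, and this is the only delicate point. Using $1-x\leq e^{-x}$,
\begin{equation}
    \rho^{T} = \Big(1 - \tfrac{a}{T^{2\delta}}\Big)^{T} \leq \exp\!\big(-a\,T^{1-2\delta}\big). \nonumber
\end{equation}
Since $\delta \in (0,1/2)$ we have $1-2\delta>0$, so this is a stretched-exponential decay in $T$, which eventually beats every polynomial. To make the comparison explicit I would argue that the second hypothesis $T \geq \lceil e^{e^{1/(1-2\delta)}}\rceil$ is exactly what forces $T^{1-2\delta}\geq \ln T$: writing $s=1-2\delta$ and $u=\ln T$, the condition $T\geq e^{e^{1/s}}$ reads $u\geq e^{1/s}$, whence $\ln u \geq 1/s$, and a short monotonicity argument on $g(u)=su-\ln u$ (whose minimum over $u>1/s$ one checks to be nonnegative at $u=e^{1/s}$) gives $su\geq \ln u$, i.e. $T^{s}\geq \ln T$. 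Consequently $\exp(-a\,T^{1-2\delta}) \leq \exp(-a\ln T) = T^{-a}$, which decays faster than $T^{-2\delta}$ for large $T$, so $\rho^{T} e_0$ is absorbed into the $\ccalO(\cdot)$ terms of \eqref{eq:e_T-convergence}. I expect the unrolling and the steady-state evaluation to be routine; the main obstacle is the bookkeeping for this last transient term, in particular verifying that the double-exponential threshold on $T$ is precisely what renders the stretched exponential dominated by the $T^{-2\delta}$ rate.
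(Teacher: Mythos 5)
Your route is the paper's own: you read the forcing term as $b/T^{3\delta}+c/T^{4\delta}$ (correctly --- the second $T^{3\delta}$ in the statement is a typo; the paper's proof of this lemma and its invocation inside the proof of Lemma~\ref{lemma1}, with $(a,b,c)=(\mu,\,n\sigma^2\|W-W_D\|^2,\,n\gamma_2^2)$, both use $c/T^{4\delta}$), unroll the recursion, bound the geometric sum by $1/(1-\rho)=T^{2\delta}/a$ to obtain the two polynomial rates, and control the transient via $(1-a/T^{2\delta})^T\le\exp(-aT^{1-2\delta})$. Up to that point your argument and the paper's coincide essentially line by line.

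The gap is in your final comparison. From $T^{1-2\delta}\ge\ln T$ (your monotonicity derivation of this from the double-exponential threshold is fine) you conclude $\exp(-aT^{1-2\delta})\le T^{-a}$, and then assert that $T^{-a}$ decays faster than $T^{-2\delta}$. That holds only if $a\ge 2\delta$, which is nowhere assumed: $a$ is an arbitrary positive constant, and in the paper's own application $a=\mu$ is a strong-convexity modulus that can be far smaller than $2\delta$ (which is tuned toward $1$). When $a<\delta$, the bound $e_0T^{-a}$ you are left with is \emph{not} $\mathcal{O}\big((b/a)T^{-\delta}\big)+\mathcal{O}\big((c/a)T^{-2\delta}\big)$, so your chain of inequalities does not deliver the claim; the intermediate bound $T^{-a}$ is simply too lossy. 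The repair is standard and cheap: a stretched exponential dominates every polynomial irrespective of $a$. For instance, $e^{-x}\le k!\,x^{-k}$ gives $\exp(-aT^{1-2\delta})\le k!\,a^{-k}\,T^{-k(1-2\delta)}$, and choosing an integer $k\ge 2\delta/(1-2\delta)$ absorbs the transient into the $T^{-2\delta}$ term with a constant depending only on $a$, $\delta$, $e_0$ --- this also removes any need for the $T^{1-2\delta}\ge\ln T$ detour. For what it is worth, the paper is no more rigorous at this exact point (it merely declares the exponential term negligible when $1-2\delta$ is of order at least $1/\log\log T$), so your instinct to make the step explicit is the right one; it just has to be carried out against $T^{-k(1-2\delta)}$ rather than against $T^{-a}$.
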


\begin{proof}[Proof of Lemma \ref{lemma:e_t-itr}]
Use the expression in \eqref{eq:recursive_expression} for steps $t-1$ and $t$ to obtain
\begin{align}
    e_{t+1} 
    & \leq  
    \left(1-\frac{a}{T^{2\delta}}\right)^2 e_{t-1} 
    +
    \left[1 + \left(1-\frac{a}{T^{2\delta}}\right) \right]\frac{b}{T^{3\delta}}
    +
    \left[1 + \left(1-\frac{a}{T^{2\delta}}\right) \right]\frac{c}{T^{4\delta}}, \nonumber 
\end{align}
where $T \geq a^{1/(2\delta)}$. By recursively applying these inequalities for all steps $t = 0,1,\cdots$ we obtain that  
\begin{align}
    e_{t} & \leq
    \left(1-\frac{a}{T^{2\delta}}\right)^t e_{0}  \nonumber \\
    & \quad +
    \frac{b}{T^{3\delta}} \left[1+\left(1-\frac{a}{T^{2\delta}}\right) +\dots +\left(1-\frac{a}{T^{2\delta}}\right)^{t-1} \right]  \nonumber \\
    & \quad +
    \frac{c}{T^{4\delta}} \left[1+\left(1-\frac{a}{T^{2\delta}}\right) +\dots +\left(1-\frac{a}{T^{2\delta}}\right)^{t-1} \right]  \nonumber \\
    & \leq
    \left(1-\frac{a}{T^{2\delta}}\right)^t e_{0}
    +
    \frac{b}{T^{3\delta}} \left[\sum_{s=0}^{t-1}\left(1-\frac{a}{T^{2\delta}}\right)^{s} \right]
    +
    \frac{c}{T^{4\delta}} \left[\sum_{s=0}^{t-1}\left(1-\frac{a}{T^{2\delta}}\right)^{s} \right]  \nonumber \\
    & \leq
    \left(1-\frac{a}{T^{2\delta}}\right)^t e_{0} 
    +
    \frac{b}{T^{3\delta}} \left[\sum_{s=0}^{\infty}\left(1-\frac{a}{T^{2\delta}}\right)^{s} \right] 
    +
    \frac{c}{T^{4\delta}} \left[\sum_{s=0}^{\infty}\left(1-\frac{a}{T^{2\delta}}\right)^{s} \right]  \nonumber \\
    & =  
    \left(1-\frac{a}{T^{2\delta}}\right)^t e_{0} 
    +
    \frac{b}{T^{3\delta}} \left[\frac{1}{1- \left( 1-\frac{a}{T^{2\delta}} \right) } \right] 
    +
    \frac{c}{T^{4\delta}} \left[\frac{1}{1- \left( 1-\frac{a}{T^{2\delta}} \right) } \right] \nonumber  \\
    & = 
    \left(1-\frac{a}{T^{2\delta}}\right)^t e_{0} 
    +
    \frac{b/a}{T^{\delta}}
    +
    \frac{c/a}{T^{2\delta}}.   \nonumber 
\end{align}
Therefore, for the iterate corresponding to step $t=T$ we can write 
\begin{align}
    e_{T} 
    & \leq 
    \left(1-\frac{a}{T^{2\delta}}\right)^T e_{0} 
    +
    \frac{b/a}{T^{\delta}}
    +
    \frac{c/a}{T^{2\delta}}  \nonumber \\
    & \leq 
    \exp \left({-aT^{(1-2\delta)}} \right) e_{0} 
    +
    \frac{b/a}{T^{\delta}}
    +
    \frac{c/a}{T^{2\delta}}  \label{eq:withexp} \\
    & =
    \ccalO \left(\frac{b/a}{T^{\delta}}\right)
    +
    \ccalO \left(\frac{c/a}{T^{2\delta}}\right), \nonumber 
\end{align}
and the claim in \eqref{eq:e_T-convergence} follows. Note that for the last inequality we assumed that the exponential term in is negligible comparing to the sublinear term. It can be verified for instance if $1-2\delta$ is of $\mathcal{O} \left( 1/\log(\log(T)) \right)$ or greater than that, it satisfies this condition. Moreover, setting $\delta=1/2$ results in a constant (and hence non-vanishing) term in (\ref{eq:withexp}). 
\end{proof}

\subsection{Proof of Theorem \ref{thm2}}\label{sec:noncnvx-proof}

To ease the notation, we agree in this section on the following shorthand notations for $t=0,1,2,\cdots$:
\begin{align}
    X_t 
    &= 
    [\bx_{1,t} \,\, \cdots \,\, \bx_{n,t}] \in \reals^{p \times n},  \nonumber \\
    Z_t
    &=
    [\bz_{1,t} \,\, \cdots \,\, \bz_{n,t}] \in \reals^{p \times n},  \nonumber \\
    \barbx_t 
    &=
    \frac{1}{n} \sum_{i=1}^{n} \bx_{i,t}  \in \reals^{p},  \nonumber \\
    \bbarX_t
    & =
    [\barbx_t \,\, \cdots \,\, \barbx_t] \in \reals^{p \times n},  \nonumber \\
    \tparf (X_t) &= \left[ \tNab f_1(\bx_{1,t}) \,\, \cdots \,\, \tNab f_n(\bx_{n,t}) \right] \in \reals^{p \times n},  \nonumber \\
    \parf (X_t) &= \left[ \gr f_1(\bx_{1,t}) \,\, \cdots \,\, \gr f_n(\bx_{n,t}) \right] \in \reals^{p \times n}. \nonumber 
\end{align}

As stated before, we can write the update rule of the proposed \texttt{QuanTimed-DSGD} in the following matrix form:
\begin{equation}\label{eq:update_matrix_1}
    X_{t+1} =  X_t \left( (1 - \eps)I + \eps W \right) + \eps (Z_t - X_t) (W - W_D) - \al \eps \tparf(X_t).
\end{equation}
Let us denote $W_{\eps} = (1 - \eps)I + \eps W$ and write (\ref{eq:update_matrix_1}) as
\begin{equation}\label{eq:update_matrix_2}
    X_{t+1} =  X_t W_{\eps} + \eps (Z_t - X_t) (W - W_D) - \al \eps \tparf(X_t).
\end{equation}
Clearly for any $\eps \in (0,1]$, $W_{\eps}$ is also doubly stochastic with eigenvalues $\lambda_i(W_{\eps}) = 1 - \eps + \eps \lambda_i(W)$ and spectral gap $1 - \beta_{\eps} = 1 - \max \left\{|\lambda_2(W_{\eps})|,|\lambda_n(W_{\eps})| \right\}$.

We start the convergence analysis by using the smoothness property of the objectives and write
\begin{align}
    \mbE f \left( \frac{X_{t+1} \bone_n}{n} \right) 
    & = 
    \mbE f \left( \frac{X_t  W_{\eps} \bone_n}{n} + \frac{\eps (Z_t - X_t) (W - W_D) \bone_n}{n} - \frac{\al \eps \tparf(X_t) \bone_n}{n} \right)  \nonumber \\
    & \overset{\text{Assumption \ref{assump-smooth}}}\leq 
    \mbE f \left( \frac{X_{t} \bone_n}{n} \right) - \al \eps \mbE \left \langle \gr f \left( \frac{X_{t} \bone_n}{n} \right) , \frac{ \parf(X_{t}) \bone_n}{n} \right \rangle \nonumber\\
    & \quad + \frac{\eps^2K}{2} \mbE \norm{ \frac{(Z_t - X_t) (W - W_D) \bone_n}{n} - \al \frac{ \tparf(X_{t}) \bone_n}{n} }^2. \label{eq:noncnvx-1}
\end{align}
We specifically used the following equivalent form of the smoothness (Assumption \ref{assump-smooth}) for every local and hence the global objective
\begin{equation}
    f_i(\bby) \leq f_i(\mathbf{x}) + \left \langle \gr f_i (\mathbf{x}) , \bby - \mathbf{x} \right \rangle + \frac{K}{2} \norm{\bby - \mathbf{x}}^2, \quad \text{ for all } i \in [n], \bx, \bby \in \reals^p. \nonumber 
\end{equation}
Also, we used the following simple fact: 
\begin{equation}
    W_{\eps} \bone_n = ((1 - \eps)I + \eps W) \bone_n = (1 - \eps)\bone_n + \eps W \bone_n = \bone_n  \nonumber 
\end{equation}
Now let us bound the term in (\ref{eq:noncnvx-1}) as follows:
\begin{align}
    \mbE \norm{ \frac{(Z_t - X_t) (W - W_D) \bone_n}{n} - \al \frac{ \tparf(X_{t}) \bone_n}{n} }^2 
    & =
    \mbE \norm{ \frac{(Z_t - X_t) (W - W_D) \bone_n}{n}}^2  \nonumber \\
    & \quad + \mbE \norm{\al \frac{ \tparf(X_{t}) \bone_n}{n} }^2  \nonumber \\
    & =
    \frac{1}{n^2} \sum_{i=1}^{n} (1 - w_{ii})^2 \mbE \norm{ \bz_{i,t} - \bx_{i,t}}^2  \nonumber \\
    & \quad +
    \al^2 \mbE \norm{\frac{ \tparf(X_{t}) \bone_n}{n} }^2  \nonumber \\
    & \leq 
    \frac{\sigma^2}{n} + \al^2 \mbE \norm{\frac{ \tparf(X_{t}) \bone_n}{n} }^2, \label{eq:noncnvx-2}
\end{align}
where we used Assumption \ref{assump-Q} to derive the first term in (\ref{eq:noncnvx-2}). To bound the second term in (\ref{eq:noncnvx-2}), we have
\begin{align}
    \mbE \norm{\frac{ \tparf(X_{t}) \bone_n}{n} }^2 
    & = 
    \mbE \norm{ \frac{ \sum_{i=1}^{n} \tNab f_i (\bx_{i,t}) }{n} }^2  \nonumber \\
    & = 
    \mbE \norm{ \frac{ \sum_{i=1}^{n} \tNab f_i (\bx_{i,t}) - \gr f_i (\bx_{i,t}) + \gr f_i (\bx_{i,t}) }{n} }^2 \nonumber  \\
    & = 
    \mbE \norm{ \frac{ \sum_{i=1}^{n} \tNab f_i (\bx_{i,t}) - \gr f_i (\bx_{i,t}) }{n} }^2
    + \mbE \norm{ \frac{ \sum_{i=1}^{n} \gr f_i (\bx_{i,t}) }{n} }^2  \nonumber \\
    & \leq 
    \frac{\gamma^2}{n} \left( \frac{\mbE[1/V]}{T_d} + \frac{1}{m} \right)
    +
    \mbE \norm{ \frac{ \sum_{i=1}^{n} \gr f_i (\bx_{i,t}) }{n} }^2  \nonumber \\
    & =
    \frac{\gamma_2^2}{n}
    +
    \mbE \norm{ \frac{ \sum_{i=1}^{n} \gr f_i (\bx_{i,t}) }{n} }^2. \label{eq:stoch-bound}
\end{align}
where the last inequality follows from Lemma \ref{lemma:stoch-bound}.

Plugging (\ref{eq:stoch-bound}) in (\ref{eq:noncnvx-1}) yields
\begin{align}
    \mbE f \left( \frac{X_{t+1} \bone_n}{n} \right)
    & \leq 
    \mbE f \left( \frac{X_{t} \bone_n}{n} \right) 
    -
    \al \eps \mbE \left \langle \gr f \left( \frac{X_{t} \bone_n}{n} \right) , \frac{ \parf(X_{t}) \bone_n}{n} \right \rangle  \nonumber \\
    & \quad + 
    \frac{\eps^2  K }{2n} \sigma^2
    +
    \frac{\al^2 \eps^2 K }{2 n} \gamma_2^2 
    +
    \frac{\al^2 \eps^2 K }{2} \mbE \norm{ \frac{ \sum_{i=1}^{n} \gr f_i (\bx_{i,t}) }{n} }^2  \nonumber \\
    & =
    \mbE f \left( \frac{X_{t} \bone_n}{n} \right) - \frac{\al \eps - \al^2 \eps^2 K }{2} \mbE \norm{\frac{ \parf(X_{t}) \bone_n}{n} }^2 - \frac{\al \eps}{2} \mbE \norm{ \gr f \left( \frac{X_{t} \bone_n}{n} \right)}^2  \nonumber \\
    & \quad + 
    \frac{\eps^2  K }{2n} \sigma^2
    +
    \frac{\al^2 \eps^2 K }{2 n} \gamma_2^2  \nonumber \\
    & \quad + 
    \frac{\al \eps}{2} \underbrace{\mbE \norm{ \gr f \left( \frac{X_{t} \bone_n}{n} \right) - \frac{ \parf(X_{t}) \bone_n}{n}}^2}_\textrm{$T_1$} \label{eq:T1}
\end{align}
where we used the identity $2 \langle \mathbf{a}, \mathbf{b} \rangle = \norm{\mathbf{a}}^2 + \norm{\mathbf{b}}^2 - \norm{\mathbf{a} - \mathbf{b}}^2$. The term $T_1$ defined in (\ref{eq:T1}) can be bounded as follows:
\begin{align}
    T_1 
    & =
    \mbE \norm{ \gr f \left( \frac{X_{t} \bone_n}{n} \right) - \frac{ \parf(X_{t}) \bone_n}{n}}^2  \nonumber \\
    & \leq 
    \frac{1}{n} \sum_{i=1}^{n} \mbE \norm{ \gr f_i \left( \frac{X_{t} \bone_n}{n} \right) -  \gr f_i (\bx_{i,t})}^2  \nonumber \\
    & \leq \frac{ K ^2}{n} \sum_{i=1}^{n} \underbrace{ \mbE \norm{ \frac{X_{t} \bone_n}{n} - \bx_{i,t} }^2}_\textrm{$Q_{i,t}$}. \nonumber 
\end{align}
Let us define 
\begin{align}
    Q_{i,t} 
    \coloneqq
    \mbE \norm{ \frac{X_{t} \bone_n}{n} - \bx_{i,t} }^2, \nonumber 
\end{align}
and
\begin{align}
    M_t 
    \coloneqq
    \frac{1}{n} \sum_{i=1}^{n} Q_{i,t} 
    =
    \frac{1}{n} \sum_{i=1}^{n} \mbE \norm{ \frac{X_{t} \bone_n}{n} - \bx_{i,t} }^2. \nonumber 
\end{align}
Here, $Q_{i,t}$ captures the deviation of the model at node $i$ from the average model at iteration $t$ and $M_t$ aggregates them to measure the average total consensus error. To bound $M_t$, we need to evaluate the following recursive expressions:
\begin{align}
    X_{t}
    & =
    X_{t-1}  W_{\eps} + \eps (Z_{t-1} - X_{t-1}) (W - W_D) - \al \eps \tparf(X_{t-1}) \nonumber \\
    & =
    X_{0}  W^t_{\eps} + \eps \sum_{s=0}^{t-1} (Z_s - X_s) (W - W_D) W^{t-s-1}_{\eps}  - \al \eps \sum_{s=0}^{t-1} \tparf(X_{s}) W^{t-s-1}_{\eps}. \label{eq:rcrsv-2}
\end{align}
Now, using (\ref{eq:rcrsv-2}) we can write
\begin{align}
    M_t 
    & =
    \frac{1}{n} \sum_{i=1}^{n} \mbE \norm{ \frac{X_{t} \bone_n}{n} - \bx_{i,t} }^2  \nonumber \\
    & = 
    \frac{1}{n} \mbE \norm{ \bbarX_t - X_t }_F^2  \nonumber \\
    & =
    \frac{1}{n} \mbE \norm{ X_t \frac{\bone \bone^{\top}}{n} - X_t }_F^2 \nonumber  \\ 
    & =
    \frac{1}{n} \mbE \Bigg\Vert X_0 \left( \frac{\bone \bone^{\top}}{n} - W^t_{\eps} \right) 
    +
    \eps \sum_{s=0}^{t-1} (Z_s - X_s) (W - W_D) \left( \frac{\bone \bone^{\top}}{n}- W^{t-s-1}_{\eps} \right)  \nonumber \\
    & \quad \quad 
    -
    \al \eps \sum_{s=0}^{t-1} \tparf(X_{s}) \left( \frac{\bone \bone^{\top}}{n}- W^{t-s-1}_{\eps} \right) \Bigg\Vert_F^2  \nonumber \\
    & =
    \frac{( \al \eps )^2}{n} \underbrace{ \mbE \norm{ \sum_{s=0}^{t-1} \tparf(X_{s}) \left( \frac{\bone \bone^{\top}}{n}- W^{t-s-1}_{\eps} \right) }_F^2 }_\textrm{$T_2$} \nonumber \\
    & \quad + 
    \frac{\eps^2}{n} \underbrace{ \mbE \norm{ \sum_{s=0}^{t-1} (Z_s - X_s) (W - W_D) \left( \frac{\bone \bone^{\top}}{n}- W^{t-s-1}_{\eps} \right) }_F^2 }_\textrm{$T_3$}, \nonumber 
\end{align}
where we used the fact that quantiziations and stochastic gradients are statistically independent and $X_0 = 0$. We continue the analysis by bounding $T_2$ as follows:
\begin{align}
    T_2 
    & =
    \mbE \norm{ \sum_{s=0}^{t-1} \tparf(X_{s}) \left( \frac{\bone \bone^{\top}}{n}- W^{t-s-1}_{\eps} \right) }_F^2 \nonumber  \\
    & =
    \mbE \norm{ \sum_{s=0}^{t-1} \left( \tparf(X_{s}) - \parf(X_{s}) + \parf(X_{s})\right) \left( \frac{\bone \bone^{\top}}{n}- W^{t-s-1}_{\eps} \right) }_F^2  \nonumber \\
    & \leq
    2 \underbrace{ \mbE \norm{ \sum_{s=0}^{t-1} \left( \tparf(X_{s}) - \parf(X_{s}) \right) \left( \frac{\bone \bone^{\top}}{n}- W^{t-s-1}_{\eps} \right) }_F^2 }_\textrm{$T_4$} \nonumber \\
    & \quad + 
    2 \underbrace{ \mbE \norm{ \sum_{s=0}^{t-1} \parf(X_{s})  \left( \frac{\bone \bone^{\top}}{n}- W^{t-s-1}_{\eps} \right) }_F^2 }_\textrm{$T_5$}. \nonumber 
\end{align}
We can write
\begin{align}
    T_4 
    & =
    \mbE \norm{ \sum_{s=0}^{t-1} \left( \tparf(X_{s}) - \parf(X_{s}) \right) \left( \frac{\bone \bone^{\top}}{n}- W^{t-s-1}_{\eps} \right) }_F^2  \nonumber \\
    & \leq
    \sum_{s=0}^{t-1} \mbE \norm{ \tparf(X_{s}) - \parf(X_{s})}_F^2 \cdot \norm{ \frac{\bone \bone^{\top}}{n}- W^{t-s-1}_{\eps} }^2  \nonumber \\
    & \leq
    n \gamma_2^2 \sum_{s=0}^{t-1} \beta^{2(t-s-1)}_{\eps} \label{eq:T4-bound} \\
    & \leq
    \frac{n \gamma_2^2}{1 - \beta^{2}_{\eps}}, \nonumber 
\end{align}
where we used the facts that $\norm{AB}_F \leq \norm{A}_F \, \norm{B}$ for matrices $A,B$ and also that $\norm{ \frac{\bone \bone^{\top}}{n}- W^{t}_{\eps} } \leq \beta^{t}_{\eps}$ for any $t=0,1,\cdots$.
We continue by bounding $T_5$:
\begin{align}
    T_5 
    & =
    \mbE \norm{ \sum_{s=0}^{t-1} \parf(X_{s})  \left( \frac{\bone \bone^{\top}}{n}- W^{t-s-1}_{\eps} \right) }_F^2  \nonumber \\
    & = 
    \underbrace{ \sum_{s=0}^{t-1} \mbE \norm{ \parf(X_{s})  \left( \frac{\bone \bone^{\top}}{n}- W^{t-s-1}_{\eps} \right) }_F^2 }_\textrm{$T_6$}  \nonumber \\
    & \quad + 
    \underbrace{ \sum_{0 \leq s \neq s' \leq t-1} \mbE \left \langle \parf(X_{s})  \left( \frac{\bone \bone^{\top}}{n}- W^{t-s-1}_{\eps} \right) , \parf(X_{s'})  \left( \frac{\bone \bone^{\top}}{n}- W^{t-s'-1}_{\eps} \right) \right \rangle_F  }_\textrm{$T_7$} \label{eq:T_5}
\end{align}
Let us first bound the term $T_6$:
\begin{align}
    T_6 
    & =
    \sum_{s=0}^{t-1} \mbE \norm{ \parf(X_{s})  \left( \frac{\bone \bone^{\top}}{n}- W^{t-s-1}_{\eps} \right) }_F^2  \nonumber \\
    & \leq
    \sum_{s=0}^{t-1} \underbrace{ \mbE \norm{ \parf(X_{s}) }_F^2}_\textrm{$T_8$} \norm{ \frac{\bone \bone^{\top}}{n}- W^{t-s-1}_{\eps} }^2, \label{eq:T6-bound}
\end{align}
where
\begin{align}
    T_8
    & = 
    \mbE \norm{ \parf(X_{s}) }_F^2  \nonumber \\
    & \leq 
    3 \mbE \norm{ \parf(X_{s}) - \parf \left( \frac{X_{s} \bone_n}{n} \bone^{\top}_n \right)}_F^2  \nonumber \\
    & \quad + 
    3 \mbE \norm{ \parf \left( \frac{X_{s} \bone_n}{n} \bone^{\top}_n \right) - \gr f \left( \frac{X_{s} \bone_n}{n} \right)  \bone^{\top}_n}_F^2  \nonumber \\
    & \quad + 
    3 \mbE \norm{\gr f \left( \frac{X_{s} \bone_n}{n} \right)  \bone^{\top}_n}_F^2  \nonumber \\
    & \leq 
    3 \mbE \norm{ \parf(X_{s}) - \parf \left( \frac{X_{s} \bone_n}{n} \bone^{\top}_n \right)}_F^2  \nonumber \\
    & \quad + 
    3 n \gamma_1^2  \nonumber \\
    & \quad + 
    3 \mbE \norm{\gr f \left( \frac{X_{s} \bone_n}{n} \right)  \bone^{\top}_n}_F^2  \nonumber \\
    & \leq 
    3  K^2 \sum_{i=1}^{n}  \mbE  \norm{ \frac{X_{s} \bone_n}{n} - \bx_{i,s} }^2 + 3 n \gamma_1^2 + 3 \mbE \norm{\gr f \left( \frac{X_{s} \bone_n}{n} \right)  \bone^{\top}_n}_F^2  \nonumber \\
    & =
    3  K^2 \sum_{i=1}^{n}  Q_{i,s} + 3 n \gamma_1^2 + 3 \mbE \norm{\gr f \left( \frac{X_{s} \bone_n}{n} \right)  \bone^{\top}_n}_F^2. \label{eq:T8} 
\end{align}
Plugging \eqref{eq:T8} in (\ref{eq:T6-bound}) yields
\begin{align}
    T_6
    & \leq 
    3 K^2 \sum_{s=0}^{t-1} \sum_{i=1}^{n} Q_{i,s} \norm{ \frac{\bone \bone^{\top}}{n}- W^{t-s-1}_{\eps} }^2  \nonumber \\
    & \quad + 
    3 n \gamma_1^2 \frac{1}{1 - \beta^{2}_{\eps}} \nonumber  \\
    & \quad +
    3 \sum_{s=0}^{t-1} \mbE \norm{\gr f \left( \frac{X_{s} \bone_n}{n} \right)  \bone^{\top}_n}_F^2 \norm{ \frac{\bone \bone^{\top}}{n}- W^{t-s-1}_{\eps} }^2 \nonumber 
\end{align}
Going back to terms $T_5$ and $T_7$, we can write
\begin{align}
    T_7 
    & = 
    \sum_{s \neq s'}^{t-1} \mbE \left \langle \parf(X_{s})  \left( \frac{\bone \bone^{\top}}{n}- W^{t-s-1}_{\eps} \right) , \parf(X_{s'})  \left( \frac{\bone \bone^{\top}}{n}- W^{t-s'-1}_{\eps} \right) \right \rangle_F  \nonumber \\
    & \leq
    \sum_{s \neq s'}^{t-1} \mbE \norm{ \parf(X_{s})  \left( \frac{\bone \bone^{\top}}{n}- W^{t-s-1}_{\eps} \right)}_F \norm{ \parf(X_{s'})  \left( \frac{\bone \bone^{\top}}{n}- W^{t-s'-1}_{\eps} \right)}_F \nonumber \\
    & \leq
    \sum_{s \neq s'}^{t-1} \mbE \norm{ \parf(X_{s})}_F \norm{ \frac{\bone \bone^{\top}}{n}- W^{t-s-1}_{\eps}  } \norm{ \parf(X_{s'})}_F \norm{ \frac{\bone \bone^{\top}}{n}- W^{t-s'-1}_{\eps}  } \nonumber \\
    & \leq 
    \sum_{s \neq s'}^{t-1} \mbE \frac{\norm{ \parf(X_{s})}_F^2}{2} \norm{ \frac{\bone \bone^{\top}}{n}- W^{t-s-1}_{\eps}  } \norm{ \frac{\bone \bone^{\top}}{n}- W^{t-s'-1}_{\eps}  } \nonumber  \\
    & \quad + 
    \sum_{s \neq s'}^{t-1} \mbE \frac{\norm{ \parf(X_{s'})}_F^2}{2} \norm{ \frac{\bone \bone^{\top}}{n}- W^{t-s-1}_{\eps}  } \norm{ \frac{\bone \bone^{\top}}{n}- W^{t-s'-1}_{\eps}  } \nonumber  \\
    & \leq 
    \sum_{s \neq s'}^{t-1} \mbE \left( \frac{\norm{ \parf(X_{s})}_F^2}{2} + \frac{\norm{ \parf(X_{s'})}_F^2}{2} \right) \beta_{\eps}^{2t-(s+s')-2}  \nonumber \\
    & =
    \sum_{s \neq s'}^{t-1} \mbE \norm{ \parf(X_{s})}_F^2 \beta_{\eps}^{2t-(s+s')-2} \\
    & \leq 
    \underbrace{ 3 \sum_{s \neq s'}^{t-1} \left(3 K^2 \sum_{i=1}^{n} Q_{i,s} 
    +
    3 \mbE \norm{\gr f \left( \frac{X_{s} \bone_n}{n} \right)  \bone^{\top}_n}_F^2 \right) \beta_{\eps}^{2t-(s+s')-2} }_\textrm{$T_9$}  \nonumber \\
    & \quad +
    \underbrace{ 3 n  \gamma_1^2 \sum_{s \neq s'}^{t-1}  \beta_{\eps}^{2t-(s+s')-2} }_\textrm{$T_{10}$}. \nonumber 
\end{align}
In above, the term $T_{10}$ can be simply bounded as:
\begin{align*}
    T_{10} 
    & =
    3 n  \gamma_1^2 \sum_{s \neq s'}^{t-1}  \beta_{\eps}^{2t-(s+s')-2} \\
    & =
    6 n \gamma_1^2 \sum_{s > s'}^{t-1}   \beta_{\eps}^{2t-(s+s')-2} \\
    & =
    6 n \gamma_1^2 \frac{ \left( \beta_{\eps}^t - 1 \right) \left( \beta_{\eps}^t - \beta_{\eps} \right)}{ \left( \beta_{\eps} - 1 \right)^2 \left( \beta_{\eps} + 1 \right)} \\
    & \leq
    6 n \gamma_1^2 \frac{1}{\left( 1 -  \beta_{\eps} \right)^2}.
\end{align*}
The other term, i.e. $T_9$ can be bounded as follows:
\begin{align*}
    T_9 
    & =
    3 \sum_{s \neq s'}^{t-1} \left(3 K^2 \sum_{i=1}^{n} Q_{i,s} 
    +
    3 \mbE \norm{\gr f \left( \frac{X_{s} \bone_n}{n} \right)  \bone^{\top}_n}_F^2 \right) \beta_{\eps}^{2t-(s+s')-2} \\
    & =
    6 \sum_{s = 0}^{t-1} \left(3 K^2  \sum_{i=1}^{n}  Q_{i,s} + 3 \mbE \norm{\gr f \left( \frac{X_{s} \bone_n}{n} \right)  \bone^{\top}_n}_F^2 \right) \sum_{s' = s + 1}^{t-1} \beta_{\eps}^{2t-(s+s')-2} \\
    & \leq
    6 \sum_{s = 0}^{t-1} \left(3 K^2  \sum_{i=1}^{n} Q_{i,s} + 3 \mbE \norm{\gr f \left( \frac{X_{s} \bone_n}{n} \right)  \bone^{\top}_n}_F^2 \right) \frac{\beta_{\eps}^{t-s-1}}{ 1- \beta_{\eps}}
\end{align*}
Now that we have bounded $T_6$ and $T_7$, we go back and plug in \eqref{eq:T_5} to bound $T_5$:
\begin{align*}
    T_5
    & \leq
    3 K^2 \sum_{s=0}^{t-1} \sum_{i=1}^{n} Q_{i,s} \norm{ \frac{\bone \bone^{\top}}{n}- W^{t-s-1}_{\eps} }^2\\
    & \quad + 
    3 \sum_{s=0}^{t-1} \mbE \norm{\gr f \left( \frac{X_{s} \bone_n}{n} \right)  \bone^{\top}_n}_F^2 \norm{ \frac{\bone \bone^{\top}}{n}- W^{t-s-1}_{\eps} }^2 \\
    & \quad +
    6 \sum_{s = 0}^{t-1} \left(3 K^2  \sum_{i=1}^{n} Q_{i,s} + 3 \mbE \norm{\gr f \left( \frac{X_{s} \bone_n}{n} \right)  \bone^{\top}_n}_F^2 \right) \frac{\beta_{\eps}^{t-s-1}}{ 1- \beta_{\eps}} \\
    & \quad +
    3 n \gamma_1^2 \frac{1}{1 - \beta^{2}_{\eps}} \\
    & \quad +
    6 n \gamma_1^2 \frac{1}{\left( 1 -  \beta_{\eps} \right)^2} \\
    & \leq
    3 K^2 \sum_{s=0}^{t-1} \sum_{i=1}^{n} Q_{i,s} \norm{ \frac{\bone \bone^{\top}}{n}- W^{t-s-1}_{\eps} }^2\\
    & \quad + 
    3 \sum_{s=0}^{t-1} \mbE \norm{\gr f \left( \frac{X_{s} \bone_n}{n} \right)  \bone^{\top}_n}_F^2 \norm{ \frac{\bone \bone^{\top}}{n}- W^{t-s-1}_{\eps} }^2 \\
    & \quad +
    6 \sum_{s = 0}^{t-1} \left(3 K^2  \sum_{i=1}^{n} Q_{i,s} + 3 \mbE \norm{\gr f \left( \frac{X_{s} \bone_n}{n} \right)  \bone^{\top}_n}_F^2 \right) \frac{\beta_{\eps}^{t-s-1}}{ 1- \beta_{\eps}} \\
    & \quad +
    9 n \gamma_1^2 \frac{1}{(1 - \beta_{\eps})^{2}}, \\
\end{align*}
where we used the fact that $\frac{1}{1 - \beta_{\eps}^{2}} \leq \frac{1}{(1 - \beta_{\eps})^{2}}$. Now we bound the term $T_2$ having $T_4$ and $T_5$ bounded:
\begin{align*}
    T_2
    & =
    2 T_4 + 2 T_5 \\
    & \leq
    2 \frac{n \gamma_2^2}{1 - \beta^{2}_{\eps}} \\
    & \quad +
    6 K^2 \sum_{s=0}^{t-1} \sum_{i=1}^{n} Q_{i,s} \norm{ \frac{\bone \bone^{\top}}{n}- W^{t-s-1}_{\eps} }^2\\
    & \quad + 
    6 \sum_{s=0}^{t-1} \mbE \norm{\gr f \left( \frac{X_{s} \bone_n}{n} \right)  \bone^{\top}_n}_F^2 \norm{ \frac{\bone \bone^{\top}}{n}- W^{t-s-1}_{\eps} }^2 \\
    & \quad +
    12 \sum_{s = 0}^{t-1} \left(3 K^2  \sum_{i=1}^{n} Q_{i,s} + 3 \mbE \norm{\gr f \left( \frac{X_{s} \bone_n}{n} \right)  \bone^{\top}_n}_F^2 \right) \frac{\beta_{\eps}^{t-s-1}}{ 1- \beta_{\eps}} \\
    & \quad +
    18 n \gamma_1^2 \frac{1}{(1 - \beta_{\eps})^2}.
\end{align*}

Moreover, the term $T_3$ can be bounded as follows:
\begin{align*}
    T_3 
    & =
    \mbE \norm{ \sum_{s=0}^{t-1} (Z_s - X_s) (W - W_D) \left( \frac{\bone \bone^{\top}}{n}- W^{t-s-1}_{\eps} \right) }_F^2  \\
    & \leq
    \mbE  \sum_{s=0}^{t-1} \norm{Z_s - X_s}_F^2 \norm{W - W_D}^2 \norm{ \frac{\bone \bone^{\top}}{n}- W^{t-s-1}_{\eps} }^2 \\
    & \leq
    \frac{4 n \sigma^2}{ 1 - \beta^2_{\eps} },
\end{align*}
where we used the fact that $\norm{W - W_D} \leq 2$. Now we use the bounds derived for $T_2$ and $T_3$ to bound the consensus error $M_t$ as follows:
\begin{align}
    M_t 
    & \leq
    \frac{ \al^2 \eps^2}{n} T_2 
    +
    \frac{\eps^2}{n} T_3   \nonumber \\ 
    & \leq
    \frac{2 \al^2 \eps^2 \gamma_2^2}{1 - \beta^{2}_{\eps}}   \nonumber \\ 
    & \quad +
    \frac{6 \al^2 \eps^2 K^2}{n} \sum_{s=0}^{t-1}  \sum_{i=1}^{n} Q_{i,s} \norm{ \frac{\bone \bone^{\top}}{n}- W^{t-s-1}_{\eps} }^2  \nonumber \\ 
    & \quad + 
    \frac{6 \al^2 \eps^2}{n} \sum_{s=0}^{t-1} \mbE \norm{\gr f \left( \frac{X_{s} \bone_n}{n} \right)  \bone^{\top}_n}_F^2 \norm{ \frac{\bone \bone^{\top}}{n}- W^{t-s-1}_{\eps} }^2   \nonumber \\ 
    & \quad +
    \frac{12 \al^2 \eps^2}{n} \sum_{s = 0}^{t-1} \left(3 K^2  \sum_{i=1}^{n} Q_{i,s} + 3 \mbE \norm{\gr f \left( \frac{X_{s} \bone_n}{n} \right)  \bone^{\top}_n}_F^2 \right) \frac{\beta_{\eps}^{t-s-1}}{ 1- \beta_{\eps}}   \nonumber \\ 
    & \quad +
    \frac{18 \al^2 \eps^2 \gamma_1^2}{(1 - \beta_{\eps})^2}   \nonumber \\ 
    & \quad +
    \frac{4 \eps^2 \sigma^2}{ 1 - \beta^2_{\eps} }   \nonumber \\ 
    & \leq
    \frac{2 \al^2 \eps^2 \gamma_2^2}{1 - \beta^{2}_{\eps}}
    +
    \frac{18 \al^2 \eps^2 \gamma_1^2}{(1 - \beta_{\eps})^2}
    +
    \frac{4 \eps^2 \sigma^2}{ 1 - \beta^2_{\eps} }   \nonumber \\ 
    & \quad +
    \frac{6 \al^2 \eps^2 K^2}{n} \sum_{s=0}^{t-1}  \sum_{i=1}^{n} Q_{i,s} \beta^{2(t-s-1)}_{\eps}  \nonumber \\ 
    & \quad + 
    \frac{6 \al^2 \eps^2}{n} \sum_{s=0}^{t-1} \mbE \norm{\gr f \left( \frac{X_{s} \bone_n}{n} \right)  \bone^{\top}_n}_F^2 \beta^{2(t-s-1)}_{\eps}   \nonumber \\ 
    & \quad +
    \frac{12 \al^2 \eps^2}{n} \sum_{s = 0}^{t-1} \left(3 K^2  \sum_{i=1}^{n} Q_{i,s} + 3 \mbE \norm{\gr f \left( \frac{X_{s} \bone_n}{n} \right)  \bone^{\top}_n}_F^2 \right) \frac{\beta_{\eps}^{t-s-1}}{ 1- \beta_{\eps}}   \nonumber \\ 
    & \leq
    \frac{2 \al^2 \eps^2 \gamma_2^2}{1 - \beta^{2}_{\eps}}
    +
    \frac{18 \al^2 \eps^2 \gamma_1^2}{(1 - \beta_{\eps})^2}
    +
    \frac{4 \eps^2 \sigma^2}{ 1 - \beta^2_{\eps} }   \nonumber \\ 
    & \quad +
    \frac{6 \al^2 \eps^2}{n} \sum_{s=0}^{t-1} \mbE \norm{\gr f \left( \frac{X_{s} \bone_n}{n} \right)  \bone^{\top}_n}_F^2 \left( \beta^{2(t-s-1)}_{\eps} +\frac{2 \beta_{\eps}^{t-s-1}}{ 1- \beta_{\eps}} \right)  \nonumber \\ 
    & \quad +
    \frac{6 \al^2 \eps^2}{n} K^2 \sum_{s=0}^{t-1}  \sum_{i=1}^{n} Q_{i,s} \left(\frac{2 \beta_{\eps}^{t-s-1}}{ 1- \beta_{\eps}} + \beta^{2(t-s-1)}_{\eps} \right). \label{eq:M_t}
\end{align}
As we defined earlier, we have $M_s = \frac{1}{n} \sum_{i=1}^{n} Q_{i,s}$ which simplifies \eqref{eq:M_t} to the following:
\begin{align}
    M_t 
    & \leq
    \frac{2 \al^2 \eps^2 \gamma_2^2}{1 - \beta^{2}_{\eps}}
    +
    \frac{18 \al^2 \eps^2 \gamma_1^2}{(1 - \beta_{\eps})^2}
    +
    \frac{4 \eps^2 \sigma^2}{ 1 - \beta^2_{\eps} }   \nonumber \\ 
    & \quad +
    \frac{6 \al^2 \eps^2}{n} \sum_{s=0}^{t-1} \mbE \norm{\gr f \left( \frac{X_{s} \bone_n}{n} \right)  \bone^{\top}_n}_F^2 \left( \beta^{2(t-s-1)}_{\eps} +\frac{2 \beta_{\eps}^{t-s-1}}{ 1- \beta_{\eps}} \right)  \nonumber \\ 
    & \quad +
    6 \al^2 \eps^2 K^2 \sum_{s=0}^{t-1}  M_s \left(\frac{2 \beta_{\eps}^{t-s-1}}{ 1- \beta_{\eps}} + \beta^{2(t-s-1)}_{\eps} \right) \label{eq:M_t-2}
\end{align}
Now we can sum \eqref{eq:M_t-2} over $t=0,1,\cdots,T-1$ which yields
\begin{align}
    \sum_{t=0}^{T-1} M_t 
    & \leq
    \frac{2 \al^2 \eps^2 \gamma_2^2}{1 - \beta^{2}_{\eps}} T
    +
    \frac{18 \al^2 \eps^2 \gamma_1^2}{(1 - \beta_{\eps})^2} T
    +
    \frac{4 \eps^2 \sigma^2}{ 1 - \beta^2_{\eps} } T   \nonumber \\ 
    & \quad +
    \frac{6 \al^2 \eps^2}{n} \sum_{t=0}^{T-1} \sum_{s=0}^{t-1} \mbE \norm{\gr f \left( \frac{X_{s} \bone_n}{n} \right)  \bone^{\top}_n}_F^2 \left( \beta^{2(t-s-1)}_{\eps} +\frac{2 \beta_{\eps}^{t-s-1}}{ 1- \beta_{\eps}} \right)  \nonumber \\ 
    & \quad +
    6 \al^2 \eps^2 K^2 \sum_{t=0}^{T-1} \sum_{s=0}^{t-1}  M_s \left(\frac{2 \beta_{\eps}^{t-s-1}}{ 1- \beta_{\eps}} + \beta^{2(t-s-1)}_{\eps} \right)   \nonumber \\ 
    & \leq
    \frac{2 \al^2 \eps^2 \gamma_2^2}{1 - \beta^{2}_{\eps}} T
    +
    \frac{18 \al^2 \eps^2 \gamma_1^2}{(1 - \beta_{\eps})^2} T
    +
    \frac{4 \eps^2 \sigma^2}{ 1 - \beta^2_{\eps} } T   \nonumber \\ 
    & \quad +
    \frac{6 \al^2 \eps^2}{n} \sum_{t=0}^{T-1} \mbE \norm{\gr f \left( \frac{X_{s} \bone_n}{n} \right)  \bone^{\top}_n}_F^2 \left( \sum_{k=0}^{\infty} \beta^{2k}_{\eps} + \frac{2 \sum_{k=0}^{\infty} \beta^{k}_{\eps}}{ 1- \beta_{\eps}} \right)   \nonumber \\ 
    & \quad +
    6 \al^2 \eps^2 K^2 \sum_{t=0}^{T-1}  M_t \left( \frac{2 \sum_{k=0}^{\infty} \beta^{k}_{\eps}}{ 1- \beta_{\eps}} + \sum_{k=0}^{\infty} \beta^{2k}_{\eps}\right)   \nonumber \\ 
    & \leq
    \frac{2 \al^2 \eps^2 \gamma_2^2}{1 - \beta^{2}_{\eps}} T
    +
    \frac{18 \al^2 \eps^2 \gamma_1^2}{(1 - \beta_{\eps})^2} T
    +
    \frac{4 \eps^2 \sigma^2}{ 1 - \beta^2_{\eps} } T   \nonumber \\ 
    & \quad +
    \frac{18 \al^2 \eps^2}{n (1 - \beta_{\eps})^2} \sum_{t=0}^{T-1} \mbE \norm{\gr f \left( \frac{X_{s} \bone_n}{n} \right)  \bone^{\top}_n}_F^2   \nonumber \\ 
    & \quad +
    \frac{18 \al^2 \eps^2 K^2}{(1 - \beta_{\eps})^2} \sum_{t=0}^{T-1}  M_t. \label{eq:sum_M_t}
\end{align}
Note that $\norm{\gr f \left( \frac{X_{s} \bone_n}{n} \right)  \bone^{\top}_n}_F^2 = n \norm{\gr f \left( \frac{X_{s} \bone_n}{n} \right) }^2$, which simplifies \eqref{eq:sum_M_t} as follows:
\begin{align}
    \sum_{t=0}^{T-1} M_t 
    & \leq
    \frac{2 \al^2 \eps^2 \gamma_2^2}{1 - \beta^{2}_{\eps}} T
    +
    \frac{18 \al^2 \eps^2 \gamma_1^2}{(1 - \beta_{\eps})^2} T
    +
    \frac{4 \eps^2 \sigma^2}{ 1 - \beta^2_{\eps} } T   \nonumber \\ 
    & \quad +
    \frac{18 \al^2 \eps^2}{(1 - \beta_{\eps})^2} \sum_{t=0}^{T-1} \mbE \norm{\gr f \left( \frac{X_{s} \bone_n}{n} \right) }^2   \nonumber \\ 
    & \quad +
    \frac{18 \al^2 \eps^2 K^2}{(1 - \beta_{\eps})^2} \sum_{t=0}^{T-1}  M_t. \label{eq:sum_M_t}
\end{align}
Rearranging the terms implies that
\begin{align}
    \left( 1 - \frac{18 \al^2 \eps^2 K^2}{(1 - \beta_{\eps})^2}  \right) \sum_{t=0}^{T-1} M_t 
    & \leq
    \frac{2 \al^2 \eps^2 \gamma_2^2}{1 - \beta^{2}_{\eps}} T
    +
    \frac{18 \al^2 \eps^2 \gamma_1^2}{(1 - \beta_{\eps})^2} T
    +
    \frac{4 \eps^2 \sigma^2}{ 1 - \beta^2_{\eps} } T   \nonumber \\ 
    & \quad +
    \frac{18 \al^2 \eps^2}{(1 - \beta_{\eps})^2} \sum_{t=0}^{T-1} \mbE \norm{\gr f \left( \frac{X_{s} \bone_n}{n} \right) }^2. \label{eq:sum_M_t-2}
\end{align}
Now define
\begin{align}
    D_2 \coloneqq 1 - \frac{18 \al^2 \eps^2 K^2}{(1 - \beta_{\eps})^2}, \nonumber 
\end{align}
and rewrite \eqref{eq:sum_M_t-2} as
\begin{align}
    \sum_{t=0}^{T-1} M_t 
    & \leq
    \frac{2 \al^2 \eps^2 \gamma_2^2}{(1 - \beta^{2}_{\eps}) D_2} T
    +
    \frac{18 \al^2 \eps^2 \gamma_1^2}{(1 - \beta_{\eps})^2 D_2} T
    +
    \frac{4 \eps^2 \sigma^2}{ (1 - \beta^2_{\eps}) D_2} T   \nonumber \\ 
    & \quad +
    \frac{18 \al^2 \eps^2}{(1 - \beta_{\eps})^2 D_2} \sum_{t=0}^{T-1} \mbE \norm{\gr f \left( \frac{X_{s} \bone_n}{n} \right) }^2. \label{eq:sum_M_t-3}
\end{align}
Note that from definition of $T_1$ we have
\begin{align}
    T_1
    \leq
    \frac{K^2}{n} \sum_{i=1}^{n} Q_{i,t} = K^2 M_t. \nonumber 
\end{align}
Now use the above fact in the recursive equation \eqref{eq:T1} which we started with, that is
\begin{align}
    \mbE f \left( \frac{X_{t+1} \bone_n}{n} \right)
    & \leq 
    \mbE f \left( \frac{X_{t} \bone_n}{n} \right)  \nonumber \\ 
    & \quad 
    - 
    \frac{\al \eps - \al^2 \eps^2 K }{2} \mbE \norm{\frac{ \parf(X_{t}) \bone_n}{n} }^2 
    -
    \frac{\al \eps}{2} \mbE \norm{ \gr f \left( \frac{X_{t} \bone_n}{n} \right)}^2   \nonumber \\ 
    & \quad + 
    \frac{\eps^2  K }{2n} \sigma^2
    +
    \frac{\al^2 \eps^2 K }{2 n} \gamma_2^2   \nonumber \\ 
    & \quad + 
    \frac{\al \eps K^2}{2}  M_t. \label{eq:recursive-2}
\end{align}
If we sum \eqref{eq:recursive-2} over $t=0,1,\cdots,T-1$, we get
\begin{align}
    & \quad \frac{\al \eps - \al^2 \eps^2 K }{2} \sum_{t=0}^{T-1} \mbE \norm{\frac{ \parf(X_{t}) \bone_n}{n} }^2 
    +
    \frac{\al \eps}{2} \sum_{t=0}^{T-1} \mbE \norm{ \gr f \left( \frac{X_{t} \bone_n}{n} \right)}^2   \nonumber \\ 
    & \leq
    f(0) - f^*
    +
    \frac{\eps^2  K }{2n} \sigma^2 T 
    +
    \frac{\al^2 \eps^2 K }{2 n} \gamma_2^2 T   \nonumber \\ 
    & \quad +
    \frac{\al \eps K^2}{2} \sum_{t=0}^{T-1}  M_t   \nonumber \\ 
    & \overset{\text{from \eqref{eq:sum_M_t-3}}}{\leq}
    f(0) - f^*
    +
    \frac{\eps^2  K }{2n} \sigma^2 T 
    +
    \frac{\al^2 \eps^2 K }{2 n} \gamma_2^2 T   \nonumber \\ 
    & \quad +
    \frac{\al \eps K^2}{2}  
    \left\{ 
    \frac{2 \al^2 \eps^2 \gamma_2^2}{(1 - \beta^{2}_{\eps}) D_2} T
    +
    \frac{18 \al^2 \eps^2 \gamma_1^2}{(1 - \beta_{\eps})^2 D_2} T
    +
    \frac{4 \eps^2 \sigma^2}{ (1 - \beta^2_{\eps}) D_2} T 
    \right\}   \nonumber \\ 
    & \quad +
    \frac{9 \al^3 \eps^3 K^2}{(1 - \beta_{\eps})^2 D_2} \sum_{t=0}^{T-1} \mbE \norm{\gr f \left( \frac{X_{s} \bone_n}{n} \right) }^2. \label{eq:sum_recursive}
\end{align}
We ca rearrange the terms in \eqref{eq:sum_recursive} and rewrite it as
\begin{align}
    & \quad \frac{\al \eps - \al^2 \eps^2 K }{2} \sum_{t=0}^{T-1} \mbE \norm{\frac{ \parf(X_{t}) \bone_n}{n} }^2 
    +
    \al \eps\left( \frac{1}{2} - \frac{9 \al^2 \eps^2 K^2}{(1 - \beta_{\eps})^2 D_2} \right) \sum_{t=0}^{T-1} \mbE \norm{ \gr f \left( \frac{X_{t} \bone_n}{n} \right)}^2   \nonumber \\ 
    & \leq
    f(0) - f^*
    +
    \frac{\eps^2  K }{2n} \sigma^2 T 
    +
    \frac{\al^2 \eps^2 K }{2 n} \gamma_2^2 T   \nonumber \\ 
    & \quad +
    \frac{\al \eps K^2}{2}  
    \left\{ 
    \frac{2 \al^2 \eps^2 \gamma_2^2}{(1 - \beta^{2}_{\eps}) D_2} T
    +
    \frac{18 \al^2 \eps^2 \gamma_1^2}{(1 - \beta_{\eps})^2 D_2} T
    +
    \frac{4 \eps^2 \sigma^2}{ (1 - \beta^2_{\eps}) D_2} T 
    \right\} \label{eq:sum_recursive-2}
\end{align}
Now, we define $D_1$ as follows 
\begin{align}
    D_1 \coloneqq \frac{1}{2} - \frac{9 \al^2 \eps^2 K^2}{(1 - \beta_{\eps})^2 D_2}, \nonumber 
\end{align}
and replace in \eqref{eq:sum_recursive-2} which yields
\begin{align}
    & \quad 
    \frac{1}{\al \eps T} \left\{ \frac{\al \eps - \al^2 \eps^2 K }{2} \sum_{t=0}^{T-1} \mbE \norm{\frac{ \parf(X_{t}) \bone_n}{n} }^2 
    +
    \al \eps D_1 \sum_{t=0}^{T-1} \mbE \norm{ \gr f \left( \frac{X_{t} \bone_n}{n} \right)}^2 \right\}   \nonumber \\ 
    & \leq
    \frac{1}{\al \eps T} (f(0) - f^*)
    +
    \frac{\eps}{\al} \frac{K \sigma^2}{2n} 
    +
    \al \eps \frac{ K \gamma_2^2}{2 n}    \nonumber \\ 
    & \quad +
    \frac{\al^2 \eps^2}{1 - \beta^{2}_{\eps}} \frac{K^2 \gamma_2^2}{D_2}
    +
    \frac{\al^2 \eps^2}{(1 - \beta_{\eps})^2} \frac{9 K^2 \gamma_1^2}{D_2}
    +
    \frac{\eps^2}{1 - \beta^2_{\eps}} \frac{2 K^2 \sigma^2}{D_2}. \label{eq:sum_recursive-3}
\end{align}
To balance the terms in RHS of \eqref{eq:sum_recursive-3}, we need to know how $\beta_{\eps}$ behaves with $\eps$. As we defined before, $W_{\eps} = (1 - \eps)I + \eps W$. Hence, $\lambda_i(W_{\eps}) = 1 - \eps + \eps \lambda_i(W)$. Therefore, for $\eps \leq \frac{1}{1 - \lambda_n(W)}$, we have
\begin{align}
    \beta_{\eps}
    & =
    \max \left\{|\lambda_2(W_{\eps})|,|\lambda_n(W_{\eps})| \right\}   \nonumber \\ 
    & =
    \max \left\{|1 - \eps + \eps \lambda_2(W)|,|1 - \eps + \eps \lambda_n(W)| \right\}   \nonumber \\ 
    & =
    \max \left\{1 - \eps + \eps \lambda_2(W) , 1 - \eps + \eps \lambda_n(W) \right\}   \nonumber \\ 
    & =
    1 - \eps \left( 1 - \lambda_2(W) \right). \nonumber 
\end{align}
Therefore,
\begin{align*}
    1 - \beta_{\eps}
    & = 
    \eps \left( 1 - \lambda_2(W) \right) 
    \geq \eps (1 - \beta) \\
    1 - \beta^2_{\eps}
    & =
    2 \eps \left( 1 - \lambda_2(W) \right) - \eps^2 \left( 1 - \lambda_2(W) \right)^2 
    \geq \eps (1 - \beta^2).
\end{align*}
Moreover, if $\al \eps \leq \frac{1}{K}$ we have from \eqref{eq:sum_recursive-3} that
\begin{align}
    \frac{D_1}{T} \sum_{t=0}^{T-1} \mbE \norm{ \gr f \left( \frac{X_{t} \bone_n}{n} \right)}^2 
    & \leq
    \frac{1}{\al \eps T} (f(0) - f^*)
    +
    \frac{\eps}{\al} \frac{K \sigma^2}{2n} 
    +
    \al \eps \frac{ K \gamma_2^2}{2 n}    \nonumber \\ 
    & \quad +
    \frac{\al^2 \eps}{1 - \beta^{2}} \frac{K^2 \gamma_2^2}{D_2}
    +
    \frac{\al^2}{(1 - \beta)^2} \frac{9 K^2 \gamma_1^2}{D_2}
    +
    \frac{\eps}{1 - \beta^2} \frac{2 K^2 \sigma^2}{D_2}. \label{eq:sum_recursive-4}
\end{align}
For $\al \leq \frac{1 - \beta}{6 K}$ we have
\begin{align}
    D_2 
    & =
    1 - \frac{18 \al^2 \eps^2 K^2}{(1 - \beta_{\eps})^2}   \nonumber \\ 
    & =
    1 - \frac{18 \al^2 \eps^2 K^2}{ \eps^2 (1 - \beta)^2}   \nonumber \\ 
    & =
    1 - \frac{18 \al^2 K^2}{(1 - \beta)^2}   \nonumber \\ 
    & \geq 
    \frac{1}{2},
\end{align}
and for $\al \leq \frac{1 - \beta}{6 \sqrt{2} K }$ we have
\begin{align*}
    D_1 
    & =
    \frac{1}{2} - \frac{9 \al^2 \eps^2 K^2}{(1 - \beta_{\eps})^2 D_2} \\
    & \geq
    \frac{1}{2} - \frac{18 \al^2 \eps^2 K^2}{ \eps^2 (1 - \beta)^2} \\
    & =
    \frac{1}{2} - \frac{18 \al^2 K^2}{(1 - \beta)^2} \\
    & \geq 
    \frac{1}{4}.
\end{align*}
Now, we pick the step-sizes as follows:
\begin{align} \label{eq:al-eps}
    \al & = \frac{1}{T^{1/6}}, \\
    \eps & = \frac{1}{T^{1/2}}.
\end{align}
It is clear that in order to satisfy the conditions mentioned before, that are $\eps \leq \frac{1}{1 - \lambda_n(W)}$, $\al \eps \leq \frac{1}{K}$ and $\al \leq \frac{1 - \beta}{6 \sqrt{2} K }$, it suffices to pick $T$ as large as the following:
\begin{equation} \label{eq:minT}
    T \geq T^{\mathsf{nc}}_{\mathsf{min}} \coloneqq \max \left\{ \left( 1 - \lambda_n(W) \right)^2,K^{3/2}, \left( \frac{6 \sqrt{2} K}{1 - \beta} \right)^6 \right\}.
\end{equation}
For such $T$ we have
\begin{align}
    \frac{1}{T} \sum_{t=0}^{T-1} \mbE \norm{ \gr f \left( \frac{X_{t} \bone_n}{n} \right)}^2 
    & \leq
    \frac{1}{T^{1/3}} 4 (f(0) - f^*)
    +
    \frac{1}{T^{1/3}} \frac{2 K \sigma^2}{n} 
    +
    \frac{1}{T^{2/3}} \frac{2 K \gamma_2^2}{ n}  \nonumber\\
    & \quad +
    \frac{1}{T^{5/6}} \frac{8 K^2 \gamma_2^2}{1 - \beta^{2}}
    +
    \frac{1}{T^{1/3}} \frac{72 K^2 \gamma_1^2}{(1 - \beta)^2}
    +
    \frac{1}{T^{1/2}} \frac{16 K^2 \sigma^2}{1 - \beta^2} \nonumber \\
    & =
    \frac{B_1}{T^{1/3}} + \frac{B_2}{T^{1/2}} + \frac{B_3}{T^{2/3}} + \frac{B_4}{T^{5/6}} \label{eq:convergence} \\
    & =
    \ccalO \left( \frac{ K^2 }{(1 - \beta)^2} \frac{\gamma^2}{m} +  K \frac{ \sigma^2}{n} \right) \frac{1}{T^{1/3}}  \nonumber \\
    & \quad +
    \ccalO \left( \frac{K^2}{1 - \beta^2} \sigma^2 \right) \frac{1}{T^{1/2}}  \nonumber \\
    & \quad +
    \ccalO \left( K \frac{\gamma^2}{n} \max{\left\{\frac{\mbE[1/V]}{T_d},  \frac{1}{m}\right\}} \right) \frac{1}{T^{2/3}}  \nonumber \\
    & \quad +
    \ccalO \left( \frac{K^2}{1 - \beta^{2}} \gamma^2 \max{\left\{\frac{\mbE[1/V]}{T_d},  \frac{1}{m}\right\}} \right) \frac{1}{T^{5/6}}, \nonumber 
\end{align}
where
\begin{align*}
    B_1 & \coloneqq 4(f(0) - f^*) + \frac{72 K^2 \gamma_1^2}{(1 - \beta)^2} +  \frac{2 K \sigma^2}{n} \\
    B_2 & \coloneqq  \frac{16 K^2 \sigma^2}{1 - \beta^2} \\
    B_3 & \coloneqq \frac{2 K \gamma_2^2}{ n}  \\
    B_4 & \coloneqq \frac{8 K^2 \gamma_2^2}{1 - \beta^{2}}.
\end{align*}
Now we bound the consensus error. From \eqref{eq:sum_M_t-3} we have
\begin{align*}
    \frac{1}{T} \sum_{t=0}^{T-1} \frac{1}{n} \sum_{i=1}^{n} \mbE \norm{ \frac{X_{t} \bone_n}{n} - \bx_{i,t} }^2
    & =
    \frac{1}{T} \sum_{t=0}^{T-1} M_t \\
    & \leq
    \frac{2 \al^2 \eps^2 \gamma_2^2}{(1 - \beta^{2}_{\eps}) D_2}
    +
    \frac{18 \al^2 \eps^2 \gamma_1^2}{(1 - \beta_{\eps})^2 D_2} 
    +
    \frac{4 \eps^2 \sigma^2}{ (1 - \beta^2_{\eps}) D_2} \\
    & \quad +
    \frac{18 \al^2 \eps^2}{(1 - \beta_{\eps})^2 D_2} \frac{1}{T} \sum_{t=0}^{T-1} \mbE \norm{\gr f \left( \frac{X_{s} \bone_n}{n} \right) }^2\\
    & \leq
    \al^2 \eps \frac{2 \gamma_2^2}{ (1 - \beta^2) D_2} 
    +
    \al^2 \frac{18 \gamma_1^2}{(1 - \beta)^2 D_2} \\
    & \quad + 
    \eps \frac{4 \sigma^2}{ (1 - \beta^2) D_2} \\
    & \quad +
    \al^2 \frac{18}{(1 - \beta)^2 D_2} \frac{1}{T} \sum_{t=0}^{T-1} \mbE \norm{\gr f \left( \frac{X_{t} \bone_n}{n} \right) }^2
\end{align*}
For the same step-sizes $\al$ and $\eps$ defined in (\ref{eq:al-eps}) and large enough $T$ as in (\ref{eq:minT}), we can use the convergence result in (\ref{eq:convergence}) which yields
\begin{align*}
    \frac{1}{T} \sum_{t=0}^{T-1} \frac{1}{n} \sum_{i=1}^{n} \mbE \norm{ \frac{X_{t} \bone_n}{n} - \bx_{i,t} }^2
    & \leq
    \frac{1}{T^{5/6}} \frac{4 \gamma_2^2}{1 - \beta^2} 
    +
    \frac{1}{T^{1/3}} \frac{36 \gamma_1^2}{(1 - \beta)^2} 
    +
    \frac{1}{T^{1/2}} \frac{8 \sigma^2}{1 - \beta^2} \\
    & \quad +
    \frac{1}{T^{1/3}} \frac{36}{(1 - \beta)^2}  \left( \frac{B_1}{T^{1/3}}
    +
    \frac{B_2}{T^{1/2}} + \frac{B_3}{T^{2/3}} + \frac{B_4}{T^{5/6}} \right) \\
    & =
    \frac{C_1}{T^{1/3}} + \frac{C_2}{T^{1/2}} + \frac{C_3}{T^{2/3}} + \frac{C_4}{T^{5/6}} + \frac{C_5}{T} + \frac{C_6}{T^{7/6}}\\
    & =
    \ccalO \left( \frac{ \gamma^2}{m(1 - \beta)^2} \right) \frac{1}{T^{1/3}} \\
    & \quad +
    \ccalO \left( \frac{\sigma^2}{1 - \beta^2} \right) \frac{1}{T^{1/2}} \\
    & \quad +
    \ccalO \left( \frac{ K^2 }{(1 - \beta)^4} \frac{\gamma^2}{m} +  \frac{K}{(1 - \beta)^2} \frac{ \sigma^2}{n} \right) \frac{1}{T^{2/3}} \\
    & \quad +
    \ccalO \left( \frac{\gamma^2}{1 - \beta^{2}}  \max{\left\{\frac{\mbE[1/V]}{T_d},  \frac{1}{m}\right\}} + \frac{ K^2 \sigma^2}{(1 - \beta)^4}\right) \frac{1}{T^{5/6}} \\
    & \quad +
    \ccalO \left( \frac{K}{(1 - \beta)^2} \frac{\gamma^2}{n} \max{\left\{\frac{\mbE[1/V]}{T_d},  \frac{1}{m}\right\}} \right) \frac{1}{T} \\
    & \quad +
    \ccalO \left( \frac{ K^2 }{(1 - \beta)^4} \gamma^2  \max{\left\{\frac{\mbE[1/V]}{T_d},  \frac{1}{m}\right\}} \right) \frac{1}{T^{7/6}},
\end{align*}
where
\begin{align*}
    C_1 & \coloneqq \frac{36 \gamma_1^2}{(1 - \beta)^2} \\
    C_2 & \coloneqq \frac{8 \sigma^2}{1 - \beta^2}  \\
    C_3 & \coloneqq \frac{36}{(1 - \beta)^2} B_1 \\
    C_4 & \coloneqq \frac{4 \gamma_2^2}{1 - \beta^2}  + \frac{36}{(1 - \beta)^2} B_2 \\
    C_5 & \coloneqq \frac{36 }{(1 - \beta)^2} B_3 \\
    C_6 & \coloneqq \frac{36 }{(1 - \beta)^2} B_4.
\end{align*}


\end{document}